\documentclass{article}

\PassOptionsToPackage{numbers,compress}{natbib}
\usepackage[preprint]{neurips_2026}

\makeatletter
\renewcommand{\@notice}{%
  \enlargethispage{2\baselineskip}%
  \@float{noticebox}[b]%
    \centering\normalsize
    DISTRIBUTION STATEMENT A: Approved for public release, distribution is unlimited.%
  \end@float%
}
\makeatother

\usepackage[utf8]{inputenc}
\usepackage[T1]{fontenc}
\usepackage{hyperref}
\usepackage{url}
\usepackage{booktabs}
\usepackage{amsfonts}
\usepackage{nicefrac}
\usepackage{microtype}
\usepackage{xcolor}
\usepackage{amsmath,amssymb,amsthm}
\usepackage{bm}
\usepackage{graphicx}
\usepackage{algorithm}
\usepackage{algpseudocode}
\usepackage{multirow}
\usepackage{enumerate}
\usepackage{subfig}
\usepackage{overpic}

\newtheorem{corollary}{Corollary}
\newtheorem{proposition}{Proposition}
\newtheorem{remark}{Remark}

\newcommand{\x}{\bm{x}}
\newcommand{\xk}{\bm{x}_k}
\newcommand{\uk}{\bm{u}_k}
\newcommand{\tauk}{\tau_k}
\newcommand{\taumin}{\tau_{\min}}
\newcommand{\taumax}{\tau_{\max}}
\newcommand{\V}{V}
\newcommand{\Kgain}{K}
\newcommand{\Acl}{A_{\mathrm{cl}}}
\newcommand{\Vscale}{V_{\mathrm{scale}}}
\newcommand{\MSI}{\mathrm{MSI}}
\newcommand{\RTA}{\mathrm{RTA}}
\DeclareMathOperator{\tr}{tr}
\DeclareMathOperator{\diag}{diag}
\newcommand{\R}{\mathbb{R}}

\title{Learning When to Act: Communication-Efficient\\
Reinforcement Learning via Run-Time Assurance}

\author{%
  Adam Haroon$^{1,3}$\thanks{Corresponding author: \texttt{aharoon@iastate.edu}.
  } \quad
  Erick J.\ Rodr\'iguez-Seda$^{2}$ \quad
  Cody Fleming$^{1}$ \quad
  Tristan Schuler$^{3}$\\[4pt]
  $^{1}$Department of Mechanical Engineering, Iowa State University, Ames, IA, USA\\
  $^{2}$Department of Weapons, Robotics, and Control Engineering,\\
         United States Naval Academy, Annapolis, MD, USA\\
  $^{3}$Navy Center for Applied Research in Artificial Intelligence (NCARAI),\\
         U.S.\ Naval Research Laboratory, Washington, D.C., USA
}

\begin{document}
\maketitle

\begin{abstract}
Safe reinforcement learning (RL) typically asks \emph{what} an agent should do.
We ask \emph{when} it needs to act, and show that a single policy can
jointly learn control inputs and communication-efficient timing decisions
under a pointwise Lyapunov safety shield.
We scope the framework to stabilization around a known equilibrium, where
CARE-based LQR backups, Lyapunov certificates, and classical Lyapunov-STC
are well defined, enabling a clean comparison against the analytical
baseline.
A run-time assurance (RTA) layer overrides the policy pointwise via a
one-step-ahead Lyapunov prediction and a precomputed LQR backup, providing
a strictly stronger guarantee than constrained MDP methods that enforce
safety only in expectation.
On an inverted pendulum, cart--pole, and planar quadrotor, the learned
policy achieves $1.91\times$, $1.45\times$, and $3.51\times$ higher mean
inter-sample interval (MSI) than a classical Lyapunov-triggered baseline;
a fixed LQR controller at the same average rate is unstable on all three
plants, showing that adaptive timing, not a lower average rate, is what
makes sparsity safe.
A CARE-derived Lyapunov reward transfers across environments without
redesign, with a single weight $w_c$ controlling the
stability--communication tradeoff; ablations confirm the RTA shield is
essential, with its removal reducing MSI by $1.27$--$1.84\times$ and
degrading state norms.
A preference-conditioned extension recovers the full tradeoff frontier
from a single model at $\tfrac{2}{11}$ of training compute, and SAC
experiments confirm the results are algorithm-agnostic across discrete
and continuous domains.
A 12-state 3D quadrotor case study extends the framework to
higher-dimensional systems where classical STC design is analytically
intractable: a SAC agent reaches MSI $= 0.302$\,s ($94\%$ of $\taumax$)
at $0\%$ RTA, while classical Lyapunov-STC remains pinned at $\taumin$
and a fixed-rate LQR controller at the same average interval crashes
within two control updates.
Robustness to $\pm30\%$ plant-mass variation and additive disturbances
confirms graceful degradation, with the RTA absorbing what the learned
policy cannot.
\end{abstract}

\section{Introduction}
\label{sec:intro}

Safe reinforcement learning (RL) has made substantial progress on
\emph{what} an agent should do, but has largely ignored the question of
\emph{when} it needs to act. Fixed-rate control executes at every
timestep regardless of whether the state has changed meaningfully; in
safety-critical cyber-physical systems, this is expensive, consuming
sensing, computation, and bandwidth on every update. Lifting this
assumption without sacrificing safety is the central problem we address.

The control theory literature studies this as self-triggered control (STC)
and event-triggered control
(ETC)~\cite{heemels2012introduction,mazo2011decentralized,anta2010sample,gommans2014self,wang2010event}:
classical analytical methods determine the next update time from a
Lyapunov or model-based triggering rule, providing strong stability
guarantees but becoming difficult to apply to nonlinear, underactuated,
or high-dimensional dynamics where the linearized one-step prediction
leaves significant communication savings on the table.
Learning-based approaches relax this conservatism by approaching the
admissibility boundary
empirically~\cite{baumann2018deep,wan2023model,funk2021learning,wang2021deep,aggarwal2025interq,sedghi2022machine,treven2024sense},
but existing formulations either omit formal safety guarantees on the
triggering decision~\cite{baumann2018deep,wan2023model}, rely on
model-based learning of the dynamics~\cite{funk2021learning}, or
address an interaction-cost continuous-time setting without a hard
timing-safety mechanism~\cite{treven2024sense}.
Our framing differs on multiple axes: we operate in discrete-time STC
where $\tauk$ is selected at each decision instant, with both
discrete-action (DQN) and continuous-action (SAC) agents; we are
model-free with a precomputed Linear-Quadratic Regulator (LQR) backup
controller; and we provide
\emph{pointwise} Lyapunov-decreasing safety via a hard RTA override, a
formal guarantee that expectation-level constraint methods such as
Lagrangian-relaxation~\cite{altman2021constrained,achiam2017constrained}
cannot provide by construction (Section~\ref{sec:lagrangian}).

\begin{figure}[t]
  \centering
  \begin{overpic}[width=0.59\linewidth,grid=false]{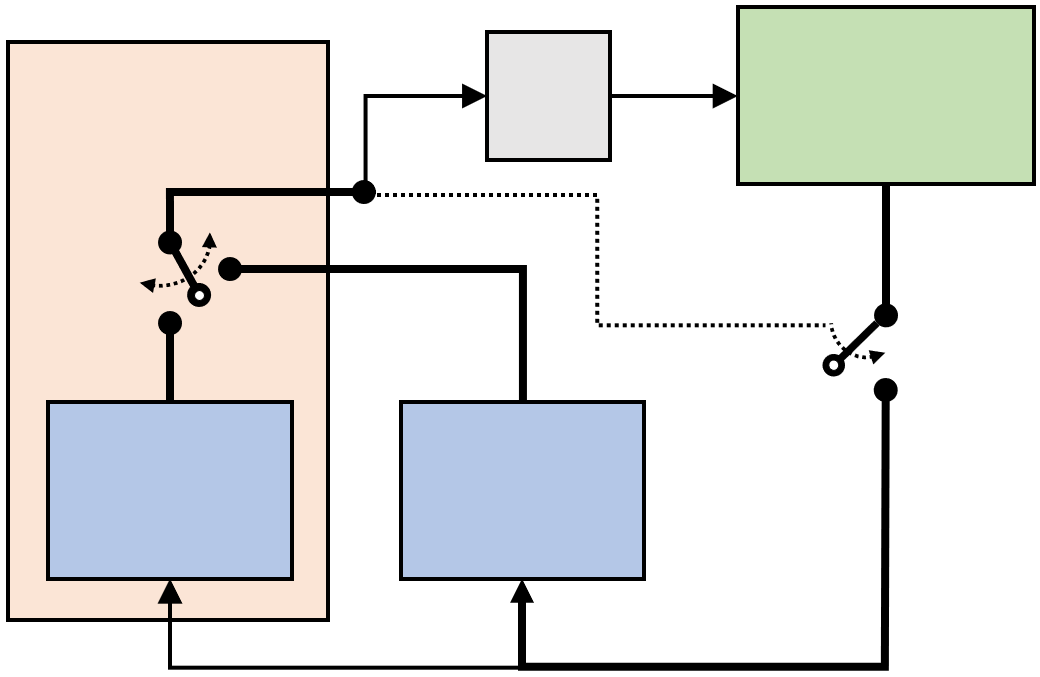}
    \put(16,20){\makebox(0,0){Backup}}
    \put(16,15){\makebox(0,0){Controller}}
    \put(51,17){\makebox(0,0){RL Agent}}
    \put(85,56){\makebox(0,0){Environment}}
    \put(53,56){\makebox(0,0){ZOH}}
    \put(16,58){\makebox(0,0){RTA}}
    \put(86,40){\footnotesize$\bm{r}(t),\bm{o}(t)$}
    \put(61,57.5){\footnotesize$\bm{u}(t)$}
    \put(72,2.5){\footnotesize$r_k,\bm{o}_k$}
    \put(36,2.0){\footnotesize$\xk$}
    \put(36,36){\footnotesize$\bm{u}_k,\tau_k$}
    \put(18,30){\footnotesize$\bm{u}_k,\tau_{\min}$}
    \put(37,57){\footnotesize$\bm{u}_k$}
    \put(68,35){\footnotesize$\tau_k$}
    \put(11,32){\rotatebox{90}{\tiny$|\hat{q}_{k+1}|>\theta_{\RTA}$}}
  \end{overpic}
  \caption{Proposed self-triggered control framework. The RL agent outputs
    a joint action $(\bm{u}_k,\tau_k)$. The RTA evaluates the
    one-step-ahead safety predicate $|\hat{q}_{k+1}| > \theta_{\RTA}$ on
    the linearized prediction of the next state and, if violated,
    substitutes the LQR backup at $\taumin$. The selected input is held
    by a zero-order hold (ZOH) until the next sampling instant
    $t_k+\tau_k$.}
  \label{fig:RL_STC_RTA_Scheme}
\end{figure}

We propose a unified RL framework (Fig.~\ref{fig:RL_STC_RTA_Scheme})
that jointly selects the control input and the next inter-sample
interval, supervised by an RTA layer acting as a safety
shield~\cite{hobbs2023runtime,lazarus2020runtime,seto1998simplex}. The
RTA uses a precomputed LQR backup and a one-step-ahead safety
prediction to override unsafe actions, with each backup intervention
provably unsaturated and
Lyapunov-decreasing~\cite{alshiekh2018safe,miller2024optimal,dunlap2023run}.
Episodes are time-bounded (not step-bounded), so a larger MSI directly
reduces the number of agent decisions per episode of fixed duration:
the operational quantity that communication efficiency targets. The
RTA differs fundamentally from constrained MDP
methods~\cite{altman2021constrained,achiam2017constrained,gu2024review,brunke2022safe,garcia2015comprehensive},
which enforce safety only in expectation; we demonstrate empirically that a
Lagrangian-DQN baseline using the identical constraint predicate
achieves $21$--$52\%$ lower MSI than RL-STC while accumulating up to
$35\%$ hard safety violations at the best checkpoint
(Section~\ref{sec:lagrangian}). The framework is scoped to
stabilization around a known equilibrium where CARE, Lyapunov analysis,
and classical STC are well defined, enabling clean comparison; we
evaluate on three lower-dimensional plants (inverted pendulum,
CartPole, planar quadrotor) plus a 12-state 3D quadrotor case study
(Section~\ref{sec:casestudy}) where analytical STC design is
intractable.

\paragraph{Main contributions.}
\textbf{(1)} A joint RL formulation that simultaneously learns control
inputs and inter-sample intervals under a hard pointwise safety
certificate (Proposition~\ref{prop:rta_backup}); unlike prior shielded
RL that enforces safety at fixed time steps, our shield acts on the
timing decision $\tauk$ itself. A single weight $w_c$ traces the full
stability/communication tradeoff frontier.
\textbf{(2)} A CARE-derived Lyapunov reward that transfers across SISO
and MIMO environments without redesign.
\textbf{(3)} Systematic empirical validation on three lower-dimensional
plants: $1.45$--$3.51\times$ MSI gains over Classical Lyapunov-STC,
ablations isolating the RTA's role, a Lagrangian-DQN comparison, a
preference-conditioned extension at $\tfrac{2}{11}$ compute, SAC
confirming algorithm-agnosticism, and robustness under $\pm30\%$ mass
variation and additive disturbances.
\textbf{(4)} A 12-state 3D quadrotor case study showing the framework
scales to higher-dimensional MIMO systems where analytical STC design
is intractable and where Proposition~\ref{prop:rta_backup}'s formal
certificate does not extend at the chosen $\taumin$, testing the
framework's empirical reach beyond where the certificate applies (MSI
$=0.302\,\mathrm{s}$, $94\%$ of $\taumax$, $0\%$ RTA; Classical STC
pinned at $\taumin$; B2 unstable within two control updates).

\section{Problem Formulation}
\label{sec:problem}

\subsection{Evaluation Scope}
\label{sec:eval_scope}

The framework targets stabilization around a known equilibrium. Three
structural conditions are required for the safety certificates developed
below: (i) a valid local linearization, (ii) a positive-definite quadratic
Lyapunov function, and (iii) accuracy of the one-step-ahead prediction
in~\eqref{eq:rta_pred}. Stabilization around a known equilibrium guarantees
all three: the linearization at the equilibrium is well posed, the CARE
solution yields $V(x) = x^\top P x$ that is positive-definite about that
point, and the prediction is accurate within its neighborhood.
Propositions~\ref{prop:rta_backup} and~\ref{prop:tau_star} therefore hold
under this scope.

\subsection{Self-Triggered Control}
\label{sec:stc}

Consider a continuous-time nonlinear plant
$\dot{\x}(t)=f(\x(t),\bm{u}(t))$,
where $\x\in\R^n$ is the state and $\bm{u}\in\R^m$ is a piecewise-constant
control input. In self-triggered control the controller executes at
sampling instants $\{t_0,t_1,\ldots\}$ determined online. At each instant
$t_k$ the controller observes $\xk\triangleq\x(t_k)$, selects $\uk$ held
constant over $[t_k,t_{k+1})$, and decides
\begin{equation}
  t_{k+1}=t_k+\tauk,\qquad
  \tauk\in\mathcal{T}=\{\taumin,2\taumin,\ldots,N\taumin\},
  \label{eq:trigger}
\end{equation}
where $N$ is the cardinality of $\mathcal{T}$; for consistency we
choose $N=8$ across all experiments. The mean sampling interval (MSI)
is the causal $n$-point moving average
$\MSI_k=[(n{-}1)\MSI_{k-1}+\tauk]/n$, initialized at
$\MSI_0=\taumin$; in what follows we take $n=5$. A larger MSI corresponds to sparser communication; the
objective is to maximize MSI subject to closed-loop stability.
Episodes are time-bounded by a fixed simulation horizon $T_{\max}$ rather
than a fixed step count, so a larger MSI directly reduces the number of
agent decisions per episode. This reflects what communication efficiency
means in practice: fewer control updates per unit of operating time, and
therefore lower demand on sensing, computation, and network bandwidth.

\subsection{Run-Time Assurance}
\label{sec:rta_formulation}

Run-time assurance augments the RL policy with a precomputed LQR backup
(Section~\ref{sec:lyapunov}). At each step the linearized one-step-ahead
prediction of the safety-critical scalar $q_k=\bm{c}^\top\xk$ is
\begin{equation}
  \hat{q}_{k+1}=q_k+\tauk\dot{q}_k+\tfrac{1}{2}\tauk^2\,
    \ddot{q}_{\mathrm{lin}}(\xk,\uk).
  \label{eq:rta_pred}
\end{equation}
If $|\hat{q}_{k+1}|>\theta_{\RTA}$ (or a position bound is exceeded), the
RL action is overridden:
\begin{equation}
  \uk\leftarrow\mathrm{clip}_{\mathrm{cw}}(-\Kgain\xk,-\bm{u}_{\max},+\bm{u}_{\max}),
  \quad\tauk\leftarrow\taumin.
  \label{eq:rta_override}
\end{equation}
The threshold $\theta_{\RTA}$ is set strictly below the LQR saturation
angle $\theta_{\mathrm{sat}}=u_{\max}/|K_\theta|$ so that the backup
retains full authority. The role of RL is not to guarantee safety, but to
maximize performance within the safe set defined by the RTA layer,
discovering non-conservative inter-sample intervals that would be difficult
to obtain analytically.

\begin{proposition}[ZOH Lyapunov Decrease for the Linearized Backup]
\label{prop:rta_backup}
Let
\begin{equation}
  M(\tau) \triangleq e^{A\tau}
   - \biggl(\!\int_0^\tau\!e^{As}\,ds\biggr) B\Kgain
  \label{eq:zoh_M}
\end{equation}
denote the ZOH-discretized closed-loop transition matrix that arises when
the backup command $\uk = -\Kgain\xk$ is held constant on
$[t_k,\,t_k+\tau)$, and suppose this command is component-wise
unsaturated at $\xk$ ($|[-\Kgain\xk]_i| < u_{\max,i}$ for all $i$).
Then $\x_{k+1} = M(\taumin)\,\xk$ on the linearized plant, and
$\V(\x_{k+1}) \leq \V(\xk)$ holds for all $\xk$ if and only if
\begin{equation*}
  M_{\mathrm{disc}} \triangleq P - M(\taumin)^\top P\,M(\taumin)
  \;\succeq\;0.
\end{equation*}
Numerical verification (Table~\ref{tab:theory}) confirms
$M_{\mathrm{disc}} \succ 0$ for the Pendulum, CartPole, and planar
Quadrotor environments. The Quadrotor3D plant fails this verification
at its chosen $\taumin$, retained deliberately for the case study
(Section~\ref{sec:casestudy}) which probes empirical behavior in the
regime where the certificate does not extend.
\end{proposition}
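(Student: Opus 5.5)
The plan has two parts: derive the closed-form update from the variation-of-constants formula on the linearized plant, then convert the Lyapunov inequality into a quadratic-form statement. Take the linearized dynamics about the equilibrium, $\dot{\x}(t)=A\x(t)+B\bm{u}(t)$. Under ZOH the input on $[t_k,t_k+\tau)$ is the constant $\uk$. The unsaturation hypothesis $|[-\Kgain\xk]_i|<u_{\max,i}$ makes the clip in \eqref{eq:rta_override} act as the identity, so $\uk=-\Kgain\xk$ exactly. Variation of constants gives
\begin{equation*}
\x(t_k+\tau)=e^{A\tau}\xk+\int_0^\tau e^{A(\tau-s)}B\uk\,ds .
\end{equation*}
Substituting $\sigma=\tau-s$ turns the integral into $\bigl(\int_0^\tau e^{A\sigma}d\sigma\bigr)B\uk$. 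Inserting $\uk=-\Kgain\xk$ yields $\x(t_k+\tau)=M(\tau)\xk$ with $M$ as in \eqref{eq:zoh_M}. Setting $\tau=\taumin$, which the RTA forces in \eqref{eq:rta_override}, gives $\x_{k+1}=M(\taumin)\xk$.

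For the equivalence, write $V(\x)=\x^\top P\x$ with $P$ the CARE solution of Section~\ref{sec:lyapunov}. Then
\begin{equation*}
V(\xk)-V(\x_{k+1})=\xk^\top\bigl(P-M(\taumin)^\top P M(\taumin)\bigr)\xk=\xk^\top M_{\mathrm{disc}}\xk .
\end{equation*}
\emph{Sufficiency:} if $M_{\mathrm{disc}}\succeq 0$, the right-hand side is nonnegative for every $\xk$, so $V(\x_{k+1})\le V(\xk)$.

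\emph{Necessity:} if the inequality holds for all $\xk$, the symmetric matrix $M_{\mathrm{disc}}$ has a nonnegative quadratic form everywhere, which is the definition of positive semidefiniteness. If it fails, an eigenvector with a negative eigenvalue gives an $\xk$ where $V$ strictly increases.

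The main obstacle is the quantifier ``for all $\xk$'' in the necessity direction, since the unsaturation hypothesis only holds on a bounded set. I would handle this by homogeneity. The unsaturated region $\{\x:|[-\Kgain\x]_i|<u_{\max,i}\ \forall i\}$ is a symmetric open neighborhood of the origin, and the quadratic form is invariant in sign under scaling $\x\mapsto\alpha\x$. So any violating direction can be scaled into that region. The equivalence is therefore unchanged whether ``for all $\xk$'' ranges over $\R^n$ or over the unsaturated set.

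The strict statement $M_{\mathrm{disc}}\succ0$ for the three plants is not proved analytically. It is the numerical eigenvalue check reported in Table~\ref{tab:theory}: compute $M(\taumin)$ via the block-matrix exponential $\exp\bigl(\begin{smallmatrix}A&B\\0&0\end{smallmatrix}\bigr)\taumin$ and confirm $\lambda_{\min}(M_{\mathrm{disc}})>0$. For Quadrotor3D the same check gives a negative eigenvalue at its chosen $\taumin$, which is why the certificate does not extend to that plant.
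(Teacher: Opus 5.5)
Your proposal is correct and follows the paper's proof: variation of constants on the linearized plant gives $\x_{k+1}=M(\taumin)\xk$, and the identity $\V(\xk)-\V(\x_{k+1})=\xk^\top M_{\mathrm{disc}}\xk$ yields the equivalence, with the strict inequality supplied by the numerical check in Table~\ref{tab:theory}. Your homogeneity argument for restricting ``for all $\xk$'' to the unsaturated set is a useful refinement that the paper leaves implicit; note that this set is an open symmetric neighborhood of the origin but is unbounded whenever $\ker\Kgain\neq\{0\}$, and the scaling argument does not need boundedness.
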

\emph{Proof and an extended scope discussion are in
Appendix~\ref{app:proofs}.}

\section{Method}
\label{sec:method}

\subsection{CARE-Based Lyapunov Function}
\label{sec:lyapunov}

For each environment we linearize the dynamics around the target equilibrium
to obtain $\dot{\x}=A\x+B\bm{u}$. The Lyapunov function
$\V(\x)=\x^\top P\x$ is obtained by solving the
Continuous-time Algebraic Riccati Equation (CARE)
\begin{equation}
  A^\top P+PA-PBR^{-1}B^\top P+Q=0,
  \label{eq:care}
\end{equation}
with $Q\succ0$, $R\succ0$. The optimal LQR gain is $\Kgain=R^{-1}B^\top P$
and the closed-loop matrix $\Acl=A-B\Kgain$ is Hurwitz. By the CARE
identity, $\dot{\V}(\x)\leq-\lambda\,\V(\x)$ along the linear closed-loop
trajectory, where
$\lambda=\lambda_{\min}(Q+\Kgain^\top R\Kgain,\,P)$
is the minimum generalized eigenvalue~\cite{boyd1994linear}, yielding a
tighter bound than the standard estimate. The Lyapunov value is normalized
by $\Vscale=\tr(P)/n$.

\begin{proposition}[Admissible Inter-Sample Interval]
\label{prop:tau_star}
For the linear closed-loop system with $P\succ0$ solving~\eqref{eq:care},
for any $\xk\neq\bm{0}$ there exists $\tau^*(\xk)>0$ such that
$\V(\x_{k+1})<\V(\xk)$ for all $\tauk\in(0,\tau^*(\xk))$, where
$\x_{k+1}=M(\tauk)\,\xk$ is the ZOH solution under $\uk=-\Kgain\xk$ with
$M(\cdot)$ from~\eqref{eq:zoh_M}.
On any compact annular region $c\leq\V(\xk)\leq c'$, a uniform constant
$\tau^+>0$ independent of $\xk$ exists with this property
(Corollary~\ref{cor:tau_plus}); $\taumin$ is verified numerically to
satisfy $\taumin\leq\tau^+$ via $M_{\mathrm{disc}}\succ0$
(Table~\ref{tab:theory}).
\end{proposition}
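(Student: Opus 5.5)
The plan is to study the scalar function $g(\tau)\triangleq \V(M(\tau)\xk)-\V(\xk)$ for fixed $\xk\neq\bm{0}$ and show it is strictly negative on a right-neighbourhood of $\tau=0$. We have $M(0)=I$, so $g(0)=0$. Since $M(\tau)$ is entire in $\tau$, with
\begin{equation*}
  M'(\tau)=Ae^{A\tau}-e^{A\tau}B\Kgain ,
\end{equation*}
we get $M'(0)=A-B\Kgain=\Acl$. It follows that
\begin{equation*}
  g'(0)=\xk^\top(\Acl^\top P+P\Acl)\xk .
\end{equation*}
By the CARE~\eqref{eq:care} with $\Kgain=R^{-1}B^\top P$, one has $\Acl^\top P+P\Acl=-(Q+\Kgain^\top R\Kgain)\prec 0$. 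Hence $g'(0)\leq-\lambda\V(\xk)<0$, using the generalized-eigenvalue bound of Section~\ref{sec:lyapunov} and $P\succ0$. Because $g$ is $C^1$ with $g(0)=0$ and $g'(0)<0$, there is $\tau^*(\xk)>0$ with $g(\tau)<0$ for all $\tau\in(0,\tau^*(\xk))$. Concretely, I would define $\tau^*(\xk)$ as the first positive zero of $g$, or $+\infty$ if there is none.

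For the uniform statement, I would exploit homogeneity. Since $g$ is quadratic in $\xk$, write $\xk=\sqrt{\V(\xk)}\,z$ with $z^\top Pz=1$. Then $g(\tau;\xk)=\V(\xk)\,h(\tau;z)$, where
\begin{equation*}
  h(\tau;z)=z^\top\bigl(M(\tau)^\top PM(\tau)-P\bigr)z .
\end{equation*}
Thus $\tau^*$ depends only on the direction $z$, which lies in the compact ellipsoid $\{z^\top Pz=1\}$. The annulus hypothesis is then only needed to exclude $\xk=\bm{0}$, and the same argument works globally on $\R^n\setminus\{0\}$ for the linear system.

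Next I would apply Taylor's theorem with a uniform remainder. Let $S(\tau)=M(\tau)^\top PM(\tau)-P$. Then $S(0)=0$ and $S'(0)=-(Q+\Kgain^\top R\Kgain)\preceq-\lambda P$. On $[0,1]$ we have $\|S''(\tau)\|\leq C$ for a constant $C$ depending only on $A,B,\Kgain,P$, since these are exponentials of fixed matrices. Hence
\begin{equation*}
  h(\tau;z)\leq-\lambda\tau+\tfrac{C}{2}\tau^2\,\|z\|^2\leq-\lambda\tau+\tfrac{C}{2\lambda_{\min}(P)}\tau^2 .
\end{equation*}
This is strictly negative for
\begin{equation*}
  0<\tau<\tau^+\triangleq\min\bigl\{1,\;2\lambda\lambda_{\min}(P)/C\bigr\},
\end{equation*}
which is independent of $\xk$. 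This gives the uniform constant of Corollary~\ref{cor:tau_plus}. Equivalently, $S(\tau)\prec0$ for all $\tau\in(0,\tau^+)$.

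The last claim, $\taumin\leq\tau^+$, is not proved analytically. Instead, I would observe that $M_{\mathrm{disc}}=-S(\taumin)\succ0$ is exactly the condition that $\V$ strictly decreases at $\tau=\taumin$ for every $\xk\neq\bm{0}$. By Proposition~\ref{prop:rta_backup}, this is what Table~\ref{tab:theory} certifies numerically. The statement should be read in this sense, since $\tau^+$ from the Taylor bound may be conservative.

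The main obstacle is precisely this gap. The explicit $C$ yields a valid but pessimistic $\tau^+$, and positive definiteness at the single point $\taumin$ does not by itself imply $S\prec0$ on all of $(0,\taumin]$. To close it, I would check $\lambda_{\max}(S(\tau))<0$ on a fine grid over $(0,\taumin]$, using the Lipschitz bound $\|S'\|\le C'$ to cover the gaps. Alternatively, I would simply define $\tau^+$ as the supremum of $\tau$ with $S\prec0$ on $(0,\tau)$, which is positive by the argument above.
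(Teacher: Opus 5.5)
\textbf{Verdict.} Your proposal is correct. Its first step is the paper's argument; for the uniform constant you take a different, more explicit route.

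\textbf{Pointwise part.} Here you match the paper: $\Delta\V(0)=0$, $\dot M(0)=\Acl$, and the CARE identity gives the strictly negative right-derivative $-\xk^\top(Q+\Kgain^\top R\Kgain)\xk$ at $\tau=0$. Your wording is slightly more accurate than the paper's ``by continuity''. It is the negative derivative, not continuity, that forces the sign.

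\textbf{Uniform constant.} The paper restricts to the annulus $c\le\V(\xk)\le c'$. It lower-bounds the initial slope by $\lambda_{\min}(M_Q)c/\lambda_{\max}(P)$ and then invokes joint continuity of $\Delta\V(\tau,\xk)$. That step is non-constructive. Moreover, joint continuity plus a uniformly negative slope at $\tau=0$ does not in general give a uniform interval. What makes it work is uniform control of $\partial_\tau\Delta\V$, or of a Taylor remainder, on a compact set, which is exactly what your bound $\sup_{[0,1]}\|S''\|\le C$ supplies. You also use that $\Delta\V$ is a quadratic form in $\xk$, normalize to the unit $P$-ellipsoid, and use $S'(0)\preceq-\lambda P$. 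This gives the explicit, scale-free constant $\tau^+=\min\{1,2\lambda\lambda_{\min}(P)/C\}$, valid on all of $\R^n\setminus\{\bm{0}\}$. So the annulus hypothesis is superfluous for the linear closed loop. The paper's annulus-style formulation is the template you would need once homogeneity is lost (e.g.\ on the nonlinear plant). For the statement as posed, however, your argument is both stronger and tighter.

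\textbf{Final clause.} Your reading is right, and the paper does not address this point. $M_{\mathrm{disc}}\succ0$ certifies decrease only at the single interval $\taumin$, not $S(\tau)\prec0$ on all of $(0,\taumin)$. So it does not by itself give $\taumin\le\tau^+$. Two corrections to your proposed fixes:
\begin{enumerate}
\item Defining $\tau^+$ as the supremum of admissible intervals does not remove the need to check the whole interval $(0,\taumin)$. It only names the quantity to be checked.
\item A grid-plus-Lipschitz check on $\lambda_{\max}(S(\tau))$ cannot be carried down to $\tau=0$, because $S(\tau)\to0$ and the negative margin vanishes there. You must cover $(0,\tau^+_{\mathrm{Taylor}})$ with your Taylor bound and run the grid only on $[\tau^+_{\mathrm{Taylor}},\taumin]$.
\end{enumerate}
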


\emph{Proofs of Proposition~\ref{prop:tau_star} and
Corollary~\ref{cor:tau_plus} are in Appendix~\ref{app:proofs}.
Corollary~\ref{cor:nonlinear} (Appendix~\ref{app:proofs}) extends these
results to the nonlinear plant with an explicit stability radius $r^*$.}

\subsection{Action Space, Observations, and Reward}
\label{sec:reward_sec}

The agent selects a discrete action from a set of $(\tau,\bm{u})$ tuples.
For SISO environments $\mathcal{A}=\mathcal{T}\times\mathcal{U}$; for the
MIMO quadrotor
$\mathcal{A}=\mathcal{T}\times\mathcal{U}_{\delta F}\times\mathcal{U}_M$.
The agent observes $\bm{o}_k=[\xk^\top,\MSI_k,b_k]^\top$, where
$b_k\in\{0,1\}$ flags whether the RTA was active on the previous step;
including $\MSI_k$ enables credit assignment for the communication reward.

The per-step reward is
\begin{equation}
  r_k=\underbrace{r_{\mathrm{stab}}+1-\frac{\V(\x_{k+1})}{\Vscale}}_{%
      \text{stability}}
    +\underbrace{w_c\!\left(\frac{\MSI_k-\taumin}{\taumax-\taumin}
      \right)^{\!2}}_{\text{communication}}
    +\underbrace{100\,r_{\mathrm{safe}}}_{\text{safety}},
  \label{eq:reward}
\end{equation}
where $r_{\mathrm{stab}}=+1$ if
$\V(\x_{k+1})\leq\V(\xk)e^{-\lambda\tauk}$ (with a near-origin guard
$\V(\xk)<\frac{1}{4}\Vscale$ preventing penalization of residual
oscillations) else $-1$; the graded
$1-\V(\x_{k+1})/\Vscale$ term provides a signal proportional to the
absolute Lyapunov value; $r_{\mathrm{safe}}\in\{-1,0\}$ flags RTA
overrides; and a terminal penalty of $-1000$ applies on
constraint-violation termination. The weight $w_c$ controls the
stability/communication tradeoff; we sweep
$w_c\in\{0.25,0.5,1,2,4,6,8,10,12,14,16\}$.

\subsection{DQN Training and Preference-Conditioned Extension}
\label{sec:training}

A DQN ($256{\to}128{\to}128$ ReLU) is trained for $1\,\mathrm{M}$ steps
using Stable-Baselines3~\cite{SB3} with best-model checkpointing at the
per-step average-reward peak. Per-step rather than total-episode reward
is the natural metric since episodes are time-bounded: a longer MSI
yields fewer steps per fixed-duration episode. Per-step reward thus
decouples policy quality from $\tau_k$ choice.
To avoid 11 separate per-$w_c$ sweeps, a \emph{preference-conditioned
DQN}~\cite{abels2019dynamic,yang2019generalized} samples $w_c$ uniformly
at each episode reset from the same grid and appends a log-normalized
$\hat{w}_c\in[0,1]$ to the observation; at deployment $w_c$ is fixed
per-episode, so a single $2\,\mathrm{M}$-step model recovers the
per-$w_c$ frontier within $1$--$8\%$ at $\tfrac{2}{11}$ of total compute.

\section{Environments}
\label{sec:envs}

Four environments of increasing complexity span SISO and MIMO
underactuated dynamics: the inverted pendulum (2 states, 1 input;
Gymnasium Pendulum-v1), CartPole (4, 1; Gymnasium CartPole-v1
\cite{towers2024gymnasium}), the planar quadrotor (6, 2; coupled MIMO
hover with indirect horizontal control through tilt), and Quadrotor3D
(12, 4; 6-DOF rigid-body quadrotor) used as the higher-dimensional case
study. Each plant exhibits a distinct constraint geometry: the pendulum
has the narrowest safety margin ($\theta_{\mathrm{sat}}-\theta_{\RTA}=
1.9^\circ$); CartPole has a tight termination angle ($12^\circ$) and
large $\Vscale\approx56.6$ that weakens the graded stability term;
the planar quadrotor yields the largest MSI gain over Classical STC
($3.51\times$) despite a $9^\circ$ margin; Quadrotor3D's $5{,}000$-action
discrete space rules out DQN, motivating SAC on the continuous Box action
space, and is also deliberately chosen as a regime where
Proposition~\ref{prop:rta_backup}'s formal certificate does not extend
at the chosen $\taumin$, enabling us to probe the framework's empirical
reach beyond where the certificate applies
(Section~\ref{sec:casestudy}). All integrate at
$\Delta t = 0.001\,\mathrm{s}$. Full dynamics, parameters
(Table~\ref{tab:envparams}), and DQN/SAC hyperparameters are in
Appendices~\ref{app:envs} and \ref{app:hyperparams}; compute resources and
wall-clock times are in Appendix~\ref{app:compute}.

\section{Results}
\label{sec:results}

Each environment is trained for all 11 $w_c$ values; 100 deterministic
evaluation episodes are run per checkpoint. We report MSI (mean $\pm$ std),
RTA activation rate (RTA\%), and $L_2$ state norms
$P_i=\sqrt{\int_0^T s_i^2(t)\,dt}$. Full per-weight sweep tables for DQN,
SAC, and Preference-Conditioned DQN appear in
Appendix~\ref{app:full_tables}.

\subsection{DQN Results and Algorithm Comparison}
\label{sec:dqn_results}

Table~\ref{tab:headline} summarizes the best-model checkpoint MSI at the
best-checkpoint peak $w_c$ for each environment and algorithm. The DQN $w_c$ sweeps
and their corresponding ablation/Lagrangian comparisons are aggregated
across $3$ independent training seeds (Appendix~\ref{app:multi_seed});
all other rows use the standard single-seed protocol.
The pendulum saturates near $\taumax$ from $w_c=6$ onwards. CartPole
reaches $96\%$ of $\taumax$ at $w_c=16$ with $0.04\%$ RTA, despite a
non-monotonic final-model trend caused by the large $\Vscale$ saturating
the graded stability term. The quadrotor reaches $88\%$ of $\taumax$,
reflecting the harder coupled hover dynamics. SAC achieves $0\%$ RTA
across every weight and environment in both the final model and best
checkpoint, with no high-$w_c$ collapses, confirming that DQN collapses
are a property of the discrete-action optimizer rather than the
framework. The preference-conditioned DQN recovers the per-$w_c$
frontier within $1$--$8\%$ from a single $2\,\mathrm{M}$-step model,
confirming that the full tradeoff frontier can be recovered at a
fraction of total training compute.

\paragraph{Off-policy methods are required in practice.}
\label{sec:offpolicy_required}
PPO~\cite{schulman2017ppo} on the Pendulum either collapses to $\taumin$
or thrashes against the RTA shield at every $w_c$ (best MSI $\approx
0.09\,\mathrm{s}$ vs.\ DQN's $0.397\pm0.001\,\mathrm{s}$); the
rare-transition $-100$ RTA and $-1000$ terminal penalties are inherently
easier for replay-based off-policy methods to leverage. The framework
is algorithm-agnostic in principle, but practical convergence within
standard training budgets favors off-policy methods (full sweep:
Table~\ref{tab:ppo_pendulum}, Appendix~\ref{app:full_tables}).

\begin{table}[t]
\caption{Best-model checkpoint MSI (s) at the best-checkpoint peak $w_c$ per algorithm.
  DQN values are 3-seed mean $\pm$ std (Appendix~\ref{app:multi_seed});
  SAC and Pref-DQN values are single-seed (seed~0). All three algorithms
  substantially exceed the Classical STC baseline and a fixed LQR at the
  same average rate is unstable on all three plants (B2,
  Table~\ref{tab:baselines_all}). Full per-weight sweep tables are in
  Appendix~\ref{app:full_tables}.}
\label{tab:headline}
\centering
\small
\begin{tabular}{lcccccc}
\toprule
 & \multicolumn{2}{c}{Pendulum} & \multicolumn{2}{c}{CartPole}
 & \multicolumn{2}{c}{Quadrotor} \\
\cmidrule(lr){2-3}\cmidrule(lr){4-5}\cmidrule(lr){6-7}
Algorithm & MSI (s) & $w_c$ & MSI (s) & $w_c$ & MSI (s) & $w_c$ \\
\midrule
DQN                   & $0.397\pm0.001$ & 10 & $0.308\pm0.014$ & 16 & $0.281\pm0.018$ & 16 \\
SAC                   & 0.379 & 10 & 0.258 & 16 & 0.312 & 16 \\
Pref-DQN (1 model)    & 0.393 & 16 & 0.316 & 14 & 0.266 & 16 \\
\midrule
Classical STC (B3)    & \multicolumn{2}{c}{0.202} & \multicolumn{2}{c}{0.212} & \multicolumn{2}{c}{0.080} \\
\bottomrule
\end{tabular}
\end{table}

\subsection{Cross-Environment Analysis}
\label{sec:cross_env}

Figure~\ref{fig:msi_vs_wc} summarizes the best-model MSI vs.\ $w_c$;
training dynamics (MSI, RTA activation, and episode reward curves) are in
Appendix~\ref{app:training} (Fig.~\ref{fig:training_curves}).
Three patterns emerge across DQN, SAC, and Preference-Conditioned DQN.
\textbf{(1)} Higher $w_c$ accelerates MSI exploration: low-$w_c$
policies converge to $\MSI\approx\taumin$; high $w_c$ pushes $\tauk$
toward $\tau^*(\xk)$ (Proposition~\ref{prop:tau_star}), and all three
methods exceed Classical Lyapunov-STC at moderate-to-high $w_c$.
\textbf{(2)} At high $w_c$ the optimizer overshoots admissibility,
triggering sustained RTA intervention and reward degradation; the
best-model checkpoint captures the MSI peak before this onset, with
the canonical ablation $w_c$ rising with complexity: $w_c=8$ for
Pendulum (anchored within the saturation plateau, $\geq96\%$ of
$\taumax$) and $w_c=16$ for CartPole and Quadrotor (strict
best-checkpoint peak).
\textbf{(3)} Final-model degradation depends on plant geometry: Pendulum
and Quadrotor tolerate high $w_c$, while CartPole's large
$\Vscale\approx56.6$ and $12^\circ$ termination margin produce
non-monotonic final-model MSI; the best-model checkpoint is especially
critical there, yielding a clean tradeoff
($0.308\pm0.014\,\mathrm{s}$, $96\%$ of $\taumax$).
Detailed per-environment analyses are in Appendix~\ref{app:per_env}.

\begin{figure}[t]
  \centering
  \includegraphics[width=\textwidth]{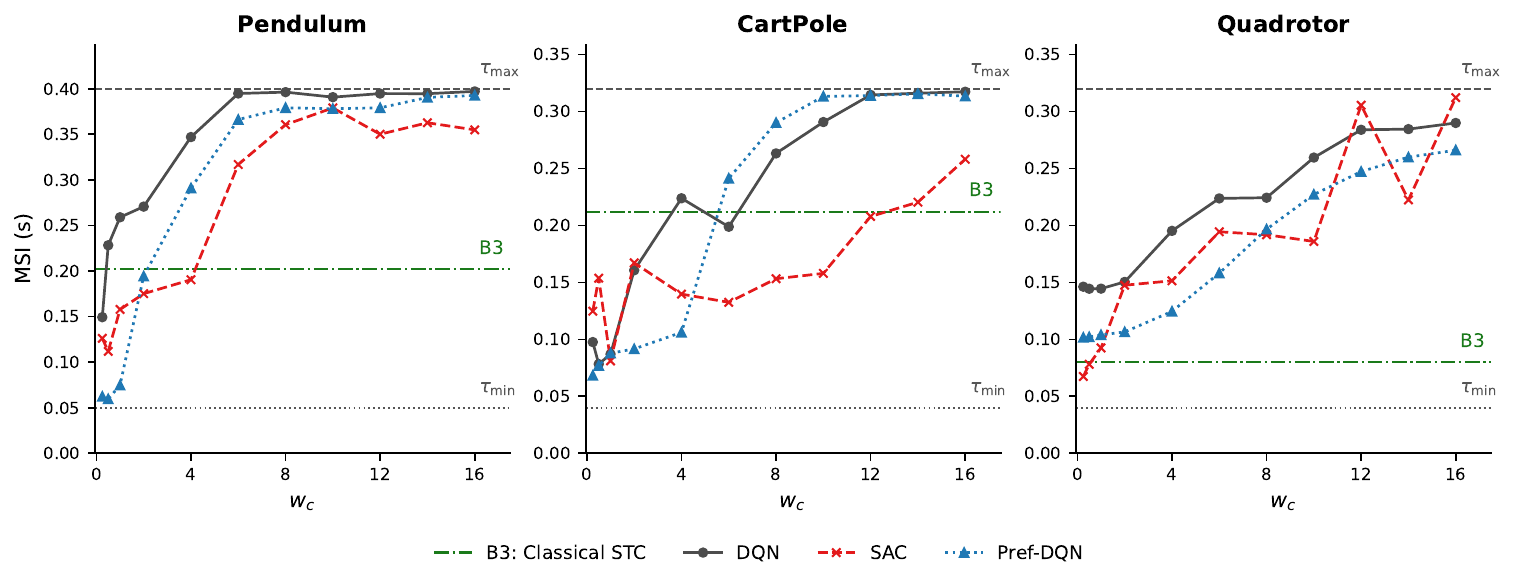}
  \caption{Best-model MSI vs.\ $w_c$ for DQN, SAC, and Preference-Conditioned
    DQN across all three environments (seed-0 curves; multi-seed DQN
    aggregates are in Tables~\ref{tab:pend}--\ref{tab:quad}). Reference
    lines mark $\taumin$ (dotted), $\taumax$ (dashed), and Classical
    Lyapunov-STC baseline B3 (dash-dot green). All algorithms exceed B3
    at moderate-to-high $w_c$; the gap above B3 represents
    communication efficiency that conservative analytical methods leave
    on the table. DQN saturates near $\taumax$ for Pendulum and
    CartPole; SAC converges more reliably but peaks slightly lower;
    Pref-DQN matches DQN within $1$--$8\%$ from a single model.}
  \label{fig:msi_vs_wc}
\end{figure}

\subsection{Baseline Comparisons}
\label{sec:baselines}

Three baselines isolate the sources of communication efficiency
($N_{\mathrm{eval}}=100$ each).

\textbf{B1 -- Fixed LQR at $\taumin$:} Always stable, never
communication-efficient. Establishes the hard safety floor.

\textbf{B2 -- Fixed LQR at $\tau_{\mathrm{match}}$:} The same LQR
controller at a fixed interval equal to the seed-0 best-checkpoint MSI:
$0.397\,\mathrm{s}$ (Pendulum), $0.317\,\mathrm{s}$ (CartPole),
$0.290\,\mathrm{s}$ (Quadrotor). \emph{This baseline is unstable on all
three plants} (mean episode lengths $1.1$--$2.4\,\mathrm{s}$), proving
that adaptive timing, not merely a reduced average rate, is what makes
sparsity safe: each plant's unstable mode has a time constant shorter
than $\tau_{\mathrm{match}}$ (e.g., $\sqrt{2l/(3g)}\approx0.258\,\mathrm{s}$
for the Pendulum's rod dynamics), so the fixed-rate ZOH closed loop diverges. The RL
policy avoids this by sampling fast near the unstable manifold and
stretching $\tauk$ near equilibrium (Proposition~\ref{prop:tau_star}).

\textbf{B3 -- Classical Lyapunov-STC:} At each step the LQR control is
applied and $\tauk$ is chosen greedily as the largest value in
$\mathcal{T}$ satisfying $\V(\tilde{\x}(\tauk))\leq\V(\xk)e^{-\lambda\tauk}$.
This baseline maintains stability but achieves $1.91\times$, $1.45\times$,
and $3.51\times$ lower MSI than RL-STC on Pendulum, CartPole, and
Quadrotor respectively (Table~\ref{tab:baselines_all}); the gap is largest for
the Quadrotor as the tight $\dot{\theta}$ coupling pins the trigger
near $\taumin$. The RL policy discovers that longer intervals
are safe on average even where the instantaneous linearized Lyapunov
bound is violated in the prediction.

\begin{table}[t]
\caption{Baseline comparison ($N_{\mathrm{eval}}=100$). RL-STC values
  are seed~0 to align with the $\tau_{\mathrm{match}}$ used for B2
  (the seed-0 best-checkpoint MSI; multi-seed canonical bests differ
  by less than the $\tau$-grid spacing per
  Appendix~\ref{app:ablation_tau}).
  \textit{Italics} = system failure (ep.\ length ${<}3\,\mathrm{s}$);
  ``--'' = norms not meaningful for failed episodes.}
\label{tab:baselines_all}
\centering
\small
\resizebox{\textwidth}{!}{%
\begin{tabular}{lccccccccccccc}
\toprule
& \multicolumn{3}{c}{Pendulum}
& \multicolumn{5}{c}{CartPole}
& \multicolumn{5}{c}{Quadrotor} \\
\cmidrule(lr){2-4}\cmidrule(lr){5-9}\cmidrule(lr){10-14}
Method
  & MSI & $P_3(\theta)$ & $P_4(\dot\theta)$
  & MSI & $P_1(x)$ & $P_2(\dot{x})$ & $P_3(\theta)$ & $P_4(\dot\theta)$
  & MSI & $P_1(x)$ & $P_2(\dot{x})$ & $P_3(\theta)$ & $P_4(\dot\theta)$ \\
\midrule
B1: LQR @ $\taumin$
  & 0.050 & 0.028 & 0.092
  & 0.040 & 0.069 & 0.078 & 0.014 & 0.048
  & 0.040 & 0.166 & 0.166 & 0.033 & 0.120 \\
B2: LQR @ $\tau_{\mathrm{match}}$
  & \textit{0.397} & \textit{--} & \textit{--}
  & \textit{0.317} & \textit{--} & \textit{--} & \textit{--} & \textit{--}
  & \textit{0.290} & \textit{--} & \textit{--} & \textit{--} & \textit{--} \\
B3: Classical STC
  & 0.202 & 0.026 & 0.167
  & 0.212 & 0.065 & 0.094 & 0.013 & 0.097
  & 0.080 & 0.160 & 0.164 & 0.034 & 0.349 \\
\textbf{RL-STC}
  & \textbf{0.397} & 0.605 & 0.916
  & \textbf{0.317} & 1.637 & 2.361 & 0.325 & 2.105
  & \textbf{0.290} & 2.640 & 1.916 & 0.463 & 5.664 \\
\bottomrule
\end{tabular}}
\end{table}

The RL-STC policy simultaneously achieves the high inter-sample sparsity of
B2 \emph{and} the full-episode stability of B1, demonstrating that the
joint learning of control input and timing is necessary to unlock the
communication efficiency that neither fixed-rate nor greedy trigger-based
controllers can deliver. All reported $P_i$ values correspond to episodes
that completed the full $50\,\mathrm{s}$ horizon, confirming that larger
excursions reflect higher-MSI trajectories rather than unstable operation.

\subsection{Ablation Study}
\label{sec:ablation}

\paragraph{Ablation A -- Removing the RTA Shield.}
With the RTA override and penalty disabled ($w_c=8/16/16$,
canonical per Section~\ref{sec:cross_env}), best-checkpoint MSI drops
by $1.27$--$1.84\times$ (Pendulum: $0.385\to0.266\,\mathrm{s}$;
CartPole: $0.308\to0.167\,\mathrm{s}$;
Quadrotor: $0.281\to0.222\,\mathrm{s}$, Table~\ref{tab:ablation_rta}),
and state norms degrade substantially on CartPole and Quadrotor
($P_1(x)$: $2.95\to5.81$ on CartPole, $2.33\to6.99$ on Quadrotor;
$P_3(\theta)$ on Quadrotor degrades $2.30\times$). The Lyapunov reward
alone cannot replicate the RTA's role: the shield shapes the feasible
policy space during training and provides a hard safety floor at
deployment that the learned policy can leverage.

\begin{table}[t]
\caption{Ablation A -- No RTA vs.\ full method
  ($N_{\mathrm{eval}}=100$, best-model checkpoint, 3 seeds per row;
  MSI as mean $\pm$ std across the 3 per-seed mean values with
  $\mathrm{ddof}=1$, $P_i$ norms as mean only;
  $w_c=8/16/16$ for Pendulum/CartPole/Quadrotor).}
\label{tab:ablation_rta}
\centering
\small
\renewcommand{\arraystretch}{1.05}
\resizebox{\textwidth}{!}{%
\begin{tabular}{l | ccc | cccc | cccc}
\toprule
& \multicolumn{3}{c|}{Pendulum}
& \multicolumn{4}{c|}{CartPole}
& \multicolumn{4}{c}{Quadrotor} \\
\cmidrule(lr){2-4}\cmidrule(lr){5-8}\cmidrule(lr){9-12}
Method
  & MSI (s) & $P_3$ & $P_4$
  & MSI (s) & $P_1$ & $P_3$ & $P_4$
  & MSI (s) & $P_1$ & $P_3$ & $P_4$ \\
\midrule
No RTA
  & $0.266\pm0.041$ & $0.687$ & $2.432$
  & $0.167\pm0.042$ & $5.813$ & $0.428$ & $5.036$
  & $0.222\pm0.023$ & $6.993$ & $1.201$ & $7.862$ \\
\textbf{RL-STC}
  & $\bm{0.385\pm0.014}$ & $\bm{0.636}$ & $\bm{1.049}$
  & $\bm{0.308\pm0.014}$ & $\bm{2.955}$ & $\bm{0.401}$ & $\bm{2.796}$
  & $\bm{0.281\pm0.018}$ & $\bm{2.334}$ & $\bm{0.523}$ & $\bm{5.171}$ \\
\bottomrule
\end{tabular}}
\end{table}

Figure~\ref{fig:tsne_comparison} provides geometric evidence: the
RTA-enabled policy (top) occupies a compact high-reward region with
$1.27$--$1.84\times$ fewer control steps per episode (matching the
inverse MSI ratios), while the no-RTA policy (bottom) visits a much
wider region at substantially lower per-step rewards, isolating the
RTA as a geometric constraint that enables long-interval strategies
the Lyapunov reward alone cannot enforce.

\begin{figure}[t]
  \centering
  \includegraphics[width=\textwidth]{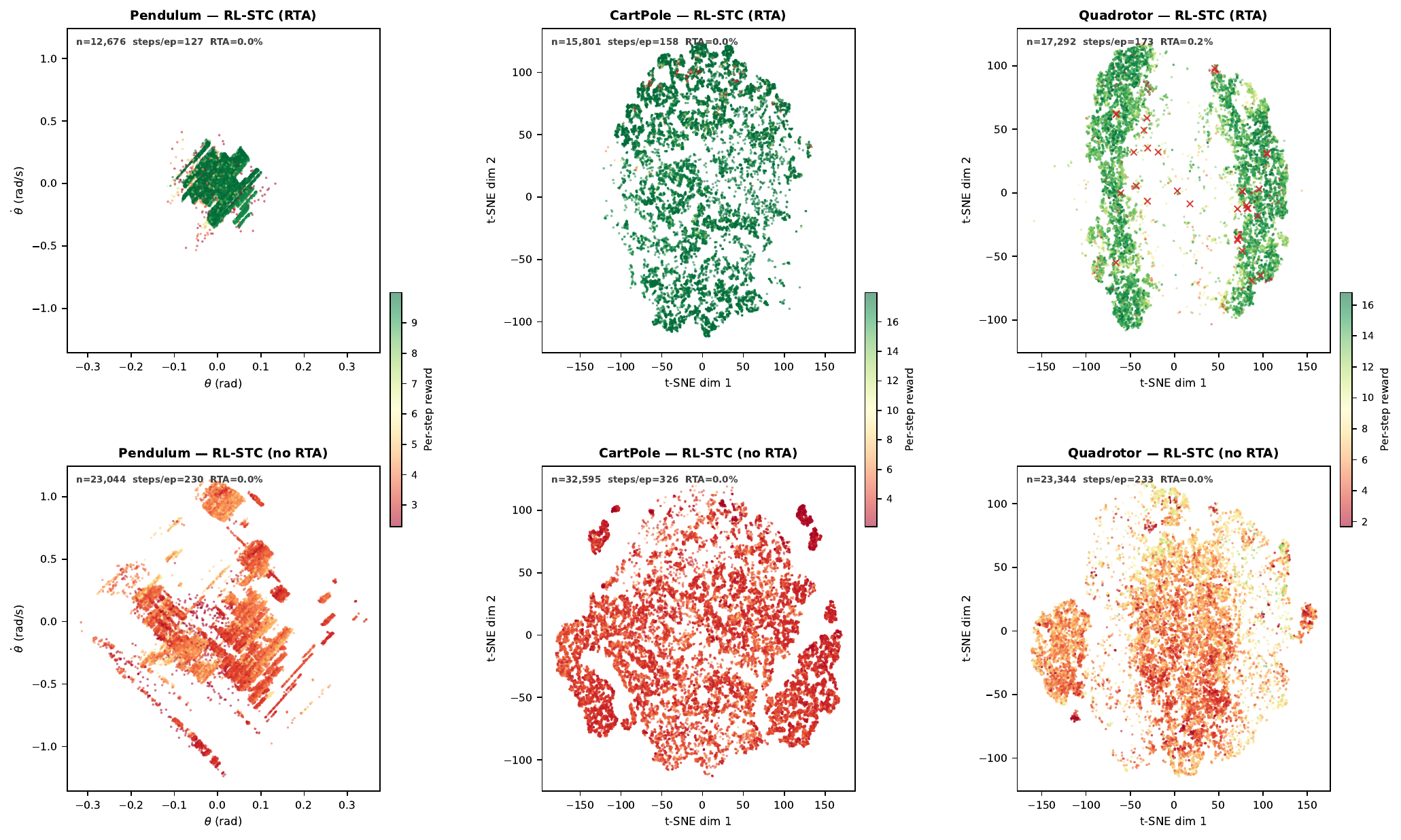}
  \caption{Per-step reward distribution (100 episodes; seed-0
    trajectories per Appendix~\ref{app:multi_seed}): RL-STC (top) vs.\
    no-RTA (bottom). Pendulum axes are $(\theta,\dot\theta)$;
    CartPole and Quadrotor use a shared t-SNE embedding fitted jointly
    on both policies' trajectories. Color encodes per-step reward
    (green high, red low). The RTA-enabled policy occupies a compact
    high-reward region with $1.27$--$1.84\times$ fewer steps per
    episode; the no-RTA policy visits a wider region at lower rewards.}
  \label{fig:tsne_comparison}
\end{figure}

\paragraph{Ablation B -- Fixed $\tau$.}
Fixing $\tau$ at the nearest grid value to $\tau_{\mathrm{match}}$ (with
RTA active) and learning only the control input degenerates to LQR at
$\taumin$ on Pendulum (RTA fires on $80.6\%$ of steps), reaches a
near-tie on CartPole only because $\tau_{\mathrm{match}}=\taumax$ there,
and gives up a $4.3\%$ MSI advantage on Quadrotor. Joint optimization of
$(\uk,\tauk)$ is not separable; full results in
Appendix~\ref{app:ablation_tau}.

\subsection{Comparison with Lagrangian Safe RL}
\label{sec:lagrangian}

A Lagrangian-DQN baseline replaces the hard RTA override with a soft
penalty $r_k^{\mathrm{Lag}}=r_k-\lambda_t c_k$ on the same one-step-ahead
predicate ($\lambda_t$ updated via dual gradient ascent). Across the
three lower-dimensional environments, Lagrangian-DQN achieves
$21$--$52\%$ lower best-checkpoint MSI than RL-STC and accumulates up
to $35.40\%$ hard violations on Quadrotor, while RL-STC has zero hard
violations by construction; final-model Lagrangian deteriorates further
(e.g., $43.49\pm48.55\%$ hard violations on Pendulum, $59.35\%$ on
Quadrotor) as $\lambda$ fails to hold the constraint across seeds. The
hard override decouples safety from inter-sample interval selection
during training, enabling long-interval commitment without the hedging
that a soft penalty forces. Full per-environment results are in
Appendix~\ref{app:lagrangian_full}, Table~\ref{tab:lagrangian}.

\subsection{Robustness}
\label{sec:robustness}

Best-model policies are evaluated under $\pm30\%$ mass mismatch and under
additive constant, periodic, and impulse disturbances injected directly
into the plant ($K$, $P$, and RTA thresholds fixed at nominal).
\emph{Classical Lyapunov-STC fails on Pendulum and CartPole at
$0.7\times$ mass}, the faster actual dynamics exceeding the nominal
prediction's stability limit; RL-STC remains stable across all scales
by trading MSI for safety. At $0.7\times$ CartPole mass MSI drops $37\%$
to $0.201\,\mathrm{s}$ with $13.91\%$ RTA activation and recovers fully
at $1.3\times$; domain randomization over $\pm40\%$ mass eliminates the
sensitivity at negligible cost to nominal performance.
Under disturbances, RTA activation grows monotonically with amplitude
while every episode completes safely (the filter absorbs what the policy
cannot), and RL-STC maintains higher MSI than Classical STC under all
conditions; the Quadrotor is structurally decoupled from thrust-channel
disturbances, with MSI varying by at most $0.002\,\mathrm{s}$ across all
conditions. Full mass-mismatch and disturbance tables
(Appendix~\ref{app:robustness}, Tables~\ref{tab:mismatch}
and~\ref{tab:disturbance}) and per-channel analysis
(Appendix~\ref{app:robustness_detail}) are in the supplementary
material.

\section{Case Study: Scaling to Higher-Dimensional Systems}
\label{sec:casestudy}

The three lower-dimensional environments share discrete action spaces
of $168$--$792$ actions that DQN can adequately cover. Quadrotor3D is
qualitatively different: its $5{,}000$-action discrete space leaves DQN
with under $0.5$ samples per action, rendering discrete learning
intractable. We adopt SAC~\cite{haarnoja2018sac} on the equivalent
continuous Box action space, keeping the reward, RTA, and Lyapunov
construction identical. Training extends to $2\,\mathrm{M}$ steps with
the policy widened to $256{\to}128{\to}128$, and the sweep extends to
$w_c\in\{0.25,1,4,8,16,24,32,40,48,56,64\}$.

Pointwise safety in our framework decomposes into a
\emph{deployment-time} override and a \emph{training-time} shaping
signal (the RTA penalty cascading to the terminal penalty when the
held backup propagates state out of bounds). The case study
deliberately probes the regime where the deployment certificate does
not extend: at $\taumin=0.04\,\mathrm{s}$ the ZOH-discretized closed
loop has spectral radius $1.19$ ($\tau_c\approx0.037\,\mathrm{s}$),
mirroring the conditions in which analytical STC design becomes
intractable on higher-dimensional systems. The certificate could be
restored locally by a smaller $\taumin$ giving $\tau_c>\taumin$, a
discrete-LQR redesign matched to $\taumin$, or a non-quadratic
Lyapunov certificate; we instead test whether the training-time
shaping alone produces a policy that respects the safety predicate.
The trained policy maintains $0\%$ RTA across the nominal sweep,
$\pm30\%$ mass perturbation, and the disturbance suite reported below,
a property of the trained policy on the tested distribution rather
than a formal guarantee for states or conditions outside it.

\paragraph{SAC sweep and Pref-SAC.}
The sweep reveals a two-phase dynamic: $w_c\le16$ keeps both final and
best near $\taumin$; $w_c\ge40$ saturates near $\taumax=0.32\,\mathrm{s}$
at $0\%$ RTA. We select $w_c=48$, where the best-model checkpoint
achieves MSI $=0.302\,\mathrm{s}$ ($94.3\%$ of $\taumax$, $0\%$ RTA)
with low attitude norms ($P_3(\varphi)\approx0.04$--$0.05$). A single
preference-conditioned SAC model ($4\,\mathrm{M}$ steps, $w_c$ sampled
per episode) reaches peak MSI $0.239\,\mathrm{s}$ ($79\%$ of the
per-$w_c$ SAC result) at $\le0.01\%$ RTA across the full grid,
recovering the frontier from one model.

\paragraph{Baselines, ablations, and Lagrangian-SAC.}
Classical Lyapunov-STC remains pinned at $\taumin$ throughout: the
one-step-ahead trigger is maximally conservative on the 12-state system,
firing at every step. Baseline 2 (LQR at $\tau_{\mathrm{match}}=
0.302\,\mathrm{s}$) crashes in $0.39\pm0.14\,\mathrm{s}$, fewer than two
steps, the most dramatic confirmation that adaptive timing is a
prerequisite. RL-STC achieves $7.5\times$ longer inter-sample intervals
than B1/B3 with attitude-rate norm $P_4(p)=0.190$ vs.\ $7.23$. Removing
the RTA (Ablation A) maintains MSI $\approx 0.310\,\mathrm{s}$ but
roughly doubles attitude norms ($P_3(\varphi)$: $0.044\to0.085$).
Lagrangian-SAC converges to MSI $\approx0.311$--$0.312\,\mathrm{s}$ with
zero hard violations, matching RL-STC empirically; SAC's entropy
regularization keeps the policy away from the constraint. Neither method
holds a pointwise certificate on Q3D, but RL-STC retains a constructive
path to Proposition~\ref{prop:rta_backup} via discrete-LQR redesign or a
smaller $\taumin$ (Section~\ref{sec:limitations}), while Lagrangian-SAC
admits no analytical extension.

\paragraph{Robustness.}
The nominal RL-STC policy shows \emph{zero} degradation under $\pm30\%$
mass perturbation (MSI and RTA unchanged at $0.302\,\mathrm{s}$ and
$0\%$), while Classical STC remains pinned at $\taumin$ throughout. It
also shows complete insensitivity to additive thrust disturbances
(constant up to $1.0\,\mathrm{N}$, periodic up to $1.5\,\mathrm{N}$ at
$2\,\mathrm{Hz}$, impulses up to $2.0\,\mathrm{N}$): MSI remains
$0.302\,\mathrm{s}$ with $0\%$ RTA across all seven conditions,
mirroring the structural decoupling in the 2D Quadrotor (the angular
RTA trigger monitors tilt while thrust disturbances perturb the
altitude channel). Full Q3D sweep, ablation, Lagrangian, mismatch, and
disturbance tables are in Appendix~\ref{app:q3d_detail}.

\section{Limitations and Future Work}
\label{sec:limitations}

The RTA's linearized one-step-ahead prediction grows conservative far
from the operating point, the direct mechanism behind the empirical MSI
ceiling and high-$w_c$ DQN collapses; a learned residual dynamics term
could raise this ceiling without sacrificing the hard safety floor. The
framework is scoped to stabilization around a known equilibrium so the
CARE solution, quadratic $V$, and linearized one-step prediction remain
well posed; non-equilibrium tasks (tracking, navigation, manipulation)
and purely black-box dynamics lie outside this scope. Control barrier
functions~\cite{gu2024review} offer a natural extension but require a
different backup construction. Scaling beyond the 12-state Quadrotor3D
to higher-dimensional systems, contact-rich dynamics, and real-world
platforms remains the primary direction for future work.

\section{Conclusion}
\label{sec:conclusion}

We have presented a unified RL framework for communication-efficient
self-triggered control with run-time assurance, evaluated across four
environments from a 2-state pendulum to a 12-state 3D quadrotor case
study. The CARE-derived Lyapunov reward transfers without redesign, and
Proposition~\ref{prop:tau_star} provides a formal basis for the observed
MSI ceiling. Three findings establish the framework's value:
\textbf{(i)} classical baselines fail (B2 unstable on all three
lower-dimensional plants and crashes within two control updates on Quadrotor3D;
B3 leaves a $1.45\text{--}3.51\times$ MSI gap and remains pinned at
$\taumin$ on Quadrotor3D);
\textbf{(ii)} on plants where $M_{\mathrm{disc}}\succeq0$, the RTA hard
override delivers Lyapunov-decrease safety by construction regardless
of algorithm, in contrast to Lagrangian-DQN which produces
$21\text{--}52\%$ lower MSI with up to $35\%$ hard violations;
\textbf{(iii)} ablations confirm RTA removal degrades MSI by
$1.27\text{--}1.84\times$ and that joint optimization of $(\uk,\tauk)$
is not separable. SAC and a preference-conditioned extension confirm
algorithm-agnostic gains and frontier recovery at $\tfrac{2}{11}$ of
total compute. The Quadrotor3D case study reaches MSI $=
0.302\,\mathrm{s}$ ($94\%$ of $\taumax$) with zero RTA activation,
extending the framework to systems where analytical STC design is
intractable and where the formal certificate does not extend at the
chosen $\taumin$, and robustness experiments across all four plants
confirm graceful degradation while every episode completes safely. The question
of \emph{when} to act is as consequential as \emph{what} to do, and
answering it safely and efficiently is tractable via the joint
optimization this framework provides.

\begin{ack}
A.\ Haroon conducted part of this work as an NREIP intern at the U.S.\
Naval Research Laboratory. The views expressed are those of the authors and
do not reflect the official policy or position of the U.S.\ Naval Academy,
Department of the Navy, Department of War, or U.S.\ Government.
\end{ack}

\bibliographystyle{abbrvnat}
\bibliography{references}

\newpage
\appendix

\section{Proofs}
\label{app:proofs}

\subsection{Proof of Proposition~\ref{prop:rta_backup}}

Under ZOH with $\uk = -\Kgain\xk$ held constant on
$[t_k,\,t_k+\taumin)$, the linear plant $\dot{\x} = A\x + B\bm{u}$
integrates to
\[
  \x(t_k+\taumin) = e^{A\taumin}\xk
                  + \biggl(\!\int_0^{\taumin}\!e^{As}\,ds\biggr) B\,\uk
                  = M(\taumin)\,\xk,
\]
so $\x_{k+1} = M(\taumin)\xk$. Substituting into $\V(\x) = \x^\top P\x$
gives
$\V(\x_{k+1}) = \xk^\top M(\taumin)^\top P\,M(\taumin)\,\xk$, so
$\V(\x_{k+1}) \leq \V(\xk)$ for all $\xk$ is equivalent to
$M_{\mathrm{disc}} \succeq 0$. Table~\ref{tab:theory} reports
$\lambda_{\min}(M_{\mathrm{disc}}) > 0$ for the three lower-dimensional
environments, directly verifying this condition.\hfill$\square$

\begin{remark}[Scope of Proposition~\ref{prop:rta_backup}]
Proposition~\ref{prop:rta_backup} certifies Lyapunov decrease in the
$P$-weighted norm under the \emph{linearized} ZOH closed loop with the
held LQR backup, on the lower-dimensional environments where
$M_{\mathrm{disc}} \succ 0$ holds. It is the discrete-time analog of
$\Acl$ being Hurwitz: each backup engagement weakly reduces the
CARE-based Lyapunov certificate, driving the linearized closed loop
toward the equilibrium. The proposition does not directly imply
component-wise satisfaction of $|q_{k+1}| < \theta_{\RTA}$, since decrease
in the quadratic form $V$ permits transient growth of individual state
components. It also does not, on its own, certify Lyapunov decrease on
the nonlinear plant under ZOH; Corollary~\ref{cor:nonlinear} provides a
complementary nonlinear bound under continuous feedback within the
conservative ball $\mathcal{B}(r^*)$. The choice
$\theta_{\RTA} < \theta_{\mathrm{sat}} = u_{\max}/|K_\theta|$ ensures
that, at engagement, the angle contribution to the backup satisfies
$|K_\theta q_k| < u_{\max}$; together with the unsaturation hypothesis it
provides the linear-ZOH grounding for the empirical pointwise safety
observed across all evaluation conditions in Section~\ref{sec:results}.
For Quadrotor3D, $M_{\mathrm{disc}} \succeq 0$ does not hold at the
chosen $\taumin$, so this deployment-time certificate does not extend;
the override remains active but uncertified, and the case study
(Section~\ref{sec:casestudy}) probes whether the training-time penalty
cascade alone produces a policy that respects the safety predicate.
\end{remark}

\subsection{Proof of Proposition~\ref{prop:tau_star}}

Under ZOH with $\uk=-\Kgain\xk$ held constant on $[t_k,\,t_k+\tau)$,
the linear closed-loop solution is $\x(\tau)=M(\tau)\,\xk$ with
$M(\cdot)$ from~\eqref{eq:zoh_M}. Define the Lyapunov difference
$\Delta\V(\tau)\triangleq\V(M(\tau)\xk)-\V(\xk)$; $M(0)=I$ gives
$\Delta\V(0)=0$. Differentiating,
\[
  \frac{d}{d\tau}\Delta\V(\tau)
  = \xk^\top\!\bigl(\dot{M}(\tau)^\top P\,M(\tau) + M(\tau)^\top P\,\dot{M}(\tau)\bigr)\,\xk.
\]
Differentiating~\eqref{eq:zoh_M} gives
$\dot{M}(\tau)=Ae^{A\tau}-e^{A\tau}B\Kgain=e^{A\tau}(A-B\Kgain)
=e^{A\tau}\Acl$, so $\dot{M}(0)=\Acl$. Substituting at
$\tau=0$ and applying the CARE identity
$\Acl^\top P+P\Acl=-(Q+\Kgain^\top R\Kgain)\prec0$,
\[
  \left.\frac{d}{d\tau}\Delta\V(\tau)\right|_{\tau=0}
  = \xk^\top(\Acl^\top P + P\Acl)\,\xk
  = -\xk^\top(Q+\Kgain^\top R\Kgain)\,\xk < 0
\]
for all $\xk\neq\bm{0}$. By continuity of $\Delta\V(\tau)$ in $\tau$,
there exists $\tau^*(\xk)>0$ such that $\Delta\V(\tau)<0$ for all
$\tau\in(0,\tau^*(\xk))$.\hfill$\square$

\subsection{Corollary~\ref{cor:tau_plus}: State-Independent Admissible Interval}
\label{cor:tau_plus}

\begin{corollary}
For any $0<c\leq c'$, let
$\mathcal{A}_{c,c'}=\{\xk:c\leq\V(\xk)\leq c'\}$. There exists a
constant $\tau^+>0$, independent of $\xk$, such that $\Delta\V(\tau)<0$
for all $\tau\in(0,\tau^+)$ and all $\xk\in\mathcal{A}_{c,c'}$. In
particular, $\taumin$ serves as such a $\tau^+$, verified numerically via
$M_{\mathrm{disc}}\succ0$ (Table~\ref{tab:theory}).
\end{corollary}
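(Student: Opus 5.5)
The plan is to exploit the fact that $\Delta\V(\tau)$ is a quadratic form in $\xk$, so uniformity over the annulus reduces to a matrix-valued statement with no dependence on $\xk$ at all. Define
\[
  N(\tau)\triangleq M(\tau)^\top P\,M(\tau)-P ,
  \qquad \Delta\V(\tau)=\xk^\top N(\tau)\,\xk ,
\]
with $M(\cdot)$ from~\eqref{eq:zoh_M}. The goal is then to show $N(\tau)\prec 0$ for all $\tau\in(0,\tau^+)$ for some $\tau^+>0$. This gives $\Delta\V(\tau)<0$ for every $\xk\neq\bm{0}$, and in particular on $\mathcal{A}_{c,c'}$, which excludes the origin because $c>0$ and $P\succ0$. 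The annulus hypothesis is therefore only needed to keep $\xk$ away from $\bm{0}$. The homogeneity of $\Delta\V$ does the rest.

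\textbf{Step 1 (expansion at $\tau=0$).} I will reuse the computation in the proof of Proposition~\ref{prop:tau_star}. Since $M(0)=I$ and $\dot M(\tau)=e^{A\tau}\Acl$, we get $N(0)=0$. Applying the CARE identity gives
\[
  \dot N(0)=\Acl^\top P+P\Acl=-(Q+\Kgain^\top R\Kgain)\triangleq -S ,
  \qquad S\succ0 .
\]
Because $M$ is entire in $\tau$, $N$ is $C^\infty$. Taylor's theorem with integral remainder on $[0,1]$ then gives $N(\tau)=-\tau S+E(\tau)$ with
\[
  \|E(\tau)\|_2\le \tfrac12 C\tau^2 ,
  \qquad C\triangleq\max_{s\in[0,1]}\|\ddot N(s)\|_2 .
\]
The constant $C$ is finite by continuity on a compact interval.

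\textbf{Step 2 (uniform negativity).} For $0<\tau<\tau^+\triangleq\min\{1,\,2\lambda_{\min}(S)/C\}$, every unit vector $v$ satisfies
\[
  v^\top N(\tau)v\le -\tau\lambda_{\min}(S)+\tfrac12 C\tau^2<0 ,
\]
so $N(\tau)\prec0$. Evaluating on $\xk\in\mathcal{A}_{c,c'}$ gives the claim, with $\tau^+$ depending only on $(A,B,P,K,Q,R)$.

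The only delicate point is obtaining a remainder bound that is uniform in direction. A purely pointwise argument via $\tau^*(\xk)$ from Proposition~\ref{prop:tau_star} would need a compactness step, since $\tau^*(\cdot)$ need not be continuous. Bounding $\ddot N$ in operator norm sidesteps this. As a fallback, one could instead apply the compactness argument to the continuous function $(\tau,v)\mapsto \tau^{-1}v^\top N(\tau)v$, extended by $-v^\top Sv$ at $\tau=0$, on $[0,\bar\tau]\times\{v:\V(v)=1\}$.

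\textbf{Step 3 (the ``in particular'' clause).} This clause is a numerical statement and needs care in what it certifies. Table~\ref{tab:theory} gives $\lambda_{\min}(M_{\mathrm{disc}})>0$, i.e.\ $N(\taumin)\prec0$. By homogeneity this yields $\Delta\V(\taumin)<0$ on all of $\mathcal{A}_{c,c'}$, which is exactly the interval used by the RTA backup. Extending this to every $\tau\in(0,\taumin)$ does not follow from a single evaluation. I would state it as certified either by checking $\lambda_{\max}(N(\tau))<0$ on a fine grid of $(0,\taumin]$, using the Lipschitz bound on $N$ from Step 1 to cover the gaps, or by verifying $\taumin\le 2\lambda_{\min}(S)/C$ directly.
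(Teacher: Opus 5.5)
Your proof is correct, and it takes a genuinely different route from the paper.

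\textbf{The paper's route.} It stays on the annulus. It uses $\|\xk\|^2\ge c/\lambda_{\max}(P)$ to get the uniform slope bound $d\Delta\V/d\tau|_{\tau=0}\le-\lambda_{\min}(Q+\Kgain^\top R\Kgain)\,c/\lambda_{\max}(P)$ on $\mathcal{A}_{c,c'}$. It then invokes joint continuity of $\Delta\V(\tau,\xk)$ on the compact annulus.

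\textbf{Your route.} You work with the matrix $N(\tau)=M(\tau)^\top PM(\tau)-P$ and bound the Taylor remainder in operator norm. By degree-2 homogeneity this yields an explicit $\tau^+=\min\{1,2\lambda_{\min}(S)/C\}$, with $S=Q+\Kgain^\top R\Kgain$. This $\tau^+$ is independent of $c,c'$ and valid on all of $\R^n\setminus\{\bm{0}\}$. So the annulus is superfluous in the linear setting, and you get a checkable sufficient condition.

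\textbf{What each buys.} Your route supplies what the paper's last step leaves implicit. Since $\Delta\V(0,\cdot)\equiv0$, joint continuity of $\Delta\V$ plus a uniform slope at $\tau=0$ does not by itself give a uniform $\tau^+$. One needs uniform control of the remainder. That can come from continuity of $\partial_\tau\Delta\V$, or of $\tau^{-1}\Delta\V$ on a compact set (your fallback), or from your bound on $\ddot N$. The paper's annulus-plus-compactness framing buys portability. It does not rely on $\Delta\V$ being a quadratic form in $\xk$. So it would adapt to a nonlinear ZOH flow, where homogeneity fails and restricting to a compact set away from the origin is genuinely needed.

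\textbf{The ``in particular'' clause.} Your Step~3 is more careful than the paper, which simply asserts this clause. $M_{\mathrm{disc}}\succ0$ certifies $N(\taumin)\prec0$ only at the single endpoint $\tau=\taumin$, not on the interval $(0,\taumin)$. One refinement: a grid-plus-Lipschitz check cannot reach down to $\tau=0$. The margin $-\lambda_{\max}(N(\tau))$ vanishes linearly as $\tau\to0$, so the gaps between grid points cannot be covered there. Use your Step~2 bound (or the rescaled $\tau^{-1}N(\tau)$, which tends to $-S$) on a neighborhood of $0$, and the grid only on the rest of $(0,\taumin]$.
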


\begin{proof}
Let $M_Q\triangleq Q+\Kgain^\top R\Kgain\succ 0$ (distinct from
$M(\tau)$ in Eq.~\eqref{eq:zoh_M}); from
Proposition~\ref{prop:tau_star}'s derivative computation,
$d\Delta\V/d\tau|_{\tau=0}=-\xk^\top M_Q\xk$.
On $c\leq\V(\xk)\leq c'$, $\|\xk\|^2\geq c/\lambda_{\max}(P)$, so
$\xk^\top M_Q\xk\geq\lambda_{\min}(M_Q)c/\lambda_{\max}(P)>0$ uniformly.
Since $\Delta\V(0)=0$ and
$d\Delta\V/d\tau|_{\tau=0}\leq-\lambda_{\min}(M_Q)c/\lambda_{\max}(P)<0$
uniformly, joint continuity of $\Delta\V(\tau,\xk)$ implies the existence
of a uniform $\tau^+>0$.
\end{proof}

\begin{remark}
Under \emph{continuous} state feedback, $\Acl$ Hurwitz implies
$\V(e^{\Acl\tau}\xk)\leq\V(\xk)e^{-\lambda\tau}$ for all $\tau\geq0$~\cite{khalil2002nonlinear},
so $\tau^*(\xk)=\infty$. Under ZOH on the \emph{nonlinear} plant, the
true trajectory deviates from the linearized prediction $M(\tau)\xk$ as
$\tauk$ grows, eventually violating the Lyapunov decrease guarantee.
B2 confirms this: the fixed-rate ZOH controller at
$\tau_{\mathrm{match}}$ is unstable across all four plants, whereas the
RL policy is stable because it adapts $\tauk$ to remain within the
local admissible bound.
\end{remark}

\subsection{Corollary~\ref{cor:nonlinear}: Nonlinear Admissible Region}
\label{cor:nonlinear}

\begin{corollary}
Let $\delta(\x)=f(\x)-(A\x+B\uk)$ denote the nonlinear residual. Since
$\delta(\bm{0})=0$ and $\nabla\delta(\bm{0})=0$, there exists $L_\delta>0$
such that $\|\delta(\x)\|\leq L_\delta\|\x\|^2$ near the origin. With
$M_Q\triangleq Q+\Kgain^\top R\Kgain\succ0$ as in the proof of
Corollary~\ref{cor:tau_plus}, the true Lyapunov derivative satisfies
\[
  \dot{\V}(\x)\leq-\lambda_{\min}(M_Q)\|\x\|^2
    +2\lambda_{\max}(P)L_\delta\|\x\|^3.
\]
Hence $\dot{\V}(\x)<0$ for all
$\x\neq\bm{0}$ with
$\|\x\|<r^*\triangleq\lambda_{\min}(M_Q)/[2\lambda_{\max}(P)L_\delta]$.
Within $\mathcal{B}(r^*)$, LaSalle's invariance principle implies
asymptotic stability of the origin.
\end{corollary}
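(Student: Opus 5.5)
The plan is a direct Lyapunov-derivative computation along the nonlinear closed loop under continuous feedback $\bm{u}=-\Kgain\x$, split into a linear part controlled by the CARE and a residual part bounded using the quadratic smallness of $\delta$.

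\textbf{Step 1: the quadratic residual bound.} Write the closed-loop vector field as $f(\x)=\Acl\x+\delta(\x)$, where $\Acl=A-B\Kgain$. The residual $\delta$ is $C^2$ near the origin, with $\delta(\bm{0})=0$ and $\nabla\delta(\bm{0})=0$. By Taylor's theorem with integral remainder on a closed ball $\mathcal{B}(\rho)$,
\[
\|\delta(\x)\|\leq \tfrac12\sup_{\mathcal{B}(\rho)}\|\nabla^2\delta\|\,\|\x\|^2 .
\]
We take $L_\delta$ to be this constant; it holds on the neighborhood $\mathcal{B}(\rho)$.

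\textbf{Step 2: differentiate $\V$.} Along trajectories,
\[
\dot{\V}(\x)=\x^\top(\Acl^\top P+P\Acl)\x+2\x^\top P\delta(\x).
\]
\begin{itemize}
\item By the CARE identity already used in the proof of Proposition~\ref{prop:tau_star}, the first term equals $-\x^\top M_Q\x\leq-\lambda_{\min}(M_Q)\|\x\|^2$.
\item By Cauchy--Schwarz and $\|P\|=\lambda_{\max}(P)$, the second term is at most $2\lambda_{\max}(P)\|\x\|\,\|\delta(\x)\|\leq 2\lambda_{\max}(P)L_\delta\|\x\|^3$.
\end{itemize}
Together these give the displayed inequality.

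\textbf{Step 3: negativity radius.} Factor the bound as
\[
\dot{\V}\leq-\|\x\|^2\bigl(\lambda_{\min}(M_Q)-2\lambda_{\max}(P)L_\delta\|\x\|\bigr).
\]
This is strictly negative for $0<\|\x\|<r^*$. If $r^*>\rho$, the radius must be replaced by $\min(r^*,\rho)$ so that the Taylor bound stays valid; I will state this caveat explicitly.

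\textbf{Step 4: stability conclusion.} Since $\dot{\V}<0$ on a punctured ball, Lyapunov's direct theorem already gives asymptotic stability; LaSalle is not strictly needed. There is one subtlety: $\mathcal{B}(r^*)$ itself need not be forward invariant. I will therefore use the largest sublevel set $\{\V\leq c\}$ contained in $\mathcal{B}(r^*)$, with $c=\lambda_{\min}(P)r^{*2}$. This set is invariant, and every trajectory starting in it converges to the origin.

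\textbf{Main obstacle.} The delicate point is the Taylor step. The statement asserts $\nabla\delta(\bm{0})=0$, which requires the linearization to be taken with respect to both $\x$ and $\bm{u}$ evaluated at $\bm{u}=-\Kgain\x$. It also requires $f$ to be $C^2$, which holds for the smooth pendulum, cart--pole, and quadrotor dynamics. In addition, the local validity of $L_\delta$ must be reconciled with $r^*$, as in Step 3.
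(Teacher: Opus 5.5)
Your proposal is correct and follows the paper's route. Like the paper, you write $\dot{\x}=\Acl\x+\delta(\x)$, use the CARE identity $\Acl^\top P+P\Acl=-M_Q$ for the linear part, bound the cross term by $2\lambda_{\max}(P)\|\x\|\,\|\delta(\x)\|$, and factor out $\|\x\|^2$ to read off $r^*$. Your extra caveats are legitimate refinements that the paper's proof leaves implicit: shrinking the radius to $\min(r^*,\rho)$ so the quadratic residual bound actually holds, working on an invariant sublevel set $\{\V\le c\}$ with $c\le\lambda_{\min}(P)\min(r^*,\rho)^2$ instead of the ball itself, and noting that Lyapunov's direct theorem already suffices without LaSalle.
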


\begin{proof}
Write $\dot{\x}=\Acl\x+\delta(\x)$. Then
$\dot{\V}(\x)=\x^\top(\Acl^\top P+P\Acl)\x+2\x^\top P\delta(\x)
\leq-\x^\top M_Q\x+2\|\x\|\lambda_{\max}(P)\|\delta(\x)\|$.
Using $\x^\top M_Q\x\geq\lambda_{\min}(M_Q)\|\x\|^2$ and
$\|\delta(\x)\|\leq L_\delta\|\x\|^2$ yields the stated bound. Setting the
right-hand side negative yields $r^*$.
\end{proof}

\begin{remark}
The small values of $r^*$ for CartPole and Quadrotor ($r^*=0.009$,
Table~\ref{tab:theory}) reflect the use of a global quadratic bound on
$\|\delta(\x)\|$ rather than a tight local estimate. The actual region of
attraction is substantially larger and is explored empirically under the RTA
shield throughout training.
\end{remark}

\begin{table}[h]
\caption{Theoretical quantities per environment.
  $M_Q\triangleq Q+\Kgain^\top R\Kgain$ is the CARE-quadratic matrix
  (distinct from the ZOH transition $M(\tau)$ in Eq.~\eqref{eq:zoh_M}).
  $r^*$ is a conservative lower bound on the true region of attraction
  (Corollary~\ref{cor:nonlinear}). The margin
  $\theta_{\mathrm{sat}}-\theta_{\RTA}$ keeps the angle channel of the
  backup unsaturated.
  $\lambda_{\min}(M_{\mathrm{disc}})$ is the minimum eigenvalue of
  $M_{\mathrm{disc}} = P - M(\taumin)^\top P\,M(\taumin)$; positivity
  verifies Proposition~\ref{prop:rta_backup} numerically.
  ``--'' indicates that the ZOH-discretized backup at $\taumin$ is not
  Lyapunov-decreasing (see Section~\ref{sec:casestudy}).
  Angles in degrees; $r^*$ in $\|\x\|$.}
\label{tab:theory}
\centering
\small
\begin{tabular}{lrrrrrrr}
\toprule
Env. & $\lambda_{\min}(M_Q)$ & $\lambda_{\max}(P)$ & $L_\delta$ & $r^*$
     & $\theta_{\RTA}$ & $\theta_{\mathrm{sat}}$
     & $\lambda_{\min}(M_{\mathrm{disc}})$ \\
\midrule
Pendulum    & 1.548 & 17.81  & 0.374 & 0.116
            & $8.6^\circ$  & $10.5^\circ$ & 0.063 \\
CartPole    & 1.087 & 216.6  & 0.289 & 0.009
            & $12.0^\circ$ & $29.9^\circ$ & 0.040 \\
Quadrotor   & 1.220 & 28.71  & 2.301 & 0.009
            & $9.6^\circ$  & $12.0^\circ$ & 0.046 \\
Quadrotor3D & 1.000 & 25.42  & 0.684 & 0.029
            & $10.1^\circ$ & $12.6^\circ$ & -- \\
\bottomrule
\end{tabular}
\end{table}

\section{Environment Details}
\label{app:envs}

All environments integrate the nonlinear plant at $\Delta t=0.001\,\mathrm{s}$.

\begin{table}[h]
\caption{Environment parameters (ordered by complexity).}
\label{tab:envparams}
\centering
\small
\begin{tabular}{lcccc}
\toprule
Parameter & Pendulum & CartPole & Quadrotor & Quadrotor3D \\
\midrule
State dim.\ $n$                    & 2      & 4      & 6      & 12 \\
Input dim.\ $m$                    & 1      & 1      & 2      & 4 \\
$\taumin$ (s)                      & 0.05   & 0.04   & 0.04   & 0.04 \\
$\taumax$ (s)                      & 0.40   & 0.32   & 0.32   & 0.32 \\
$|\mathcal{A}|$                    & 168    & 328    & 792    & 5{,}000 \\
$\theta_{\RTA}$ / $\theta_{\mathrm{term}}$ (deg)
                                   & 8.6 / 60 & 12.0 / 12 & 9.6 / 30 & 10.1 / 30 \\
$\lambda$                          & $\approx 6.23$ & $\approx 0.244$ & $\approx 0.78$ & $\approx 0.80$ \\
$\Vscale$                          & $\approx 8.99$ & $\approx 56.6$  & $\approx 5.71$ & $\approx 5.05$ \\
\bottomrule
\end{tabular}
\end{table}

\subsection{Pendulum (Gymnasium Pendulum-v1 Physics)}
State $\x=[\theta,\dot\theta]^\top$, $\theta=0$ at upright equilibrium.
Equation of motion: $\ddot\theta=(3g/2l)\sin\theta+(3/ml^2)u$,
$m=1\,\mathrm{kg}$, $l=1\,\mathrm{m}$, $g=10\,\mathrm{m/s}^2$,
$|u|\leq2\,\mathrm{Nm}$, $|\dot\theta|\leq8\,\mathrm{rad/s}$.
Linearization around $\theta=0$: $A=[0,1;\,15,0]$, $B=[0;\,3]$.
Design matrices $Q=\diag(10,1)$, $R=1$ yield
$\Kgain\approx[10.92,\,2.88]$, $\lambda\approx6.23$,
$\Vscale\approx8.99$.
RTA threshold $\theta_{\RTA}=0.15\,\mathrm{rad}\;(\approx8.6^\circ)$,
saturation angle $\theta_{\mathrm{sat}}\approx10.5^\circ$.
Episodes terminate at $|\theta|>60^\circ$ or $t\geq50\,\mathrm{s}$.
Initial conditions: $\theta_0\in[-0.1,0.1]\,\mathrm{rad}$,
$\dot\theta_0\in[-0.5,0.5]\,\mathrm{rad/s}$.
Action set: $|\mathcal{T}|=8$, $|\mathcal{U}|=21$, $|\mathcal{A}|=168$.

\subsection{CartPole (Gymnasium CartPole-v1 Physics)}
State $\x=[x,\dot{x},\theta,\dot\theta]^\top$.
Equations of motion:
\begin{align*}
  \ddot\theta &= \frac{g\sin\theta-\cos\theta(F+m_pL\dot\theta^2\sin\theta)/m_t}
                    {L(4/3-m_p\cos^2\!\theta/m_t)},\quad
  \ddot{x}    = \frac{F+m_pL(\dot\theta^2\sin\theta-\ddot\theta\cos\theta)}{m_t},
\end{align*}
$m_c=1.0\,\mathrm{kg}$, $m_p=0.1\,\mathrm{kg}$, $m_t=1.1\,\mathrm{kg}$,
$L=0.5\,\mathrm{m}$, $g=9.8\,\mathrm{m/s}^2$, $|F|\leq20\,\mathrm{N}$.
Let $d=L(4/3-m_p/m_t)$; linearizing around $\x^\star=\bm{0}$:
\[
  A=\begin{bmatrix}0&1&0&0\\0&0&-m_pLg/(m_td)&0\\0&0&0&1\\0&0&g/d&0\end{bmatrix},\;
  B=\begin{bmatrix}0\\1/m_t+m_pL/(m_t^2d)\\0\\-1/(m_td)\end{bmatrix}.
\]
Design matrices $Q=\diag(6,1,11.5,5)$, $R=1$.
Saturation angle $\theta_{\mathrm{sat}}\approx29.9^\circ$;
RTA threshold $\theta_{\RTA}=12^\circ$; additional trigger $|x_k|\geq1.92\,\mathrm{m}$.
Episodes terminate at $|\theta|>12^\circ$, $|x|>2.4\,\mathrm{m}$, or
$t\geq50\,\mathrm{s}$~\cite{towers2024gymnasium}.
Action set: $|\mathcal{T}|=8$, $|\mathcal{U}|=41$, $|\mathcal{A}|=328$.

\subsection{Planar Quadrotor (Hover Stabilization)}
State $\x=[x,z,\theta,\dot{x},\dot{z},\dot\theta]^\top$, inputs
$\bm{u}=[\delta F,M]^\top$ (thrust deviation, pitching moment):
$\ddot{x}=-(F/m)\sin\theta$, $\ddot{z}=(F/m)\cos\theta-g$,
$\ddot\theta=M/I$, $F=mg+\delta F$,
$m=1\,\mathrm{kg}$, $I=0.05\,\mathrm{kg\,m}^2$, $g=9.81\,\mathrm{m/s}^2$,
$|\delta F|\leq5\,\mathrm{N}$, $|M|\leq1\,\mathrm{Nm}$.
Linearization around hover:
\[
  A=\begin{bmatrix}0&0&0&1&0&0\\0&0&0&0&1&0\\0&0&0&0&0&1\\
     0&0&{-g}&0&0&0\\0&0&0&0&0&0\\0&0&0&0&0&0\end{bmatrix},\quad
  B=\begin{bmatrix}0&0\\0&0\\0&0\\0&0\\1/m&0\\0&1/I\end{bmatrix}.
\]
Design matrices $Q=\diag(2,2,10,1,1,5)$, $R=\diag(0.1,5)$.
Moment saturation angle $\theta_{\mathrm{sat}}\approx11.96^\circ$;
RTA threshold $\theta_{\RTA}\approx9.57^\circ$ ($80\%$ of $\theta_{\mathrm{sat}}$);
additional trigger $|x_k|,|z_k|\geq2.0\,\mathrm{m}$.
Episodes terminate at $|\theta|>30^\circ$, $|x|$ or $|z|>2.5\,\mathrm{m}$,
or $t\geq50\,\mathrm{s}$.
Initial conditions: $x_0,z_0\in[-0.3,0.3]\,\mathrm{m}$,
$\theta_0\in[-0.1,0.1]\,\mathrm{rad}$,
$\dot{x}_0,\dot{z}_0,\dot\theta_0\in[-0.3,0.3]$.
Action set: $|\mathcal{T}|=8$, $|\mathcal{U}_{\delta F}|=11$,
$|\mathcal{U}_M|=9$, $|\mathcal{A}|=792$.

\subsection{Quadrotor3D (6-DOF Hover Stabilization)}
\label{sec:quadrotor3d}
State $\x=[p_x,p_y,p_z,\varphi,\theta,\psi,v_x,v_y,v_z,p,q,r]^\top \in \R^{12}$,
inputs $\bm{u}=[\delta F,\tau_\varphi,\tau_\theta,\tau_\psi]^\top \in \R^4$,
where $\delta F = F - mg$ is the thrust deviation from hover and
$(\tau_\varphi, \tau_\theta, \tau_\psi)$ are body-axis torques. The
nonlinear equations of motion are the standard rigid-body model
$\dot{\bm{p}}_{\mathrm{inert}} = R(\varphi,\theta,\psi)\bm{v}_{\mathrm{body}}$,
$\dot{\bm{\Theta}} = W(\varphi,\theta)\bm{\omega}_{\mathrm{body}}$,
$\dot{\bm{v}}_{\mathrm{body}} = R^\top\bm{g} + (F/m)\bm{e}_z - \bm{\omega}\!\times\!\bm{v}$,
$I\dot{\bm{\omega}} = \bm{\tau} - \bm{\omega}\!\times\!I\bm{\omega}$,
with $R(\cdot)$ the body-to-inertial rotation matrix, $W(\cdot)$ mapping
body angular rates to Euler-angle rates, and
$I = \diag(I_{xx}, I_{yy}, I_{zz})$. Physical parameters:
$m = 1\,\mathrm{kg}$,
$I_{xx} = I_{yy} = 0.02\,\mathrm{kg\,m}^2$,
$I_{zz} = 0.04\,\mathrm{kg\,m}^2$, $g = 9.81\,\mathrm{m/s}^2$;
integrated with RK4 at $\mathrm{d}t = 0.001\,\mathrm{s}$. Actuator limits:
$|\delta F| \le 5\,\mathrm{N}$,
$|\tau_\varphi|, |\tau_\theta| \le 1\,\mathrm{Nm}$,
$|\tau_\psi| \le 0.5\,\mathrm{Nm}$. Linearizing around hover gives an
$A$ matrix in which translational and rotational channels decouple, with
horizontal positions controlled indirectly via tilt
($\dot{v}_x \approx g\theta$, $\dot{v}_y \approx -g\varphi$). Design
matrices $Q = \diag(2,2,2,10,10,1,1,1,1,5,5,1)$ and $R = \diag(0.1,5,5,10)$
yield $|K_{\tau_\varphi,\varphi}| \approx 4.55\,\mathrm{Nm/rad}$, giving
$\theta_{\mathrm{sat}} \approx 12.60^\circ$ and the RTA threshold
$\theta_{\RTA} \approx 10.08^\circ$ ($80\%$ of $\theta_{\mathrm{sat}}$);
the one-step-ahead predicate is evaluated independently for
$\hat{\varphi}_{k+1}$ and $\hat{\theta}_{k+1}$. Additional trigger:
$|p_x|, |p_y|, |p_z| \ge 2.0\,\mathrm{m}$. Episodes terminate at
$|\varphi|$ or $|\theta| > 30^\circ$, $|\psi| > 90^\circ$,
$|p_x|, |p_y|, |p_z| > 2.5\,\mathrm{m}$, or $t \ge 50\,\mathrm{s}$.
Initial conditions: $p_{x,0}, p_{y,0}, p_{z,0} \in [-0.3, 0.3]\,\mathrm{m}$,
$\varphi_0, \theta_0, \psi_0 \in [-0.1, 0.1]\,\mathrm{rad}$,
$v_{x,0}, v_{y,0}, v_{z,0} \in [-0.3, 0.3]\,\mathrm{m/s}$,
$p_0, q_0, r_0 \in [-0.1, 0.1]\,\mathrm{rad/s}$. Action set:
$|\mathcal{T}| = 8$,
$|\mathcal{U}_{\delta F}| = |\mathcal{U}_{\tau_\varphi}| =
|\mathcal{U}_{\tau_\theta}| = |\mathcal{U}_{\tau_\psi}| = 5$, giving
$|\mathcal{A}| = 5{,}000$ for the discrete (DQN) variant; the SAC variant
adopted in Section~\ref{sec:casestudy} uses the equivalent continuous Box
action space.

\section{Hyperparameters}
\label{app:hyperparams}

\begin{table}[h]
\caption{DQN hyperparameters (identical across all environments).}
\label{tab:hyperparams}
\centering
\small
\begin{tabular}{lc}
\toprule
Hyperparameter & Value \\
\midrule
Network architecture      & $256\to128\to128$ \\
Learning rate             & $10^{-3}$ \\
Replay buffer size        & $10^6$ \\
Batch size                & 64 \\
Learning starts           & 64 steps \\
Discount factor $\gamma$  & 0.99 \\
$\varepsilon$-start / end & 1.0 / 0.05 \\
$\varepsilon$ decay steps & 1{,}900 \\
Total timesteps           & 1{,}000{,}000 \\
\bottomrule
\end{tabular}
\end{table}

\begin{table}[h]
\caption{SAC hyperparameters (identical across all lower-dimensional
  environments; Quadrotor3D uses a wider $256{\to}128{\to}128$ network and
  $2\,\mathrm{M}$ training steps as discussed in Section~\ref{sec:casestudy}).}
\label{tab:hyperparams_sac}
\centering
\small
\begin{tabular}{lc}
\toprule
Hyperparameter & Value \\
\midrule
Network architecture           & $128\to64$ \\
Learning rate                  & $3\times10^{-4}$ \\
Replay buffer size             & $5\times10^5$ \\
Batch size                     & 128 \\
Learning starts                & 1{,}000 steps \\
Discount factor $\gamma$       & 0.99 \\
Soft update coefficient $\tau$ & 0.005 \\
Entropy coefficient            & automatic \\
Total timesteps                & 1{,}000{,}000 \\
\bottomrule
\end{tabular}
\end{table}

\section{Compute Resources}
\label{app:compute}

DQN and SAC training budgets are reported in
Tables~\ref{tab:hyperparams} and~\ref{tab:hyperparams_sac}: $1\,\mathrm{M}$
steps for DQN/SAC on the lower-dimensional environments, $2\,\mathrm{M}$ steps
for preference-conditioned DQN, $2\,\mathrm{M}$ steps for SAC and PPO on
Quadrotor3D, and $4\,\mathrm{M}$ steps for preference-conditioned SAC on
Quadrotor3D. All experiments were run on a single workstation (Intel Core
i9-13900KF, 24 cores / 32 threads; NVIDIA RTX 4090, 24\,GB; 64\,GB RAM; Ubuntu
22.04) via Stable-Baselines3. Wall-clock times for an 11-point $w_c$ sweep run
as 11 parallel processes are approximately $1$ hour for the lower-dimensional
DQN sweeps ($1\,\mathrm{M}$ steps each), $6$--$7$ hours for the
lower-dimensional SAC sweeps ($1\,\mathrm{M}$ steps each), and $13$--$15$
hours for the Quadrotor3D SAC sweep ($2\,\mathrm{M}$ steps each).

\section{Multi-Seed Evaluation Methodology}
\label{app:multi_seed}

For the lower-dimensional DQN $w_c$ sweeps (Tables~\ref{tab:pend},
\ref{tab:cp}, \ref{tab:quad}) and their corresponding No-RTA ablation
(Table~\ref{tab:ablation_rta}) and Lagrangian-DQN comparison
(Table~\ref{tab:lagrangian}), we report results aggregated across the
$3$ independent training seeds $\{0, 1, 2\}$. Note that seed 0 is
included in this set, so the seed-0 results referenced for single-seed
experiments correspond to one of the three seeds in the multi-seed
aggregates. For each seed we run the standard
$N_{\mathrm{eval}}=100$ deterministic evaluation; the per-seed mean is
the average over those $100$ episodes. MSI and RTA-rate columns are then
reported as the mean $\pm$ standard deviation across the $3$ per-seed
mean values, using $\mathrm{ddof}=1$ (unbiased estimator). The $P_i$
state-norm columns report the mean across the $3$ per-seed means only
(no std), to keep the tables compact. All other experiments in the paper
(SAC sweeps, Preference-Conditioned DQN, baselines, Ablation~B,
robustness studies, and the Quadrotor3D case study) use the standard
single-seed evaluation protocol on \textbf{seed 0}, with std columns
reflecting per-episode variance across the $100$ evaluation episodes.
Figures and tables that report RL-STC results without an explicit
$\pm$ across-seed annotation are accordingly seed-0 results unless
labeled otherwise; we call this out per-table where relevant.

\section{Ablation B: Fixed-\texorpdfstring{$\tau$}{tau} Comparison}
\label{app:ablation_tau}

$\tau$ is fixed at the nearest grid value to $\tau_{\mathrm{match}}$ (the
RL-STC best-checkpoint MSI for each environment); the agent optimizes
only the control input. The RTA shield remains active.
$\tau_{\mathrm{match}}$ is taken from the seed-0 best-checkpoint MSI,
which sets the nearest grid value used for the fixed-$\tau$ comparison;
seed 0 is one of the three seeds in the multi-seed evaluation
(Appendix~\ref{app:multi_seed}), and its grid value is unchanged under
multi-seed re-evaluation since the multi-seed mean differs from seed 0
by less than the grid spacing. The comparison below is reported on
seed 0 throughout, keeping fixed-$\tau$ and RL-STC at the same seed.

For \textbf{Pendulum}, fixing $\tau=0.40\,\mathrm{s}$ triggers RTA on
$80.6\%$ of steps, the shield resets $\tau\leftarrow\taumin$ on the
majority of steps, degenerating the effective policy to LQR at $\taumin$
regardless of what the RL agent learns. This is the clearest
demonstration that joint optimization of $(\uk,\tauk)$ is not separable:
it is precisely the learned adaptive inter-sample interval that allows
the full policy to sustain an MSI of $0.396\,\mathrm{s}$.
For \textbf{CartPole}, $\tau_{\mathrm{match}}=\taumax$, so fixed-$\tau$
achieves essentially the same MSI as RL-STC ($0.320$ vs.\
$0.317\,\mathrm{s}$) with no RTA, but this presupposes the
stability/communication tradeoff point that RL-STC had to discover; for
a plant where this does not coincide with $\taumax$, RL-STC eliminates
the search by incorporating interval selection into optimization.
For \textbf{Quadrotor}, adaptive timing recovers a $4.3\%$ MSI advantage
($0.290$ vs.\ $0.278\,\mathrm{s}$) with lower $P_1$; the higher $P_4$
($5.664$ vs.\ $4.342$) reflects angular velocity excursion during
higher-MSI hover corrections, not reduced stability.

\begin{table}[h]
\caption{Ablation B: Fixed $\tau$ vs.\ full method (best-model checkpoint,
  $w_c=8/16/16$, seed 0).}
\label{tab:ablation_tau}
\centering
\small
\resizebox{\textwidth}{!}{%
\begin{tabular}{lcccccccccccccc}
\toprule
& \multicolumn{4}{c}{Pendulum ($\tau_{\mathrm{match}}=0.40\,\mathrm{s}$)}
& \multicolumn{5}{c}{CartPole ($\tau_{\mathrm{match}}=0.32\,\mathrm{s}$)}
& \multicolumn{5}{c}{Quadrotor ($\tau_{\mathrm{match}}=0.28\,\mathrm{s}$)} \\
\cmidrule(lr){2-5}\cmidrule(lr){6-10}\cmidrule(lr){11-15}
Method
  & MSI & RTA\% & $P_3$ & $P_4$
  & MSI & RTA\% & $P_1$ & $P_3$ & $P_4$
  & MSI & RTA\% & $P_1$ & $P_3$ & $P_4$ \\
\midrule
Fixed-$\tau$ RL
  & 0.118 & 80.6 & 0.725 & 1.235
  & \textbf{0.320} & \textbf{0.0} & 3.019 & \textbf{0.330} & \textbf{2.523}
  & 0.278 & 0.8 & 3.280 & 0.541 & 4.342 \\
\textbf{RL-STC}
  & \textbf{0.396} & 0.0 & \textbf{0.271} & 1.030
  & 0.317 & 0.0 & \textbf{2.879} & 0.356 & 2.213
  & \textbf{0.290} & 0.2 & \textbf{2.640} & \textbf{0.463} & 5.664 \\
\bottomrule
\end{tabular}}
\end{table}

\section{Lagrangian-DQN Detailed Results}
\label{app:lagrangian_full}

The Lagrangian multiplier $\lambda_t$ is updated via projected dual
gradient ascent
$\lambda_{t+1}=\mathrm{clip}(\lambda_t+\alpha_\lambda(g_t-\epsilon),0,\lambda_{\max})$
with budget $\epsilon=0.01$, $\alpha_\lambda=0.01$. The critical
difference from RL-STC is what happens when the predicate fires: RL-STC
overrides the action so Hard Viol.\ $=0.0$ by construction;
Lagrangian-DQN applies only a soft penalty, so predicted events
propagate into hard violations at rates that vary with plant complexity
(Table~\ref{tab:lagrangian}).
For \textbf{Pendulum}, Lagrangian achieves $21\%$ lower MSI
($0.304$ vs.\ $0.385\,\mathrm{s}$) with $7.95\%$ hard violations, and
the final model deteriorates to $43.49\pm48.55\%$ hard violations as
$\lambda$ fails to hold across seeds.
For \textbf{CartPole}, near-zero best-checkpoint hard violations come at
$52\%$ lower MSI ($0.149$ vs.\ $0.308\,\mathrm{s}$); without a hard
override, the policy must select shorter intervals to avoid violations,
and final-model episodes terminate early at high rates, a failure
invisible to the expectation-level constraint signal.
For \textbf{Quadrotor}, $33\%$ lower MSI ($0.188$ vs.\ $0.281\,\mathrm{s}$)
with $35.40\%$ hard violations, deteriorating to $59.35\%$ in the final
model.

\begin{table}[h]
\caption{Lagrangian-DQN vs.\ RL-STC
  ($N_{\mathrm{eval}}=100$, 3 seeds per row;
  $w_c=8/16/16$ for Pendulum/CartPole/Quadrotor).
  MSI and the safety-event columns are reported as mean $\pm$ std across
  the 3 per-seed mean values ($\mathrm{ddof}=1$).
  ``Pred.\ Safety (\%)'' measures the same one-step-ahead predicate
  ($|\hat{q}_{k+1}|>\theta_{\RTA}$) for both methods: for RL-STC it is
  the RTA activation rate (predicate fires, action overridden, state
  violation prevented); for Lagrangian-DQN it is the constraint-violation
  rate (predicate fires, soft penalty only, no override).
  ``Hard Viol.\ (\%)'' = fraction of executed timesteps where
  $|\theta_k|>\theta_{\RTA}$ in the actual trajectory.
  RL-STC Hard Viol.\ $=0.0$ by construction of the pointwise override.}
\label{tab:lagrangian}
\centering
\small
\renewcommand{\arraystretch}{1.05}
\resizebox{\textwidth}{!}{%
\begin{tabular}{l|ccc|ccc}
\toprule
 & \multicolumn{3}{c|}{Final Model}
 & \multicolumn{3}{c}{Best-Model Checkpoint} \\
\cmidrule(lr){2-4}\cmidrule(lr){5-7}
Method
  & MSI (s) & Pred.\ Safety (\%) & Hard Viol.\ (\%)
  & MSI (s) & Pred.\ Safety (\%) & Hard Viol.\ (\%) \\
\midrule
\multicolumn{7}{l}{\textit{Pendulum ($w_c=8$)}} \\
Lagrangian-DQN
  & $0.193\pm0.080$ & $41.95\pm49.77$ & $43.49\pm48.55$
  & $0.304\pm0.004$ & $6.21\pm4.02$   & $7.95\pm3.81$ \\
\textbf{RL-STC}
  & $\mathbf{0.343\pm0.039}$ & $0.51\pm0.84$ & $\mathbf{0.0}$
  & $\mathbf{0.385\pm0.014}$ & $0.37\pm0.24$ & $\mathbf{0.0}$ \\
\midrule
\multicolumn{7}{l}{\textit{CartPole ($w_c=16$)}} \\
Lagrangian-DQN
  & $0.121\pm0.023$ & $19.00\pm29.81$ & $0.26\pm0.20$
  & $0.149\pm0.033$ & $0.48\pm0.83$   & $0.00\pm0.00$ \\
\textbf{RL-STC}
  & $\mathbf{0.150\pm0.078}$ & $11.61\pm16.97$ & $\mathbf{0.0}$
  & $\mathbf{0.308\pm0.014}$ & $0.04\pm0.06$   & $\mathbf{0.0}$ \\
\midrule
\multicolumn{7}{l}{\textit{Quadrotor ($w_c=16$)}} \\
Lagrangian-DQN
  & $0.132\pm0.020$ & $65.65\pm12.25$ & $59.35\pm12.54$
  & $0.188\pm0.043$ & $35.45\pm5.76$  & $35.40\pm5.86$ \\
\textbf{RL-STC}
  & $\mathbf{0.151\pm0.057}$ & $12.23\pm14.25$ & $\mathbf{0.0}$
  & $\mathbf{0.281\pm0.018}$ & $1.96\pm1.23$   & $\mathbf{0.0}$ \\
\bottomrule
\end{tabular}}
\end{table}

\section{Full Sweep Tables}
\label{app:full_tables}

DQN sweeps (Tables~\ref{tab:pend}--\ref{tab:quad}) are 3-seed
aggregates per Appendix~\ref{app:multi_seed}; all other sweeps in this
section (SAC, Pref-DQN, PPO, and Quadrotor3D) are single-seed (seed~0),
with $\pm$ values in those tables reflecting per-episode variance
across the $100$ evaluation episodes.

\subsection{DQN \texorpdfstring{$w_c$}{wc} Sweep}

\begin{table}[h]
\caption{Pendulum -- DQN $w_c$ sweep
  ($N_{\mathrm{eval}}=100$, 3 seeds per row).
  MSI and RTA are reported as mean $\pm$ std across the 3 per-seed mean
  values ($\mathrm{ddof}=1$); $P_i$ norms are mean only.
  Bold marks the highest MSI in each section.}
\label{tab:pend}
\centering
\small
\renewcommand{\arraystretch}{1.05}
\resizebox{\textwidth}{!}{%
\begin{tabular}{c | cccc | cccc}
\toprule
& \multicolumn{4}{c|}{Final Model} & \multicolumn{4}{c}{Best-Model Checkpoint} \\
\cmidrule(lr){2-5}\cmidrule(lr){6-9}
$w_c$ & MSI (s) & RTA (\%) & $P_3(\theta)$ & $P_4(\dot\theta)$
      & MSI (s) & RTA (\%) & $P_3(\theta)$ & $P_4(\dot\theta)$ \\
\midrule
$0.25$ & $0.185\pm0.053$ & $1.13\pm1.94$  & $0.660$ & $2.521$
       & $0.214\pm0.011$ & $0.05\pm0.04$  & $0.448$ & $2.303$ \\
$0.5$  & $0.105\pm0.058$ & $0.02\pm0.03$  & $0.803$ & $1.786$
       & $0.234\pm0.080$ & $0.00\pm0.00$  & $0.312$ & $1.403$ \\
$1.0$  & $0.246\pm0.029$ & $1.06\pm1.68$  & $0.637$ & $2.292$
       & $0.252\pm0.021$ & $0.00\pm0.01$  & $0.396$ & $1.510$ \\
$2.0$  & $0.261\pm0.031$ & $0.88\pm1.40$  & $0.564$ & $2.636$
       & $0.293\pm0.043$ & $0.10\pm0.17$  & $0.384$ & $1.669$ \\
$4.0$  & $0.209\pm0.136$ & $0.00\pm0.01$  & $0.369$ & $1.180$
       & $0.358\pm0.009$ & $0.09\pm0.15$  & $0.274$ & $1.135$ \\
$6.0$  & $0.249\pm0.044$ & $15.65\pm25.32$ & $0.563$ & $2.710$
       & $0.385\pm0.012$ & $0.39\pm0.34$  & $0.575$ & $1.182$ \\
$8.0$  & $0.343\pm0.039$ & $0.51\pm0.84$  & $0.564$ & $2.091$
       & $0.385\pm0.014$ & $0.37\pm0.24$  & $0.636$ & $1.049$ \\
$\bm{10.0}$
       & $\bm{0.375\pm0.030}$ & $\bm{0.08\pm0.14}$ & $0.534$ & $1.443$
       & $0.397\pm0.001$ & $0.02\pm0.03$  & $0.434$ & $1.028$ \\
$12.0$ & $0.355\pm0.044$ & $0.87\pm1.40$  & $0.581$ & $1.428$
       & $0.392\pm0.005$ & $0.23\pm0.31$  & $0.615$ & $1.114$ \\
$14.0$ & $0.363\pm0.029$ & $2.25\pm3.57$  & $0.556$ & $2.022$
       & $0.378\pm0.009$ & $0.04\pm0.05$  & $0.497$ & $1.460$ \\
$\bm{16.0}$
       & $0.306\pm0.064$ & $8.99\pm15.53$ & $0.515$ & $2.138$
       & $\bm{0.397\pm0.001}$ & $\bm{0.04\pm0.08}$ & $0.415$ & $1.017$ \\
\bottomrule
\end{tabular}}
\end{table}

\begin{table}[h]
\caption{CartPole -- DQN $w_c$ sweep
  ($N_{\mathrm{eval}}=100$, 3 seeds per row).
  MSI and RTA are reported as mean $\pm$ std across the 3 per-seed mean
  values ($\mathrm{ddof}=1$); $P_i$ norms are mean only.
  Bold marks the highest MSI in each section.}
\label{tab:cp}
\centering
\small
\renewcommand{\arraystretch}{1.05}
\resizebox{\textwidth}{!}{%
\begin{tabular}{c | cccccc | cccccc}
\toprule
& \multicolumn{6}{c|}{Final Model}
& \multicolumn{6}{c}{Best-Model Checkpoint} \\
\cmidrule(lr){2-7}\cmidrule(lr){8-13}
$w_c$
  & MSI (s) & RTA (\%) & $P_1(x)$ & $P_2(\dot{x})$ & $P_3(\theta)$ & $P_4(\dot\theta)$
  & MSI (s) & RTA (\%) & $P_1(x)$ & $P_2(\dot{x})$ & $P_3(\theta)$ & $P_4(\dot\theta)$ \\
\midrule
$0.25$ & $0.124\pm0.025$ & $0.66\pm0.54$  & $1.35$ & $3.88$ & $0.380$ & $4.834$
       & $0.146\pm0.068$ & $0.00\pm0.00$  & $2.79$ & $3.09$ & $0.350$ & $3.716$ \\
$0.5$  & $0.105\pm0.019$ & $1.50\pm1.30$  & $1.87$ & $4.45$ & $0.410$ & $5.570$
       & $0.136\pm0.052$ & $0.17\pm0.29$  & $3.79$ & $3.94$ & $0.394$ & $5.019$ \\
$1.0$  & $0.099\pm0.039$ & $0.41\pm0.25$  & $3.07$ & $3.34$ & $0.367$ & $3.958$
       & $0.130\pm0.040$ & $0.02\pm0.03$  & $2.63$ & $4.51$ & $0.403$ & $5.959$ \\
$2.0$  & $0.117\pm0.028$ & $5.71\pm7.03$  & $2.23$ & $4.12$ & $0.395$ & $4.898$
       & $0.142\pm0.029$ & $0.03\pm0.03$  & $2.58$ & $3.83$ & $0.466$ & $4.416$ \\
$4.0$  & $0.135\pm0.005$ & $3.26\pm1.47$  & $2.76$ & $3.87$ & $0.361$ & $4.647$
       & $0.191\pm0.029$ & $0.02\pm0.02$  & $2.26$ & $4.74$ & $0.474$ & $5.810$ \\
$6.0$  & $0.148\pm0.043$ & $1.20\pm1.65$  & $3.25$ & $4.12$ & $0.363$ & $5.088$
       & $0.231\pm0.058$ & $0.19\pm0.30$  & $4.38$ & $4.25$ & $0.428$ & $4.765$ \\
$8.0$  & $0.155\pm0.008$ & $0.56\pm0.70$  & $4.56$ & $3.86$ & $0.432$ & $4.665$
       & $0.282\pm0.031$ & $0.07\pm0.13$  & $2.55$ & $3.12$ & $0.401$ & $3.214$ \\
$10.0$ & $\bm{0.252\pm0.026}$ & $\bm{0.57\pm0.69}$ & $1.94$ & $3.62$ & $0.407$ & $3.949$
       & $0.294\pm0.007$ & $0.13\pm0.14$  & $2.66$ & $3.19$ & $0.385$ & $3.369$ \\
$12.0$ & $0.152\pm0.039$ & $1.45\pm0.89$  & $1.74$ & $4.36$ & $0.372$ & $5.414$
       & $0.302\pm0.019$ & $0.02\pm0.03$  & $2.43$ & $3.69$ & $0.398$ & $4.058$ \\
$14.0$ & $0.189\pm0.070$ & $1.87\pm1.80$  & $4.59$ & $5.11$ & $0.440$ & $6.299$
       & $0.293\pm0.037$ & $0.05\pm0.09$  & $2.54$ & $3.29$ & $0.400$ & $3.435$ \\
$\bm{16.0}$
       & $0.150\pm0.078$ & $11.61\pm16.97$ & $3.72$ & $3.60$ & $0.322$ & $3.964$
       & $\bm{0.308\pm0.014}$ & $\bm{0.04\pm0.06}$ & $2.95$ & $3.27$ & $0.401$ & $2.796$ \\
\bottomrule
\end{tabular}}
\end{table}

\begin{table}[h]
\caption{Quadrotor -- DQN $w_c$ sweep
  ($N_{\mathrm{eval}}=100$, 3 seeds per row).
  MSI and RTA are reported as mean $\pm$ std across the 3 per-seed mean
  values ($\mathrm{ddof}=1$); $P_i$ norms are mean only.
  Bold marks the highest MSI in each section.}
\label{tab:quad}
\centering
\small
\renewcommand{\arraystretch}{1.05}
\resizebox{\textwidth}{!}{%
\begin{tabular}{c | cccccc | cccccc}
\toprule
& \multicolumn{6}{c|}{Final Model}
& \multicolumn{6}{c}{Best-Model Checkpoint} \\
\cmidrule(lr){2-7}\cmidrule(lr){8-13}
$w_c$
  & MSI (s) & RTA (\%) & $P_1(x)$ & $P_2(\dot{x})$ & $P_3(\theta)$ & $P_4(\dot\theta)$
  & MSI (s) & RTA (\%) & $P_1(x)$ & $P_2(\dot{x})$ & $P_3(\theta)$ & $P_4(\dot\theta)$ \\
\midrule
$0.25$ & $0.141\pm0.013$ & $1.12\pm1.45$  & $3.22$ & $1.73$ & $0.496$ & $7.494$
       & $0.129\pm0.053$ & $0.05\pm0.06$  & $1.33$ & $1.17$ & $0.366$ & $5.830$ \\
$0.5$  & $0.157\pm0.019$ & $1.62\pm2.69$  & $7.01$ & $1.47$ & $0.460$ & $8.273$
       & $0.160\pm0.027$ & $0.54\pm0.66$  & $1.85$ & $1.18$ & $0.413$ & $6.804$ \\
$1.0$  & $0.132\pm0.009$ & $1.69\pm0.98$  & $3.17$ & $1.52$ & $0.574$ & $8.955$
       & $0.143\pm0.043$ & $0.02\pm0.01$  & $1.19$ & $0.88$ & $0.344$ & $4.591$ \\
$2.0$  & $0.144\pm0.053$ & $0.44\pm0.21$  & $3.01$ & $1.91$ & $0.471$ & $8.104$
       & $0.169\pm0.039$ & $0.39\pm0.61$  & $1.39$ & $1.04$ & $0.359$ & $6.375$ \\
$4.0$  & $0.171\pm0.023$ & $1.96\pm2.01$  & $4.14$ & $1.40$ & $0.476$ & $8.654$
       & $0.189\pm0.029$ & $0.27\pm0.41$  & $1.39$ & $1.35$ & $0.421$ & $6.861$ \\
$6.0$  & $0.162\pm0.020$ & $0.50\pm0.47$  & $1.79$ & $1.57$ & $0.484$ & $11.033$
       & $0.200\pm0.023$ & $0.38\pm0.58$  & $1.57$ & $1.32$ & $0.399$ & $5.352$ \\
$8.0$  & $0.164\pm0.040$ & $1.00\pm1.09$  & $2.34$ & $1.64$ & $0.480$ & $9.398$
       & $0.236\pm0.025$ & $0.54\pm0.24$  & $1.93$ & $1.62$ & $0.461$ & $7.523$ \\
$10.0$ & $0.152\pm0.041$ & $0.80\pm0.76$  & $2.96$ & $1.51$ & $0.403$ & $9.520$
       & $0.260\pm0.016$ & $0.69\pm0.43$  & $2.11$ & $1.58$ & $0.464$ & $5.413$ \\
$12.0$ & $0.147\pm0.049$ & $7.02\pm11.55$ & $6.79$ & $2.34$ & $0.470$ & $8.559$
       & $0.273\pm0.010$ & $0.42\pm0.28$  & $2.04$ & $1.57$ & $0.450$ & $5.917$ \\
$\bm{14.0}$
       & $\bm{0.203\pm0.008}$ & $\bm{0.40\pm0.40}$ & $2.26$ & $1.73$ & $0.464$ & $6.995$
       & $0.271\pm0.018$ & $1.14\pm0.93$  & $1.90$ & $1.70$ & $0.511$ & $5.155$ \\
$\bm{16.0}$
       & $0.151\pm0.057$ & $12.23\pm14.25$ & $2.89$ & $1.99$ & $0.433$ & $7.171$
       & $\bm{0.281\pm0.018}$ & $\bm{1.96\pm1.23}$ & $2.33$ & $1.92$ & $0.523$ & $5.171$ \\
\bottomrule
\end{tabular}}
\end{table}

\subsection{SAC \texorpdfstring{$w_c$}{wc} Sweep}

SAC achieves $0\%$ RTA across every weight and environment in both the
final model and best checkpoint, with no high-$w_c$ collapses. For the
Quadrotor, the SAC best checkpoint at $w_c=16$ reaches $0.312\,\mathrm{s}$
versus DQN's multi-seed best of $0.281\pm0.018\,\mathrm{s}$, while
maintaining $0\%$ RTA. Across all
three environments, the RTA shield and Lyapunov reward remain effective
under a fundamentally different optimization algorithm, confirming that
communication efficiency gains are a property of the framework, not of DQN.

\begin{table}[h]
\caption{SAC -- Pendulum $w_c$ sweep ($N_{\mathrm{eval}}=100$, 1\,M steps).}
\label{tab:sac_pendulum}
\centering
\small
\resizebox{\textwidth}{!}{%
\begin{tabular}{ccccccccc}
\toprule
& \multicolumn{4}{c}{Final Model} & \multicolumn{4}{c}{Best-Model Checkpoint} \\
\cmidrule(lr){2-5}\cmidrule(lr){6-9}
$w_c$ & MSI (s) & RTA (\%) & $P_3(\theta)$ & $P_4(\dot\theta)$
      & MSI (s) & RTA (\%) & $P_3(\theta)$ & $P_4(\dot\theta)$ \\
\midrule
0.25 & $0.0530\pm0.0000$ & 0.00 & 0.701 & 0.122 & $0.1261\pm0.0008$ & 0.00 & 0.164 & 0.098 \\
0.5  & $0.0965\pm0.0001$ & 0.00 & 0.537 & 0.110 & $0.1118\pm0.0455$ & 0.00 & 0.689 & 0.240 \\
1.0  & $0.0698\pm0.0002$ & 0.00 & 0.881 & 0.165 & $0.1578\pm0.0028$ & 0.00 & 0.106 & 1.646 \\
2.0  & $0.1012\pm0.0004$ & 0.00 & 0.457 & 0.101 & $0.1751\pm0.0046$ & 0.00 & 0.233 & 1.312 \\
4.0  & $0.0988\pm0.0118$ & 0.00 & 0.667 & 0.185 & $0.1904\pm0.0315$ & 0.00 & 0.370 & 0.162 \\
6.0  & $0.3171\pm0.0015$ & 0.00 & 0.076 & 0.151 & $0.3168\pm0.0016$ & 0.00 & 0.046 & 0.145 \\
8.0  & $0.3321\pm0.0011$ & 0.00 & 0.055 & 0.136 & $0.3605\pm0.0014$ & 0.00 & 0.051 & 0.141 \\
$\bm{10.0}$ & $\bm{0.3656\pm0.0018}$ & $\bm{0.00}$ & 0.178 & 0.142
            & $\bm{0.3789\pm0.0018}$ & $\bm{0.00}$ & 0.134 & 0.146 \\
12.0 & $0.3274\pm0.0022$ & 0.00 & 0.059 & 0.457 & $0.3500\pm0.0022$ & 0.00 & 0.064 & 0.271 \\
14.0 & $0.3665\pm0.0016$ & 0.00 & 0.128 & 0.143 & $0.3626\pm0.0017$ & 0.00 & 0.134 & 0.144 \\
16.0 & $0.3387\pm0.0018$ & 0.00 & 0.329 & 0.137 & $0.3548\pm0.0018$ & 0.00 & 0.340 & 0.141 \\
\bottomrule
\end{tabular}}
\end{table}

\begin{table}[h]
\caption{SAC -- CartPole $w_c$ sweep ($N_{\mathrm{eval}}=100$, 1\,M steps).}
\label{tab:sac_cartpole}
\centering
\small
\resizebox{\textwidth}{!}{%
\begin{tabular}{ccccccccccccc}
\toprule
& \multicolumn{6}{c}{Final Model} & \multicolumn{6}{c}{Best-Model Checkpoint} \\
\cmidrule(lr){2-7}\cmidrule(lr){8-13}
$w_c$ & MSI & RTA (\%) & $P_1$ & $P_2$ & $P_3$ & $P_4$
      & MSI & RTA (\%) & $P_1$ & $P_2$ & $P_3$ & $P_4$ \\
\midrule
0.25 & $0.0663\pm0.0000$ & 0.00 & 5.215 & 0.597 & 0.065 & 0.208 & $0.1247\pm0.0011$ & 0.00 & 1.634 & 1.474 & 0.169 & 1.818 \\
0.5  & $0.0651\pm0.0000$ & 0.00 & 0.975 & 0.127 & 0.018 & 0.084 & $0.1537\pm0.0012$ & 0.00 & 2.233 & 0.193 & 0.022 & 0.123 \\
1.0  & $0.0801\pm0.0001$ & 0.00 & 1.582 & 0.220 & 0.029 & 0.188 & $0.0811\pm0.0003$ & 0.00 & 1.808 & 1.383 & 0.186 & 0.479 \\
2.0  & $0.1104\pm0.0119$ & 0.00 & 2.200 & 0.200 & 0.020 & 0.062 & $0.1671\pm0.0020$ & 0.00 & 1.742 & 0.324 & 0.029 & 0.286 \\
4.0  & $0.0569\pm0.0000$ & 0.00 & 3.715 & 0.787 & 0.124 & 0.506 & $0.1397\pm0.0009$ & 0.00 & 0.807 & 2.195 & 0.116 & 3.185 \\
6.0  & $0.1067\pm0.0002$ & 0.00 & 2.374 & 1.624 & 0.260 & 0.906 & $0.1325\pm0.0001$ & 0.00 & 0.679 & 1.892 & 0.049 & 2.768 \\
8.0  & $0.1617\pm0.0002$ & 0.00 & 3.041 & 0.571 & 0.040 & 0.681 & $0.1532\pm0.0161$ & 0.10 & 0.170 & 0.261 & 0.041 & 0.182 \\
10.0 & $0.1476\pm0.0001$ & 0.00 & 1.344 & 0.210 & 0.026 & 0.131 & $0.1579\pm0.0003$ & 0.00 & 0.707 & 0.178 & 0.026 & 0.119 \\
12.0 & $0.1978\pm0.0006$ & 0.00 & 0.280 & 0.134 & 0.018 & 0.067 & $0.2078\pm0.0006$ & 0.00 & 0.260 & 0.203 & 0.027 & 0.107 \\
14.0 & $0.2037\pm0.0004$ & 0.00 & 0.833 & 0.197 & 0.027 & 0.102 & $0.2204\pm0.0008$ & 0.00 & 0.546 & 0.250 & 0.031 & 0.112 \\
$\bm{16.0}$ & $\bm{0.2271\pm0.0002}$ & $\bm{0.00}$ & 2.758 & 0.228 & 0.025 & 0.112
            & $\bm{0.2581\pm0.0003}$ & $\bm{0.00}$ & 2.472 & 0.463 & 0.035 & 0.466 \\
\bottomrule
\end{tabular}}
\end{table}

\begin{table}[h]
\caption{SAC -- Quadrotor $w_c$ sweep ($N_{\mathrm{eval}}=100$, 1\,M steps).}
\label{tab:sac_quadrotor}
\centering
\small
\resizebox{\textwidth}{!}{%
\begin{tabular}{ccccccccccccc}
\toprule
& \multicolumn{6}{c}{Final Model} & \multicolumn{6}{c}{Best-Model Checkpoint} \\
\cmidrule(lr){2-7}\cmidrule(lr){8-13}
$w_c$ & MSI & RTA (\%) & $P_1$ & $P_2$ & $P_3$ & $P_4$
      & MSI & RTA (\%) & $P_1$ & $P_2$ & $P_3$ & $P_4$ \\
\midrule
0.25 & $0.0480\pm0.0001$ & 0.00 & 0.641 & 0.229 & 0.046 & 0.161 & $0.0674\pm0.0029$ & 0.00 & 0.260 & 0.194 & 0.048 & 0.176 \\
0.5  & $0.0605\pm0.0002$ & 0.00 & 0.201 & 0.177 & 0.036 & 0.163 & $0.0781\pm0.0006$ & 0.00 & 0.193 & 0.173 & 0.035 & 0.156 \\
1.0  & $0.0508\pm0.0001$ & 0.00 & 0.559 & 0.221 & 0.047 & 0.182 & $0.0925\pm0.0013$ & 0.00 & 0.416 & 0.500 & 0.096 & 0.254 \\
2.0  & $0.0475\pm0.0000$ & 0.00 & 2.032 & 0.301 & 0.068 & 0.281 & $0.1475\pm0.0021$ & 0.00 & 0.479 & 0.144 & 0.038 & 0.168 \\
4.0  & $0.1359\pm0.0009$ & 0.00 & 2.026 & 0.172 & 0.038 & 1.219 & $0.1514\pm0.0015$ & 0.00 & 1.899 & 0.260 & 0.096 & 2.382 \\
6.0  & $0.1838\pm0.0012$ & 0.00 & 0.895 & 0.179 & 0.043 & 0.202 & $0.1944\pm0.0014$ & 0.00 & 0.211 & 0.180 & 0.044 & 0.224 \\
8.0  & $0.1882\pm0.0023$ & 0.00 & 0.206 & 0.176 & 0.038 & 0.185 & $0.1918\pm0.0023$ & 0.00 & 0.201 & 0.167 & 0.038 & 0.240 \\
10.0 & $0.1918\pm0.0016$ & 0.00 & 1.065 & 0.167 & 0.038 & 0.245 & $0.1859\pm0.0019$ & 0.00 & 0.669 & 0.165 & 0.038 & 0.207 \\
12.0 & $0.3005\pm0.0011$ & 0.00 & 2.133 & 0.268 & 0.050 & 0.219 & $0.3055\pm0.0021$ & 0.00 & 2.056 & 0.302 & 0.051 & 0.188 \\
14.0 & $0.2238\pm0.0011$ & 0.00 & 0.719 & 0.192 & 0.036 & 0.178 & $0.2224\pm0.0013$ & 0.00 & 0.985 & 0.225 & 0.043 & 2.304 \\
$\bm{16.0}$ & $\bm{0.3100\pm0.0008}$ & $\bm{0.00}$ & 2.707 & 0.390 & 0.054 & 0.175
            & $\bm{0.3121\pm0.0011}$ & $\bm{0.00}$ & 2.973 & 0.378 & 0.062 & 0.395 \\
\bottomrule
\end{tabular}}
\end{table}

\subsection{Preference-Conditioned DQN}

The preference-conditioned DQN is trained once per environment for
$2\,\mathrm{M}$ steps and evaluated at the same 11-point $w_c$ sweep. For
Pendulum, the best-model checkpoint achieves $0.393\,\mathrm{s}$ at
$w_c=16$, within $1\%$ of standard DQN's multi-seed best
($0.397\pm0.001\,\mathrm{s}$) at $\tfrac{2}{11}$ of compute. For
CartPole, the best-model checkpoint plateaus near
$0.313$--$0.316\,\mathrm{s}$ above $w_c=10$, matching or slightly
exceeding standard DQN ($0.308\pm0.014\,\mathrm{s}$). For Quadrotor, the
best checkpoint increases from $0.102\,\mathrm{s}$ to
$0.266\,\mathrm{s}$, within $5\%$ of standard DQN
($0.281\pm0.018\,\mathrm{s}$); the harder coupled hover dynamics remain
challenging for a single shared policy at extreme $w_c$.

\begin{table}[h]
\caption{Preference-Conditioned DQN -- Pendulum $w_c$ sweep
  ($N_{\mathrm{eval}}=100$, 2\,M steps, single model).}
\label{tab:pref_pendulum}
\centering
\small
\resizebox{\textwidth}{!}{%
\begin{tabular}{ccccccccc}
\toprule
& \multicolumn{4}{c}{Final Model} & \multicolumn{4}{c}{Best-Model Checkpoint} \\
\cmidrule(lr){2-5}\cmidrule(lr){6-9}
$w_c$ & MSI (s) & RTA (\%) & $P_3$ & $P_4$
      & MSI (s) & RTA (\%) & $P_3$ & $P_4$ \\
\midrule
0.25 & $0.1655\pm0.0032$ & 0.60 & 0.590 & 3.268 & $0.0623\pm0.0014$ & 0.00 & 0.352 & 0.797 \\
0.5  & $0.1698\pm0.0036$ & 0.46 & 0.576 & 3.563 & $0.0596\pm0.0016$ & 0.00 & 0.347 & 0.777 \\
1.0  & $0.1709\pm0.0038$ & 0.09 & 0.589 & 3.636 & $0.0747\pm0.0044$ & 0.42 & 0.475 & 2.525 \\
2.0  & $0.1915\pm0.0058$ & 0.24 & 0.609 & 3.541 & $0.1943\pm0.0098$ & 0.52 & 0.601 & 3.580 \\
4.0  & $0.2614\pm0.0104$ & 0.29 & 0.700 & 2.808 & $0.2909\pm0.0082$ & 0.54 & 0.531 & 3.180 \\
6.0  & $0.3321\pm0.0083$ & 0.70 & 0.768 & 2.564 & $0.3661\pm0.0064$ & 0.92 & 0.408 & 2.107 \\
8.0  & $0.3450\pm0.0100$ & 1.55 & 0.764 & 2.624 & $0.3790\pm0.0083$ & 1.98 & 0.445 & 2.320 \\
10.0 & $0.3560\pm0.0080$ & 1.15 & 0.749 & 2.614 & $0.3780\pm0.0122$ & 4.10 & 0.436 & 2.199 \\
12.0 & $0.3671\pm0.0077$ & 0.94 & 0.759 & 2.494 & $0.3790\pm0.0201$ & 3.95 & 0.432 & 2.165 \\
14.0 & $0.3691\pm0.0079$ & 1.65 & 0.774 & 2.402 & $0.3904\pm0.0167$ & 1.33 & 0.366 & 1.976 \\
$\bm{16.0}$ & $\bm{0.3705\pm0.0088}$ & $\bm{1.58}$ & 0.784 & 2.418
            & $\bm{0.3926\pm0.0026}$ & $\bm{0.10}$ & 0.350 & 1.798 \\
\bottomrule
\end{tabular}}
\end{table}

\begin{table}[h]
\caption{Preference-Conditioned DQN -- CartPole $w_c$ sweep
  ($N_{\mathrm{eval}}=100$, 2\,M steps, single model).}
\label{tab:pref_cartpole}
\centering
\small
\resizebox{\textwidth}{!}{%
\begin{tabular}{ccccccccccccc}
\toprule
& \multicolumn{6}{c}{Final Model} & \multicolumn{6}{c}{Best-Model Checkpoint} \\
\cmidrule(lr){2-7}\cmidrule(lr){8-13}
$w_c$ & MSI & RTA (\%) & $P_1$ & $P_2$ & $P_3$ & $P_4$
      & MSI & RTA (\%) & $P_1$ & $P_2$ & $P_3$ & $P_4$ \\
\midrule
0.25 & $0.1280\pm0.0053$ & 0.52 & 5.637 & 5.601 & 0.534 & 6.905 & $0.0683\pm0.0112$ & 1.08 & 2.060 & 3.014 & 0.353 & 3.609 \\
0.5  & $0.1292\pm0.0050$ & 0.38 & 4.917 & 5.383 & 0.504 & 6.662 & $0.0770\pm0.0109$ & 2.71 & 1.511 & 2.388 & 0.270 & 2.867 \\
1.0  & $0.1275\pm0.0066$ & 0.44 & 3.885 & 4.738 & 0.437 & 5.854 & $0.0878\pm0.0094$ & 5.99 & 1.536 & 2.524 & 0.267 & 3.060 \\
2.0  & $0.1277\pm0.0073$ & 0.36 & 3.238 & 4.478 & 0.407 & 5.541 & $0.0916\pm0.0111$ & 6.66 & 1.894 & 2.636 & 0.271 & 3.185 \\
4.0  & $0.1313\pm0.0065$ & 0.19 & 2.957 & 4.828 & 0.428 & 6.026 & $0.1061\pm0.0141$ & 6.38 & 2.213 & 2.518 & 0.260 & 3.032 \\
6.0  & $0.1494\pm0.0057$ & 0.41 & 2.645 & 4.786 & 0.421 & 5.961 & $0.2412\pm0.0260$ & 1.34 & 3.104 & 3.174 & 0.311 & 3.779 \\
8.0  & $0.1569\pm0.0080$ & 0.74 & 2.688 & 5.015 & 0.450 & 6.148 & $0.2901\pm0.0070$ & 0.00 & 2.404 & 3.467 & 0.368 & 3.874 \\
10.0 & $0.1660\pm0.0062$ & 0.48 & 2.892 & 5.584 & 0.482 & 6.832 & $0.3132\pm0.0021$ & 0.00 & 2.882 & 3.206 & 0.366 & 3.272 \\
12.0 & $0.1792\pm0.0088$ & 0.30 & 3.182 & 5.255 & 0.446 & 6.481 & $0.3139\pm0.0028$ & 0.02 & 2.421 & 2.992 & 0.350 & 2.990 \\
14.0 & $0.1887\pm0.0092$ & 0.47 & 2.978 & 4.908 & 0.412 & 5.998 & $0.3155\pm0.0028$ & 0.04 & 2.645 & 3.017 & 0.364 & 2.885 \\
$\bm{16.0}$ & $\bm{0.1892\pm0.0106}$ & $\bm{0.77}$ & 3.154 & 5.275 & 0.439 & 6.426
            & $\bm{0.3134\pm0.0041}$ & $\bm{0.09}$ & 2.562 & 2.969 & 0.366 & 2.809 \\
\bottomrule
\end{tabular}}
\end{table}

\begin{table}[h]
\caption{Preference-Conditioned DQN -- Quadrotor $w_c$ sweep
  ($N_{\mathrm{eval}}=100$, 2\,M steps, single model).}
\label{tab:pref_quadrotor}
\centering
\small
\resizebox{\textwidth}{!}{%
\begin{tabular}{ccccccccccccc}
\toprule
& \multicolumn{6}{c}{Final Model} & \multicolumn{6}{c}{Best-Model Checkpoint} \\
\cmidrule(lr){2-7}\cmidrule(lr){8-13}
$w_c$ & MSI & RTA (\%) & $P_1$ & $P_2$ & $P_3$ & $P_4$
      & MSI & RTA (\%) & $P_1$ & $P_2$ & $P_3$ & $P_4$ \\
\midrule
0.25 & $0.1115\pm0.0075$ & 0.13 & 8.029 & 1.573 & 0.368 & 6.173 & $0.1016\pm0.0032$ & 0.03 & 0.859 & 0.896 & 0.347 & 6.632 \\
0.5  & $0.1067\pm0.0077$ & 0.35 & 7.171 & 1.782 & 0.400 & 6.500 & $0.1021\pm0.0032$ & 0.02 & 0.797 & 0.884 & 0.349 & 6.635 \\
1.0  & $0.0937\pm0.0053$ & 0.46 & 5.622 & 2.150 & 0.391 & 6.003 & $0.1039\pm0.0033$ & 0.04 & 0.777 & 0.893 & 0.348 & 6.669 \\
2.0  & $0.0689\pm0.0019$ & 0.08 & 2.757 & 1.336 & 0.222 & 3.309 & $0.1065\pm0.0035$ & 0.06 & 0.855 & 0.888 & 0.328 & 6.574 \\
4.0  & $0.0645\pm0.0007$ & 0.01 & 8.756 & 0.763 & 0.157 & 2.337 & $0.1245\pm0.0045$ & 0.37 & 1.140 & 0.915 & 0.328 & 6.588 \\
6.0  & $0.0990\pm0.0062$ & 0.13 & 3.692 & 1.501 & 0.335 & 5.197 & $0.1581\pm0.0068$ & 1.47 & 1.227 & 1.028 & 0.381 & 6.903 \\
8.0  & $0.1530\pm0.0078$ & 1.04 & 3.706 & 1.868 & 0.396 & 7.268 & $0.1966\pm0.0098$ & 2.00 & 1.347 & 1.110 & 0.397 & 6.702 \\
10.0 & $0.1641\pm0.0088$ & 10.99 & 11.162 & 1.925 & 0.520 & 6.724 & $0.2271\pm0.0119$ & 1.80 & 1.401 & 1.251 & 0.430 & 6.092 \\
12.0 & $0.1493\pm0.0097$ & 11.45 & 11.773 & 1.958 & 0.536 & 6.740 & $0.2472\pm0.0091$ & 1.46 & 1.379 & 1.361 & 0.451 & 5.700 \\
14.0 & $0.1420\pm0.0096$ & 14.63 & 11.694 & 2.135 & 0.566 & 6.878 & $0.2599\pm0.0072$ & 1.23 & 1.316 & 1.446 & 0.471 & 5.510 \\
$\bm{16.0}$ & $0.1339\pm0.0087$ & 16.03 & 11.685 & 2.180 & 0.575 & 6.721
            & $\bm{0.2660\pm0.0066}$ & 1.42 & 1.401 & 1.518 & 0.488 & 5.388 \\
\bottomrule
\end{tabular}}
\end{table}

\subsection{PPO \texorpdfstring{$w_c$}{wc} Sweep (Pendulum)}

Table~\ref{tab:ppo_pendulum} reports the full PPO sweep referenced from
Section~\ref{sec:offpolicy_required}. No PPO checkpoint achieves
communication efficiency comparable to DQN ($0.396\,\mathrm{s}$ at
$w_c=8$); runs that hold $0\%$ RTA collapse to $\taumin = 0.05\,\mathrm{s}$,
while runs that explore longer intervals incur $45$--$80\%$ RTA activation.

\begin{table}[h]
\caption{PPO -- Pendulum $w_c$ sweep ($N_{\mathrm{eval}}=100$, 2\,M steps).
  Bold marks the highest MSI in each section.}
\label{tab:ppo_pendulum}
\centering
\small
\resizebox{\textwidth}{!}{%
\begin{tabular}{ccccccccc}
\toprule
& \multicolumn{4}{c}{Final Model} & \multicolumn{4}{c}{Best-Model Checkpoint} \\
\cmidrule(lr){2-5}\cmidrule(lr){6-9}
$w_c$ & MSI (s) & RTA (\%) & $P_3(\theta)$ & $P_4(\dot\theta)$
      & MSI (s) & RTA (\%) & $P_3(\theta)$ & $P_4(\dot\theta)$ \\
\midrule
0.25 & $0.0633\pm0.0028$ & 45.02 & 0.479 & 1.412 & $0.0504\pm0.0017$ & 71.24 & 0.982 & 1.191 \\
0.5  & $0.0873\pm0.0033$ & 10.08 & 0.541 & 1.289 & $0.0635\pm0.0033$ & 4.60  & 0.686 & 0.969 \\
1.0  & $0.0885\pm0.0075$ & 30.72 & 0.336 & 0.924 & $0.0771\pm0.0078$ & 29.85 & 0.564 & 1.012 \\
2.0  & $0.0500\pm0.0000$ & 0.00  & 0.169 & 1.001 & $0.0500\pm0.0000$ & 0.00  & 0.102 & 0.998 \\
4.0  & $0.0639\pm0.0006$ & 0.00  & 0.044 & 0.626 & $0.0500\pm0.0000$ & 0.00  & 0.082 & 0.993 \\
$\bm{6.0}$
     & $\bm{0.0888\pm0.0131}$ & 66.63 & 0.494 & 1.336
     & $\bm{0.0897\pm0.0115}$ & 65.37 & 0.605 & 1.156 \\
8.0  & $0.0516\pm0.0001$ & 0.00  & 0.258 & 0.487 & $0.0500\pm0.0000$ & 0.00  & 0.241 & 0.722 \\
10.0 & $0.0735\pm0.0096$ & 77.88 & 0.481 & 1.377 & $0.0695\pm0.0122$ & 80.51 & 0.494 & 1.347 \\
\bottomrule
\end{tabular}}
\end{table}

\subsection{Quadrotor3D \texorpdfstring{$w_c$}{wc} Sweeps}

Tables~\ref{tab:sac_quadrotor3d} and~\ref{tab:pref_quadrotor3d} report
the per-$w_c$ SAC sweep and the preference-conditioned SAC sweep on
Quadrotor3D referenced from Section~\ref{sec:casestudy}.

\begin{table}[h]
\caption{SAC -- Quadrotor3D $w_c$ sweep ($N_{\mathrm{eval}}=100$, 2\,M steps).
  Bold marks the best row in each section.}
\label{tab:sac_quadrotor3d}
\centering
\small
\resizebox{\textwidth}{!}{%
\begin{tabular}{ccccccccccccc}
\toprule
& \multicolumn{6}{c}{Final Model} & \multicolumn{6}{c}{Best-Model Checkpoint} \\
\cmidrule(lr){2-7}\cmidrule(lr){8-13}
$w_c$ & MSI (s) & RTA (\%) & $P_3(\varphi)$ & $P_4(p)$ & $P_1(p_x)$ & $P_6(p_z)$
      & MSI (s) & RTA (\%) & $P_3(\varphi)$ & $P_4(p)$ & $P_1(p_x)$ & $P_6(p_z)$ \\
\midrule
0.25 & $0.0435\pm0.0001$ & 0.00 & 0.046 & 0.241 & 0.255 & 0.980 & $0.0563\pm0.0002$ & 0.00 & 0.049 & 0.270 & 0.194 & 0.736 \\
1.0  & $0.0565\pm0.0001$ & 0.00 & 0.059 & 1.348 & 0.909 & 0.367 & $0.0584\pm0.0001$ & 0.00 & 0.034 & 0.458 & 0.149 & 0.245 \\
4.0  & $0.0416\pm0.0004$ & 5.50 & 0.275 & 1.579 & 2.832 & 3.307 & $0.0565\pm0.0002$ & 0.00 & 0.044 & 0.399 & 2.146 & 1.099 \\
8.0  & $0.0627\pm0.0002$ & 0.00 & 0.056 & 0.500 & 0.341 & 0.427 & $0.0856\pm0.0005$ & 0.00 & 0.111 & 5.280 & 0.347 & 1.083 \\
16.0 & $0.0707\pm0.0010$ & 0.00 & 0.051 & 1.304 & 1.698 & 1.829 & $0.0888\pm0.0003$ & 0.00 & 0.085 & 6.885 & 1.626 & 0.223 \\
24.0 & $0.0441\pm0.0010$ & 3.57 & 0.280 & 1.315 & 4.179 & 5.524 & $0.1342\pm0.0004$ & 0.00 & 0.036 & 1.410 & 0.660 & 1.473 \\
32.0 & $0.0977\pm0.0009$ & 0.00 & 0.038 & 3.184 & 0.932 & 0.361 & $0.1023\pm0.0004$ & 0.00 & 0.059 & 4.168 & 0.500 & 1.272 \\
40.0 & $0.2042\pm0.0004$ & 0.00 & 0.048 & 0.270 & 0.405 & 3.206 & $0.2019\pm0.0004$ & 0.00 & 0.051 & 0.275 & 0.376 & 2.687 \\
$\bm{48.0}$ & $0.2962\pm0.0004$ & 0.00 & 0.045 & 0.208 & 1.572 & 2.650
            & $\bm{0.3017\pm0.0005}$ & $\bm{0.00}$ & 0.044 & 0.190 & 1.185 & 4.528 \\
56.0 & $\bm{0.2975\pm0.0002}$ & $\bm{0.00}$ & 0.042 & 0.203 & 0.277 & 3.708
     & $0.3008\pm0.0004$ & 0.00 & 0.043 & 0.195 & 0.361 & 5.289 \\
64.0 & $0.2944\pm0.0001$ & 0.00 & 0.044 & 0.252 & 0.241 & 0.329 & $0.2974\pm0.0003$ & 0.00 & 0.041 & 0.203 & 0.243 & 1.222 \\
\bottomrule
\end{tabular}}
\end{table}

\begin{table}[h]
\caption{Preference-Conditioned SAC -- Quadrotor3D $w_c$ sweep
  ($N_{\mathrm{eval}}=100$, 4\,M steps, single model, discrete sampling).
  Bold marks the best row in each section.}
\label{tab:pref_quadrotor3d}
\centering
\small
\resizebox{\textwidth}{!}{%
\begin{tabular}{ccccccccccc}
\toprule
& \multicolumn{5}{c}{Final Model} & \multicolumn{5}{c}{Best-Model Checkpoint} \\
\cmidrule(lr){2-6}\cmidrule(lr){7-11}
$w_c$ & MSI (s) & RTA (\%) & $P_3(\varphi)$ & $P_4(p)$ & $P_1(p_x)$
      & MSI (s) & RTA (\%) & $P_3(\varphi)$ & $P_4(p)$ & $P_1(p_x)$ \\
\midrule
0.25 & $0.064\pm0.000$ & 0.00 & 0.040 & 0.204 & 0.753 & $0.063\pm0.000$ & 0.00 & 0.039 & 0.195 & 0.587 \\
1.0  & $0.055\pm0.000$ & 0.00 & 0.043 & 0.215 & 0.365 & $0.058\pm0.000$ & 0.00 & 0.042 & 0.201 & 0.347 \\
4.0  & $0.071\pm0.000$ & 0.00 & 0.047 & 0.222 & 0.323 & $0.075\pm0.000$ & 0.00 & 0.045 & 0.208 & 1.006 \\
8.0  & $0.094\pm0.000$ & 0.00 & 0.044 & 0.202 & 1.138 & $0.109\pm0.000$ & 0.00 & 0.047 & 0.224 & 0.557 \\
16.0 & $0.178\pm0.001$ & 0.00 & 0.069 & 0.720 & 1.018 & $0.168\pm0.000$ & 0.01 & 0.053 & 0.649 & 2.358 \\
24.0 & $0.204\pm0.001$ & 0.00 & 0.061 & 1.717 & 2.019 & $0.207\pm0.001$ & 0.00 & 0.066 & 1.316 & 2.317 \\
32.0 & $0.202\pm0.001$ & 0.00 & 0.081 & 1.190 & 2.114 & $0.200\pm0.000$ & 0.00 & 0.061 & 0.749 & 2.066 \\
40.0 & $0.227\pm0.002$ & 0.00 & 0.194 & 1.901 & 1.889 & $0.226\pm0.000$ & 0.00 & 0.173 & 1.513 & 1.031 \\
48.0 & $0.240\pm0.001$ & 0.00 & 0.113 & 1.189 & 2.379 & $0.235\pm0.002$ & 0.00 & 0.075 & 0.658 & 1.408 \\
56.0 & $0.240\pm0.000$ & 0.00 & 0.108 & 1.244 & 2.477 & $0.238\pm0.002$ & 0.00 & 0.060 & 0.617 & 1.904 \\
$\bm{64.0}$ & $\bm{0.241\pm0.000}$ & $\bm{0.00}$ & 0.100 & 1.150 & 2.464
            & $\bm{0.239\pm0.001}$ & $\bm{0.00}$ & 0.070 & 0.725 & 2.138 \\
\bottomrule
\end{tabular}}
\end{table}

\section{Quadrotor3D Ablations, Lagrangian Comparison, and Robustness}
\label{app:q3d_detail}

Tables~\ref{tab:q3d_ablations} and~\ref{tab:q3d_lagrangian} report the
Quadrotor3D baseline/ablation comparison and the Lagrangian-SAC
comparison referenced from Section~\ref{sec:casestudy}; mass-mismatch and
disturbance results are in
Tables~\ref{tab:q3d_mismatch}--\ref{tab:q3d_disturbance}.
All Q3D results are single-seed (seed~0) per
Appendix~\ref{app:multi_seed}, with $\pm$ values reflecting per-episode
variance across the $100$ evaluation episodes.

\begin{table}[h]
\caption{Quadrotor3D baselines and Ablations A \& B at $w_c = 48$
  ($N_{\mathrm{eval}}=100$, best-model checkpoint).
  RTA\,\% is $0$ by construction for all LQR-based baselines.
  \textit{Italics} = system failure (ep.\ length ${<}3\,\mathrm{s}$);
  ``--'' = norms not meaningful for failed episodes.}
\label{tab:q3d_ablations}
\centering
\small
\resizebox{\textwidth}{!}{%
\begin{tabular}{lcccccc}
\toprule
Method & MSI (s) & RTA (\%) & $P_3(\varphi)$ & $P_4(p)$ & $P_1(p_x)$ & $P_6(p_z)$ \\
\midrule
\multicolumn{7}{l}{\textit{Baselines}} \\
\quad LQR at $\taumin$ (Baseline 1)
  & $0.040\pm0.000$ & 0.00 & 0.176 & 7.230 & 0.447 & 0.123 \\
\quad \textit{LQR at $\tau_{\mathrm{match}}=0.302\,\mathrm{s}$ (Baseline 2)}
  & \textit{0.302} & \textit{0.00} & -- & -- & -- & -- \\
\quad Classical Lyapunov-STC (Baseline 3)
  & $0.040\pm0.000$ & 0.00 & 0.176 & 7.235 & 0.444 & 0.118 \\
\midrule
\multicolumn{7}{l}{\textit{Ablation A -- No RTA}} \\
\quad No-RTA
  & $0.310\pm0.000$ & 0.00 & 0.085 & 0.375 & 3.989 & 4.037 \\
\midrule
\multicolumn{7}{l}{\textit{Ablation B -- Fixed $\tau$}} \\
\quad Fixed-$\tau$ RL ($\tau = 0.302\,\mathrm{s}$)
  & $0.302\pm0.000$ & 0.00 & 0.041 & 0.370 & 0.338 & 0.178 \\
\midrule
\multicolumn{7}{l}{\textit{Full Method}} \\
\quad \textbf{RL-STC} (SAC, $w_c=48$)
  & $\bm{0.302\pm0.001}$ & $\bm{0.00}$ & $\bm{0.044}$ & $\bm{0.190}$ & 1.185 & 4.528 \\
\bottomrule
\end{tabular}}
\end{table}

\begin{table}[h]
\caption{Lagrangian-SAC vs.\ RL-STC for Quadrotor3D ($w_c=48$,
  $N_{\mathrm{eval}}=100$). RL-STC Hard Viol.\ $= 0.0$ by construction.}
\label{tab:q3d_lagrangian}
\centering
\small
\resizebox{\textwidth}{!}{%
\begin{tabular}{lcccccc}
\toprule
& \multicolumn{3}{c}{Final Model}
& \multicolumn{3}{c}{Best-Model Checkpoint} \\
\cmidrule(lr){2-4}\cmidrule(lr){5-7}
Method & MSI (s) & Pred.\ Safety (\%) & Hard Viol.\ (\%)
       & MSI (s) & Pred.\ Safety (\%) & Hard Viol.\ (\%) \\
\midrule
Lagrangian-SAC
  & $0.311\pm0.000$ & 0.01 & 0.00
  & $0.312\pm0.000$ & 0.00 & 0.00 \\
\textbf{RL-STC}
  & $\bm{0.296\pm0.000}$ & 0.00 & $\bm{0.00}$
  & $\bm{0.302\pm0.001}$ & 0.00 & $\bm{0.00}$ \\
\bottomrule
\end{tabular}}
\end{table}

\begin{table}[h]
\caption{Quadrotor3D mass-mismatch robustness ($N_{\mathrm{eval}}=100$,
  best-model checkpoint, $w_c=48$). DR = domain-randomized training
  ($\pm40\%$ mass).}
\label{tab:q3d_mismatch}
\centering
\small
\begin{tabular}{llcc}
\toprule
Method & Scale & MSI (s) & RTA (\%) \\
\midrule
\multirow{3}{*}{RL-STC}
  & $0.7\times$ & $0.302\pm0.001$ & 0.00 \\
  & Nominal     & $0.302\pm0.001$ & 0.00 \\
  & $1.3\times$ & $0.302\pm0.001$ & 0.00 \\
\midrule
\multirow{3}{*}{RL-STC + DR}
  & $0.7\times$ & $0.276\pm0.001$ & 0.00 \\
  & Nominal     & $0.276\pm0.001$ & 0.00 \\
  & $1.3\times$ & $0.276\pm0.001$ & 0.00 \\
\midrule
\multirow{3}{*}{Classical STC}
  & $0.7\times$ & $0.040\pm0.000$ & 0.00 \\
  & Nominal     & $0.040\pm0.000$ & 0.00 \\
  & $1.3\times$ & $0.040\pm0.000$ & 0.00 \\
\bottomrule
\end{tabular}
\end{table}

\begin{table}[h]
\caption{Quadrotor3D disturbance robustness ($N_{\mathrm{eval}}=100$,
  best-model checkpoint, $w_c=48$). Disturbance applied as additive
  thrust deviation ($\delta F$, N).}
\label{tab:q3d_disturbance}
\centering
\small
\begin{tabular}{lccc}
\toprule
Condition & RL-STC MSI (s) & Cl.-STC MSI (s) & RTA (\%) \\
\midrule
No disturbance & $0.302\pm0.001$ & $0.040\pm0.000$ & 0.00 \\
\midrule
\multicolumn{4}{l}{\textit{Constant}} \\
\quad $0.5\,\mathrm{N}$ & $0.302\pm0.001$ & $0.040\pm0.000$ & 0.00 \\
\quad $1.0\,\mathrm{N}$ & $0.302\pm0.001$ & $0.040\pm0.000$ & 0.00 \\
\midrule
\multicolumn{4}{l}{\textit{Periodic}} \\
\quad $0.8\,\mathrm{N}$, $1\,\mathrm{Hz}$ & $0.302\pm0.001$ & $0.040\pm0.000$ & 0.00 \\
\quad $1.5\,\mathrm{N}$, $2\,\mathrm{Hz}$ & $0.301\pm0.001$ & $0.040\pm0.000$ & 0.00 \\
\midrule
\multicolumn{4}{l}{\textit{Impulse ($p=0.05$, random sign)}} \\
\quad $1.0\,\mathrm{N}$ & $0.302\pm0.001$ & $0.040\pm0.000$ & 0.00 \\
\quad $2.0\,\mathrm{N}$ & $0.302\pm0.001$ & $0.040\pm0.000$ & 0.00 \\
\bottomrule
\end{tabular}
\end{table}

\section{Robustness Tables}
\label{app:robustness}

Tables~\ref{tab:mismatch} and~\ref{tab:disturbance} report the full
model-mismatch and disturbance-robustness results. Detailed discussion of
these results is in Appendix~\ref{app:robustness_detail}.

\begin{table}[h]
\caption{Model-mismatch robustness: mass scaled to $0.7\times$ and
  $1.3\times$ nominal ($N_{\mathrm{eval}}=100$, seed~0 for RL-STC and
  RL-STC+DR). DR = domain randomization (mass from
  $\mathrm{Uniform}[0.6,1.4]\times$ nominal at each episode reset;
  $K$, $P$, and RTA thresholds fixed at nominal). Best-model checkpoint
  reported for all variants. \textit{Italics} = system failure
  (mean episode length ${<}5\,\mathrm{s}$); ``--'' = RTA not applicable.}
\label{tab:mismatch}
\centering
\small
\resizebox{\textwidth}{!}{%
\begin{tabular}{llcccccc}
\toprule
& & \multicolumn{2}{c}{Pendulum}
  & \multicolumn{2}{c}{CartPole}
  & \multicolumn{2}{c}{Quadrotor} \\
\cmidrule(lr){3-4}\cmidrule(lr){5-6}\cmidrule(lr){7-8}
Method & Scale & MSI (s) & RTA (\%) & MSI (s) & RTA (\%) & MSI (s) & RTA (\%) \\
\midrule
RL-STC    & $0.7\times$ & $0.246\pm0.078$ & $30.48\pm24.80$ & $0.201\pm0.025$ & $13.91\pm6.09$  & $0.284\pm0.008$ & $0.15\pm0.31$ \\
RL-STC    & Nominal     & $0.396\pm0.001$ & $0.00$          & $0.317\pm0.001$ & $0.00$          & $0.290\pm0.005$ & $0.17\pm0.33$ \\
RL-STC    & $1.3\times$ & $0.349\pm0.007$ & $4.99\pm1.68$   & $0.309\pm0.006$ & $0.81\pm1.47$   & $0.291\pm0.005$ & $0.14\pm0.28$ \\
\midrule
RL-STC+DR & $0.7\times$ & $0.383\pm0.004$ & $0.00$          & $0.296\pm0.010$ & $0.16\pm0.70$   & $0.286\pm0.005$ & $1.23\pm0.87$ \\
RL-STC+DR & Nominal     & $0.389\pm0.003$ & $0.04\pm0.25$   & $0.319\pm0.002$ & $0.02\pm0.12$   & $0.290\pm0.005$ & $1.03\pm0.84$ \\
RL-STC+DR & $1.3\times$ & $0.344\pm0.041$ & $6.42\pm13.03$  & $0.315\pm0.005$ & $0.29\pm0.79$   & $0.292\pm0.005$ & $0.88\pm0.81$ \\
\midrule
Cl.-STC   & $0.7\times$ & $\mathit{0.092\pm0.011}$ & -- & $\mathit{0.176\pm0.012}$ & -- & $0.080\pm0.000$ & -- \\
Cl.-STC   & Nominal     & $0.202\pm0.000$ & --              & $0.212\pm0.001$ & --              & $0.080\pm0.001$ & -- \\
Cl.-STC   & $1.3\times$ & $0.328\pm0.003$ & --              & $0.266\pm0.001$ & --              & $0.080\pm0.000$ & -- \\
\bottomrule
\end{tabular}}
\end{table}

\begin{table}[h]
\caption{Disturbance robustness: RL-STC MSI, Classical STC MSI, and RTA
  activation under constant, periodic, and impulse disturbances
  ($N_{\mathrm{eval}}=100$, best-model checkpoint, seed~0 for RL-STC,
  $w_c=8/16/16$). $K$, $P$, and RTA thresholds fixed at nominal.
  Pendulum: torque
  ($u_{\max}=2\,\mathrm{Nm}$); CartPole: force ($F_{\max}=20\,\mathrm{N}$);
  Quadrotor: thrust deviation ($\delta F_{\max}=5\,\mathrm{N}$). Impulse
  per RL step ($p=0.05$, random sign).}
\label{tab:disturbance}
\centering
\small
\resizebox{\textwidth}{!}{%
\begin{tabular}{lcccccccccc}
\toprule
& \multicolumn{3}{c}{Pendulum}
& \multicolumn{3}{c}{CartPole}
& \multicolumn{3}{c}{Quadrotor} \\
\cmidrule(lr){2-4}\cmidrule(lr){5-7}\cmidrule(lr){8-10}
Condition
  & RL-STC & Cl.-STC & RTA (\%)
  & RL-STC & Cl.-STC & RTA (\%)
  & RL-STC & Cl.-STC & RTA (\%) \\
\cmidrule(lr){2-4}\cmidrule(lr){5-7}\cmidrule(lr){8-10}
& \multicolumn{2}{c}{MSI (s)} &
& \multicolumn{2}{c}{MSI (s)} &
& \multicolumn{2}{c}{MSI (s)} & \\
\midrule
None
  & $0.389\pm0.003$ & $0.202$ & $0.04\pm0.25$
  & $0.319\pm0.002$ & $0.212$ & $0.02\pm0.12$
  & $0.290\pm0.005$ & $0.080$ & $1.03\pm0.84$ \\
\midrule
Const.\ 0.2\,Nm / 1.0\,N / 0.5\,N
  & $0.249\pm0.036$ & $0.200$ & $26.45\pm12.90$
  & $0.284\pm0.025$ & $0.123$ & $0.75\pm1.89$
  & $0.291\pm0.005$ & $0.082$ & $0.81\pm0.66$ \\
Const.\ 0.5\,Nm / 2.0\,N / 1.0\,N
  & $0.079\pm0.011$ & $0.199$ & $88.54\pm7.18$
  & $0.199\pm0.065$ & $0.121$ & $10.11\pm17.22$
  & $0.289\pm0.005$ & $0.082$ & $1.12\pm0.74$ \\
\midrule
Per.\ 0.3\,Nm 1\,Hz / 1.5\,N / 0.8\,N
  & $0.333\pm0.013$ & $0.200$ & $5.11\pm3.53$
  & $0.270\pm0.028$ & $0.188$ & $2.38\pm5.12$
  & $0.290\pm0.005$ & $0.083$ & $1.13\pm0.77$ \\
Per.\ 0.5\,Nm 2\,Hz / 2.5\,N / 1.5\,N
  & $0.108\pm0.024$ & $0.200$ & $77.21\pm10.70$
  & $0.250\pm0.037$ & $0.174$ & $5.42\pm9.68$
  & $0.289\pm0.005$ & $0.086$ & $0.98\pm0.77$ \\
\midrule
Imp.\ 0.5\,Nm / 2.0\,N / 1.0\,N
  & $0.252\pm0.063$ & $0.200$ & $38.38\pm18.59$
  & $0.305\pm0.017$ & $0.200$ & $0.90\pm3.22$
  & $0.289\pm0.005$ & $0.087$ & $1.10\pm0.83$ \\
Imp.\ 1.0\,Nm / 4.0\,N / 2.0\,N
  & $0.242\pm0.075$ & $0.199$ & $41.95\pm21.31$
  & $0.311\pm0.015$ & $0.204$ & $0.00\pm0.00$
  & $0.290\pm0.005$ & $0.087$ & $1.01\pm0.86$ \\
\bottomrule
\end{tabular}}
\end{table}

\section{Training Curves}
\label{app:training}

Figure~\ref{fig:training_curves} shows DQN training curves (MSI, RTA
activation rate, and per-step reward; see Section~\ref{sec:training}
for the per-step checkpoint rationale) for five $w_c$ values across
the three lower-dimensional environments. The MSI peak followed by
reward collapse at high $w_c$ directly motivates the best-model
checkpoint strategy discussed in Section~\ref{sec:cross_env}. SAC
Quadrotor3D training curves are in
Figure~\ref{fig:training_curves_q3d}; SAC exhibits substantially
smoother MSI and reward trajectories than DQN with no high-$w_c$
reward collapse, consistent with the algorithmic stability advantage
that motivates the continuous-action variant for the higher-dimensional
case study.

\begin{figure}[h]
  \centering
  \includegraphics[width=\textwidth]{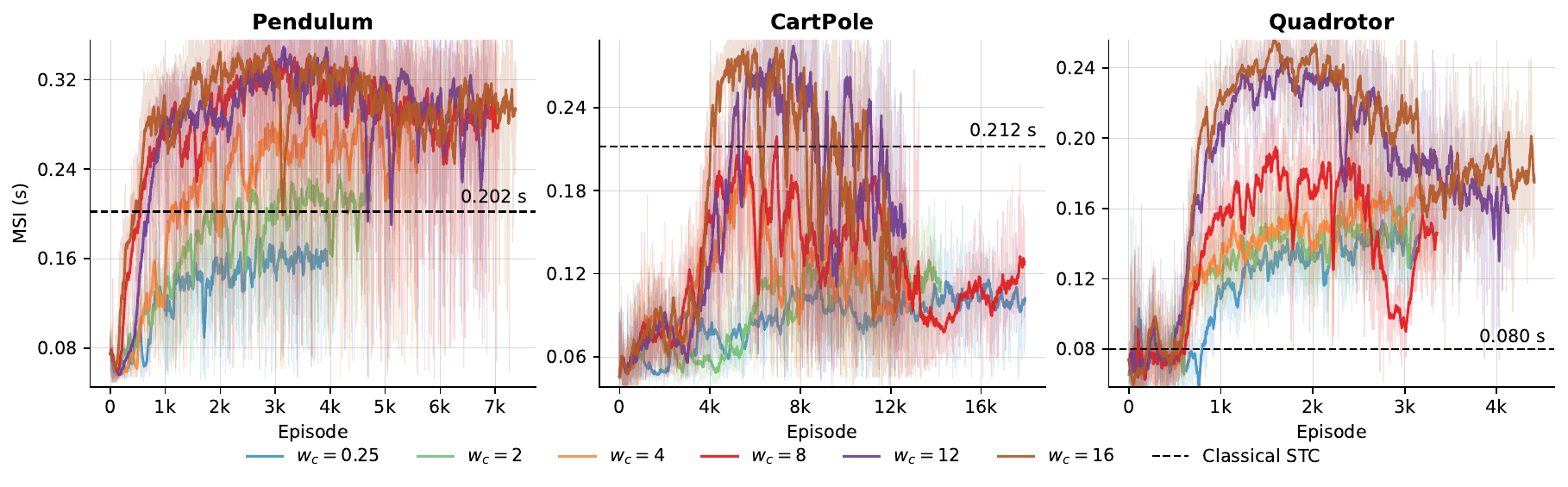}\\[4pt]
  \includegraphics[width=\textwidth]{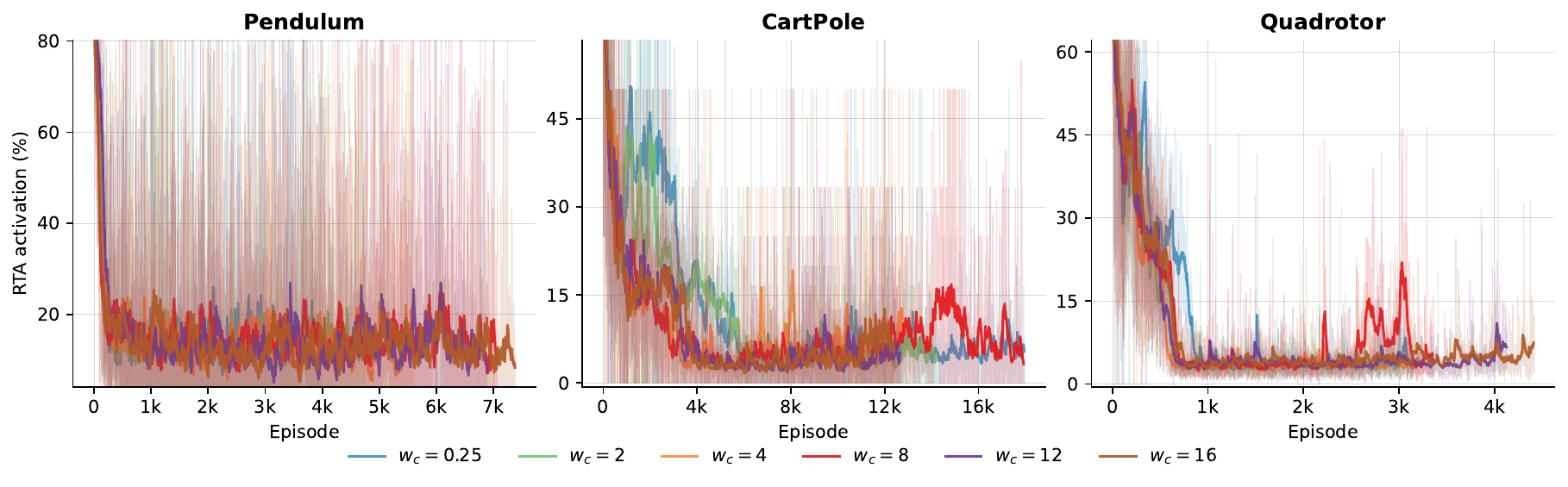}\\[4pt]
  \includegraphics[width=\textwidth]{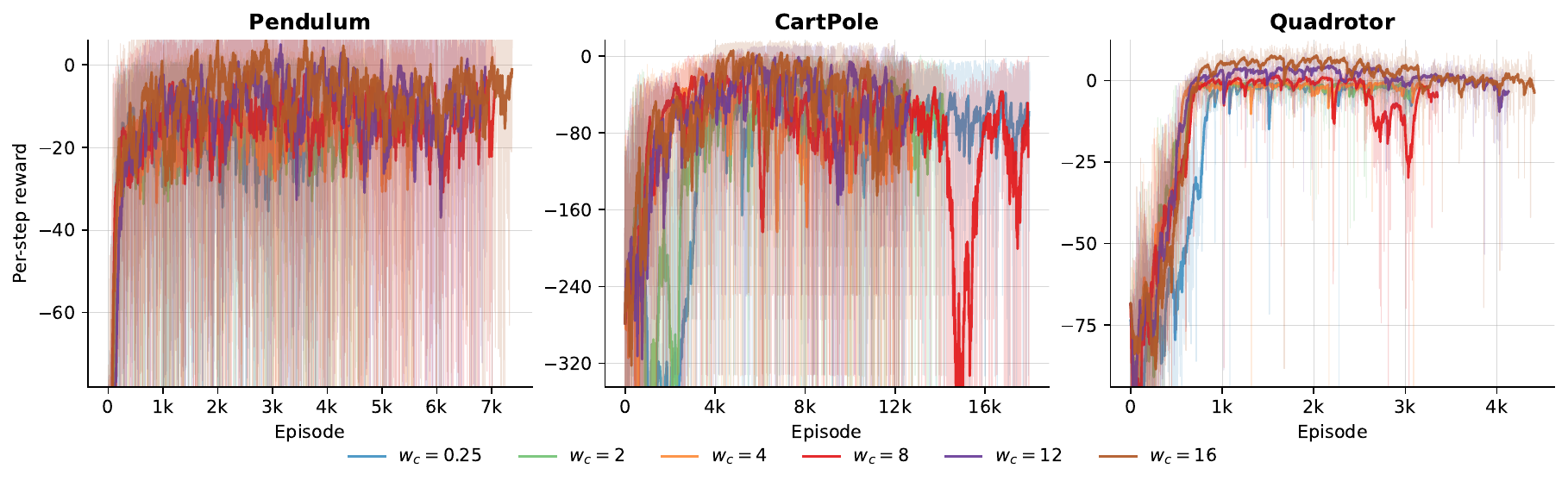}
  \caption{DQN training curves for five $w_c$ values across the three
    lower-dimensional environments (seed-0 runs from the multi-seed
    sweep). \textbf{(top)} Mean inter-sample interval; dashed line
    shows the Classical STC baseline. \textbf{(middle)} RTA activation
    rate (\%). \textbf{(bottom)} Per-step reward. Faint traces are raw
    per-episode data; bold lines are EMA-smoothed ($\alpha=0.06$,
    ${\approx}17$-episode window). Higher $w_c$ accelerates exploration of
    the sparse-sampling regime; the MSI peak before reward collapse
    motivates the best-model checkpoint strategy.}
  \label{fig:training_curves}
\end{figure}

\begin{figure}[h]
  \centering
  \includegraphics[width=\textwidth]{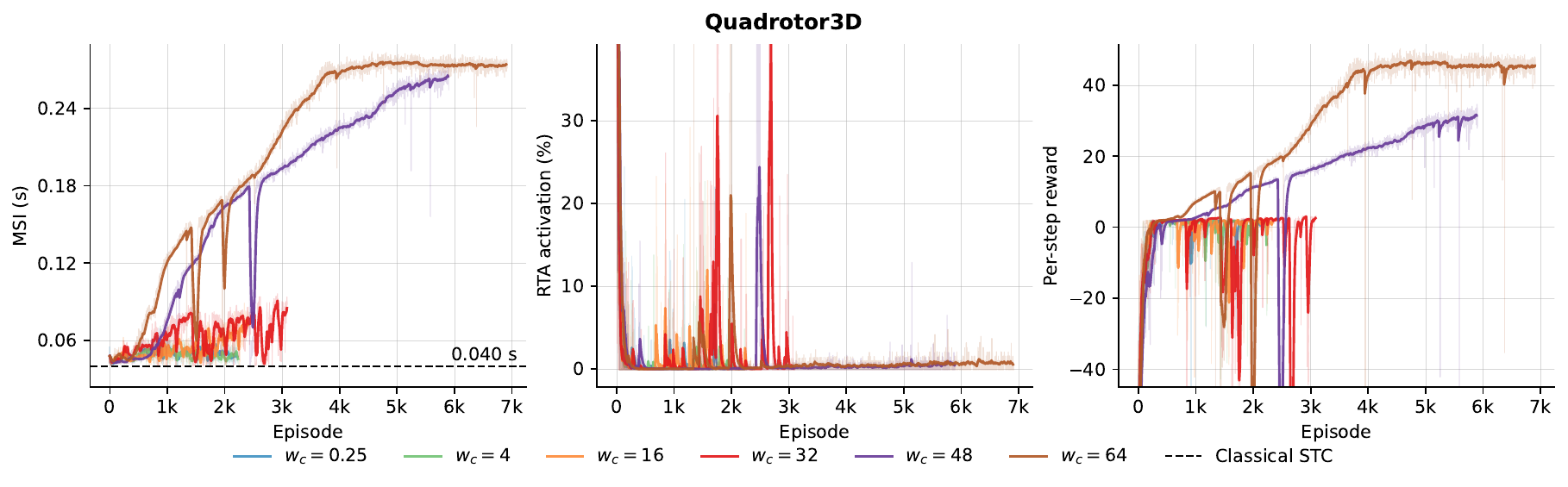}
  \caption{SAC training curves for six representative $w_c$ values on
    Quadrotor3D ($2\,\mathrm{M}$ steps, seed 0). \textbf{(left)} Mean
    inter-sample interval; dashed line shows Classical STC pinned at
    $\taumin$. \textbf{(center)} RTA activation rate (\%).
    \textbf{(right)} Per-step reward. Faint traces are raw per-episode
    data; bold lines are EMA-smoothed ($\alpha=0.06$, ${\approx}17$-episode
    window). The two-phase dynamic from Section~\ref{sec:casestudy} is
    visible: $w_c\le16$ keeps MSI near $\taumin$ during training, while
    $w_c\ge40$ saturates near $\taumax$ at $0\%$ RTA.}
  \label{fig:training_curves_q3d}
\end{figure}

\section{Detailed Per-Environment Analysis}
\label{app:per_env}

\subsection{DQN: Per-Environment Discussion}

\paragraph{Pendulum.}
The final-model MSI generally increases with $w_c$, peaking at $w_c=10$
($0.375\pm0.030\,\mathrm{s}$, $0.08\%$ RTA), with the across-seed std
reflecting training sensitivity at individual weights. The best-model
checkpoint saturates near $\taumax=0.40\,\mathrm{s}$ from $w_c=6$ onwards
with $w_c\in\{10,12,16\}$ all achieving MSI $\geq0.39\,\mathrm{s}$ and
$w_c=16$ tying $w_c=10$ at the peak ($0.397\pm0.001\,\mathrm{s}$). At the
canonical $w_c=8$ used in ablations, the best checkpoint reaches
$0.385\pm0.014\,\mathrm{s}$ ($\geq96\%$ of $\taumax$) with
$0.37\pm0.24\%$ RTA. The plateau is stable through $w_c=16$ with low RTA
across seeds ($\leq0.5\%$ for $w_c\geq6$), demonstrating that pendulum
dynamics fully saturate the sampling capacity of the $\tau$ grid at
moderate $w_c$.

\paragraph{CartPole.}
Two structural properties limit how high the final-model MSI can grow.
First, CartPole's CARE solution yields a large $\Vscale=56.6$ (versus
$8.99$ for the Pendulum), because the $\theta$ diagonal entry of $P$ is
approximately $196$, a consequence of the high LQR cost of the indirect
$x$-control path through $\theta$. Typical episode Lyapunov values are
therefore a small fraction of $\Vscale$, saturating the graded stability
term $1-\V(\x_{k+1})/\Vscale$ near $+1$ throughout training, so the
communication reward becomes the dominant differentiating signal even at
low $w_c$. Second, the tight termination angle
($\theta_{\mathrm{term}}=12^\circ$ vs.\ $60^\circ$ for the Pendulum) means
recovery from large inter-sample intervals carries higher penalty. Despite
these constraints, the best-model checkpoint reveals a clean monotonic
stability--communication tradeoff: MSI increases steadily from
$0.146\pm0.068\,\mathrm{s}$ at $w_c=0.25$ to
$0.308\pm0.014\,\mathrm{s}$ at $w_c=16$ with $0.04\%$ RTA activity,
$96\%$ of $\taumax$.

\paragraph{Quadrotor.}
The final-model trend is intermediate between Pendulum and CartPole:
$w_c=14$ yields the highest final-model MSI
($0.203\pm0.008\,\mathrm{s}$, $0.40\%$ RTA), while $w_c=16$ shows
$12.23\%$ RTA and $w_c=12$ shows $7.02\%$ RTA as the policy overshoots
the admissible inter-sample interval at the highest weights. The
best-model checkpoint improves monotonically through the sweep, with
$w_c=16$ achieving $0.281\pm0.018\,\mathrm{s}$ at $1.96\%$ RTA, $88\%$
of $\taumax$, compared to $96\%$ for Pendulum and CartPole, reflecting
the harder coupled hover dynamics. The elevated
$P_4(\dot\theta)$ values ($3.6$--$12.6$) across all weights are a structural
property of the underactuated dynamics: to correct horizontal position error
the policy must tilt the vehicle and return it to level, necessarily
generating non-zero $\dot\theta$ throughout the episode. This is not a sign
of instability but an inherent consequence of the indirect coupling between
$M$ and $x$.

\subsection{SAC: Per-Environment Discussion}

SAC reaches near-peak MSI for the Pendulum by $w_c=6$ and plateaus around
$0.33$--$0.37\,\mathrm{s}$, matching DQN's best checkpoint with zero RTA
everywhere. The DQN final model collapses at high $w_c$; SAC's final model
remains stable, suggesting the continuous policy landscape is easier to
optimize without collapsing into high-RTA regimes.

For CartPole, SAC eliminates the high-RTA collapses that plague DQN ($38.6\%$
at $w_c=0.5$, $24.2\%$ at $w_c=10$). All final models are stable and MSI
increases monotonically with $w_c$, reaching $0.227\,\mathrm{s}$ at
$w_c=16$. The best checkpoint ($0.258\,\mathrm{s}$) falls short of DQN's
multi-seed best ($0.308\pm0.014\,\mathrm{s}$), indicating that DQN
transiently discovers sparser policies but cannot sustain them; SAC
converges more reliably but has not yet matched DQN's peak at
$1\,\mathrm{M}$ steps.

For the Quadrotor, SAC matches DQN closely at moderate weights and surpasses
it at high $w_c$: the SAC best checkpoint at $w_c=16$ reaches
$0.312\,\mathrm{s}$ versus DQN's $0.281\pm0.018\,\mathrm{s}$, while
maintaining $0\%$ RTA. The final model also increases monotonically
($0.310\,\mathrm{s}$ at $w_c=16$), in contrast to DQN's $12.23\%$ RTA at
that weight.
SAC's stability advantage at high $w_c$ further validates the benefit of
continuous-action methods when DQN's discrete policy degrades.

\subsection{Preference-Conditioned DQN: Per-Environment Discussion}

For Pendulum, the preference-conditioned policy delivers a clean monotonic
stability--communication tradeoff from a single $2\,\mathrm{M}$-step
training run. The final model reaches $0.371\,\mathrm{s}$ at $w_c=16$ with
$1.6\%$ RTA and exhibits no high-$w_c$ collapses, in contrast to the
standard DQN final model. The best-model checkpoint achieves $0.393\,\mathrm{s}$,
matching the standard DQN best checkpoint ($0.397\,\mathrm{s}$) within $1\%$
while using only $\tfrac{2}{11}$ of the total training compute.

For CartPole, the preference-conditioned policy keeps RTA below $0.77\%$ in
the final model across all $w_c$ values. The best-model checkpoint plateaus
near $0.313$--$0.316\,\mathrm{s}$ above $w_c=10$, matching or slightly
exceeding the multi-seed standard DQN best ($0.308\pm0.014\,\mathrm{s}$).
A single $2\,\mathrm{M}$-step run thus produces a clean monotonic
tradeoff that matches the per-$w_c$ DQN ceiling at a fraction of the
training compute.

For the Quadrotor, the best checkpoint increases monotonically from
$0.102\,\mathrm{s}$ to $0.266\,\mathrm{s}$, within $5\%$ of multi-seed
standard DQN ($0.281\pm0.018\,\mathrm{s}$). The final model carries elevated RTA above $w_c=8$
($11$--$16\%$), indicating that the harder coupled hover dynamics remain
challenging for a single shared policy at extreme communication weights.
The best checkpoint maintains $\leq2\%$ RTA throughout the sweep, confirming
that checkpoint selection is particularly important for the quadrotor at high
$w_c$.

\section{Detailed Robustness Analysis}
\label{app:robustness_detail}

\subsection{Model Mismatch: Detailed Discussion}

\paragraph{Pendulum degrades gracefully.}
The Pendulum's safety margin is only $1.9^\circ$. At $0.7\times$ mass the
pendulum swings faster for a given torque input, so the nominal one-step-ahead
prediction systematically underestimates true angular velocity, causing RTA
activation on $30.48\pm24.80\%$ of steps. The large standard deviation
(versus $4.99\pm1.68\%$ at $1.3\times$) indicates bimodal behavior: episodes
that begin near the upright position are handled safely, while those with
larger initial angles trigger persistent RTA engagement. Crucially, the system
remains stable in all 100 episodes: the shield forces $\tauk\leftarrow\taumin$
on affected steps, cutting achieved MSI to $0.246\,\mathrm{s}$ but preventing
divergence. At $1.3\times$ mass the slower dynamics increase the margin
available to the nominal prediction, reducing RTA to $4.99\%$ and recovering
most of the MSI ($0.349\,\mathrm{s}$ vs.\ $0.396\,\mathrm{s}$ nominal).

\paragraph{Quadrotor is robust across the tested mass range.}
The Quadrotor is largely unaffected across both scaling factors (MSI within
$2.5\%$ of nominal, RTA at or below $0.2\%$), consistent with its $9.0^\circ$
safety margin absorbing the mismatch without triggering additional
interventions.

\paragraph{Baseline 2 boundary case.}
At $1.3\times$ mass, the fixed-LQR controller at $\tau_{\mathrm{match}}$
succeeds for both Pendulum (mean episode length $50.3\,\mathrm{s}$) and
CartPole ($50.2\,\mathrm{s}$), whereas both fail at nominal mass
(Table~\ref{tab:baselines_all}). The heavier plant has lower acceleration per
unit input, shifting the closed-loop eigenvalues of the ZOH discretization at
that fixed interval back into the stable region. This confirms that the nominal
failure of B2 is a genuine dynamical instability rather than a conservative
evaluation artifact, and that the RL policy's adaptive inter-sample interval
is what allows it to succeed where fixed-rate LQR cannot.

\paragraph{Domain randomization.}
Training with mass sampled from $\mathrm{Uniform}[0.6,1.4]\times$ at each
episode reset substantially reduces Pendulum sensitivity: RTA activation at
$0.7\times$ mass drops from $30.48\pm24.80\%$ (nominal training) to $0.00\%$
(DR), confirming that the agent has internalized faster dynamics through
training diversity. For CartPole, DR reduces the $0.7\times$ RTA activation
from $13.91\%$ to $0.16\%$ and recovers MSI to $0.296\,\mathrm{s}$,
essentially matching the non-DR RL-STC ($0.317\,\mathrm{s}$) while
eliminating mass sensitivity. For Quadrotor, already robust under nominal
training, DR provides negligible additional benefit. The DR models impose
minimal performance cost at the best-model checkpoint: nominal MSI values of
$0.389\,\mathrm{s}$, $0.319\,\mathrm{s}$, and $0.290\,\mathrm{s}$ match the
non-DR references closely. The benefit of DR scales with the magnitude of
perturbation: larger mass deviations increase the one-step-ahead prediction
error, causing more RTA activations and greater MSI degradation for the nominal
policy; the DR policy, trained across the full deviation range, retains
appropriate timing conservatism for those regimes and is correspondingly less
affected.

The natural resolution to mass sensitivity would be to include plant mass in
the observation, allowing the policy to condition its timing on the current
dynamics. We deliberately exclude this because our goal is to evaluate a single
fixed policy across mass variation, matching the deployment scenario where mass
is unknown. Adding mass to the observation would require online identification
of a vector quantity (for MIMO systems such as the Quadrotor, mass affects
multiple actuator channels simultaneously) and constitutes a separate problem.
The DR results therefore represent an inherent robustness--performance tradeoff:
meaningful sensitivity reduction for environments with genuine mass fragility
(Pendulum and CartPole) at negligible cost to nominal performance.

\subsection{Disturbance Robustness: Detailed Discussion}

\paragraph{RL-STC dominates Classical STC even without disturbances.}
The no-disturbance row of Table~\ref{tab:disturbance} establishes the
communication-efficiency gap before any disturbance is applied:
RL-STC achieves $0.389\,\mathrm{s}$ vs.\ $0.202\,\mathrm{s}$ for Pendulum
(+$93\%$), $0.319\,\mathrm{s}$ vs.\ $0.212\,\mathrm{s}$ for CartPole
(+$50\%$), and $0.290\,\mathrm{s}$ vs.\ $0.080\,\mathrm{s}$ for Quadrotor
(+$263\%$). Classical STC's conservative Lyapunov trigger consistently
rejects $\tau$ values the RL policy has learned to exploit safely, fixing
MSI well below the $\taumax$ ceiling in all three environments.

\paragraph{RTA acts as a graduated shield under growing disturbances.}
For RL-STC, RTA activation grows monotonically with disturbance amplitude
while every episode completes safely: the safety filter absorbs what the
policy cannot. For the Pendulum, a constant $0.5\,\mathrm{Nm}$ torque
($25\%$ of $u_{\max}$) drives RTA activation to $88.54\%$ and collapses
MSI to $\taumin=0.05\,\mathrm{s}$; at the milder $0.2\,\mathrm{Nm}$ level,
MSI falls $36\%$ (from $0.389$ to $0.249\,\mathrm{s}$) with $26\%$
activation. CartPole is substantially more robust: even the strongest
constant force tested ($2.0\,\mathrm{N}$, $10\%$ of $F_{\max}$) raises
RTA to only $10.11\pm17.22\%$ and reduces MSI by $38\%$
($0.319\to0.199\,\mathrm{s}$). The Quadrotor shows near-complete
insensitivity: MSI varies by at most $0.002\,\mathrm{s}$ and RTA stays
within $0.81$--$1.13\%$ across all conditions, because thrust disturbances
perturb the altitude channel while the angular RTA trigger monitors tilt,
leaving the timing policy structurally decoupled from the disturbance.
Classical STC's $\tau$-selection uses only the undisturbed nominal model;
disturbances enter only indirectly through the next state $\x_{k+1}$. For the
Pendulum this makes Cl.-STC MSI almost completely flat (${\approx}0.200\,\mathrm{s}$)
across all conditions, because disturbances are invisible to the trigger.
Despite maintaining higher MSI than RL-STC under strong Pendulum disturbances,
state quality is worse: under $0.5\,\mathrm{Nm}$ constant torque, Cl.-STC
achieves $P_3=0.594$ vs.\ RL-STC's $P_3=0.490$ (RL-STC's $88\%$ RTA
activation imposes frequent LQR corrections that tighten the trajectory).
Under the $0.5\,\mathrm{Nm}$ $2\,\mathrm{Hz}$ periodic case, Cl.-STC's
angular-velocity norm explodes to $P_4=3.26$ vs.\ RL-STC's $P_4=1.36$; the
Lyapunov trigger selects $\tau=0.20\,\mathrm{s}$ throughout, oblivious to the
resonant forcing.

For CartPole under constant force, Cl.-STC MSI drops significantly
($0.212\to0.123\,\mathrm{s}$ at $1.0\,\mathrm{N}$) as the drifting cart
state makes the nominal Lyapunov condition more demanding; yet despite
communicating more frequently, the cart-position norm $P_1$ worsens
dramatically ($0.065\to2.951\,\mathrm{m\cdot s}$). RL-STC simultaneously
maintains higher MSI ($0.284\,\mathrm{s}$) and better position tracking
($P_1=3.345\,\mathrm{m\cdot s}$ with near-zero RTA activation), showing the
learned policy handles constant bias more gracefully than the greedy Lyapunov
trigger. For the Quadrotor, Cl.-STC is already at $0.080\,\mathrm{s}$ in
the undisturbed case; disturbances move it only slightly ($0.080\to0.087\,\mathrm{s}$),
mirroring the structural insensitivity of RL-STC but at a communication cost
$3.6\times$ higher.

\paragraph{Impulse disturbances reveal bimodal RL-STC behavior.}
Random impulse kicks ($p=0.05$ per RL step) produce strikingly different
responses. The Pendulum is highly sensitive: a $0.5\,\mathrm{Nm}$ kick
collapses RL-STC MSI to $0.252\pm0.063\,\mathrm{s}$ and raises RTA
activation to $38.38\pm18.59\%$. The large standard deviation reflects
bimodal episode behavior: some episodes receive well-timed kicks that sustain
RTA engagement, while others see none. CartPole and Quadrotor are resilient:
the largest impulses produce $\leq1\%$ RTA activation and $\leq2\%$ MSI
change, for the same structural reasons described above (angular RTA trigger
is decoupled from the thrust/force disturbance channel for the Quadrotor).
Classical STC is insensitive to impulses in all environments, since a rare
kick shifts the state only marginally relative to the Lyapunov basin.

\begin{figure}[h]
  \centering
  \subfloat[Pendulum]{%
    \includegraphics[width=\textwidth]{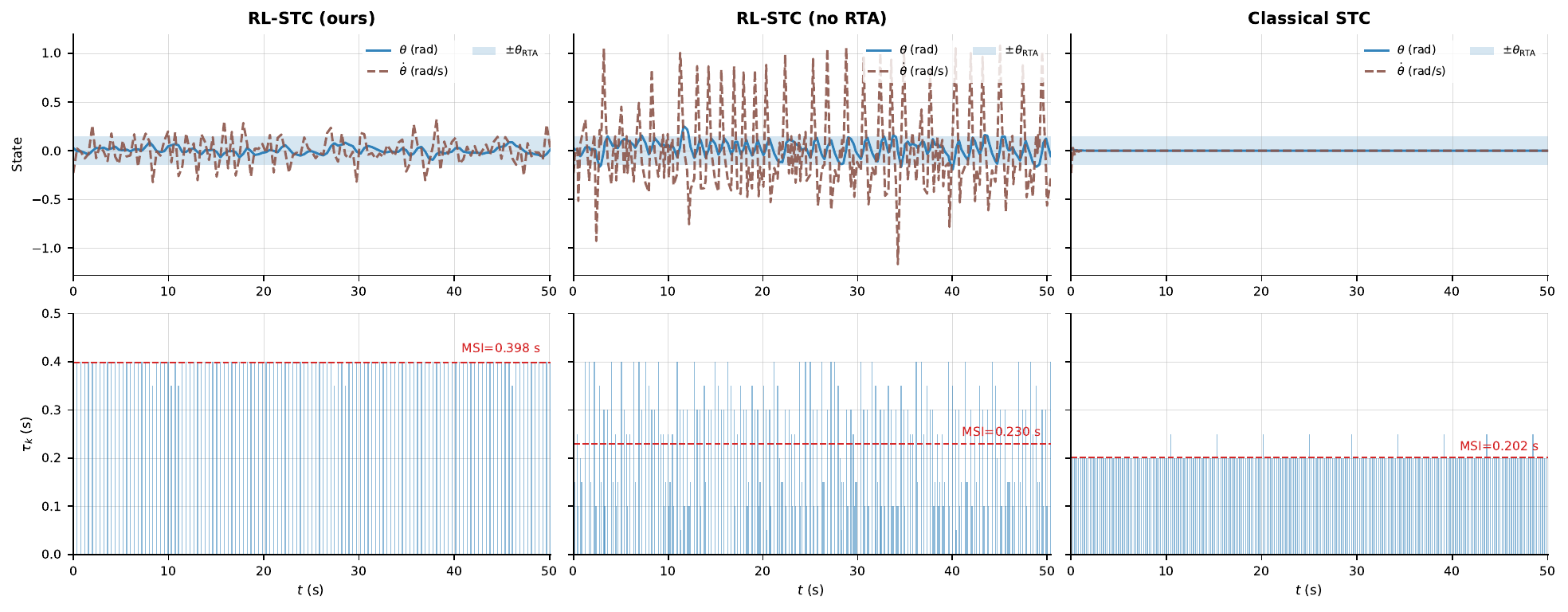}}\\[4pt]
  \subfloat[CartPole]{%
    \includegraphics[width=\textwidth]{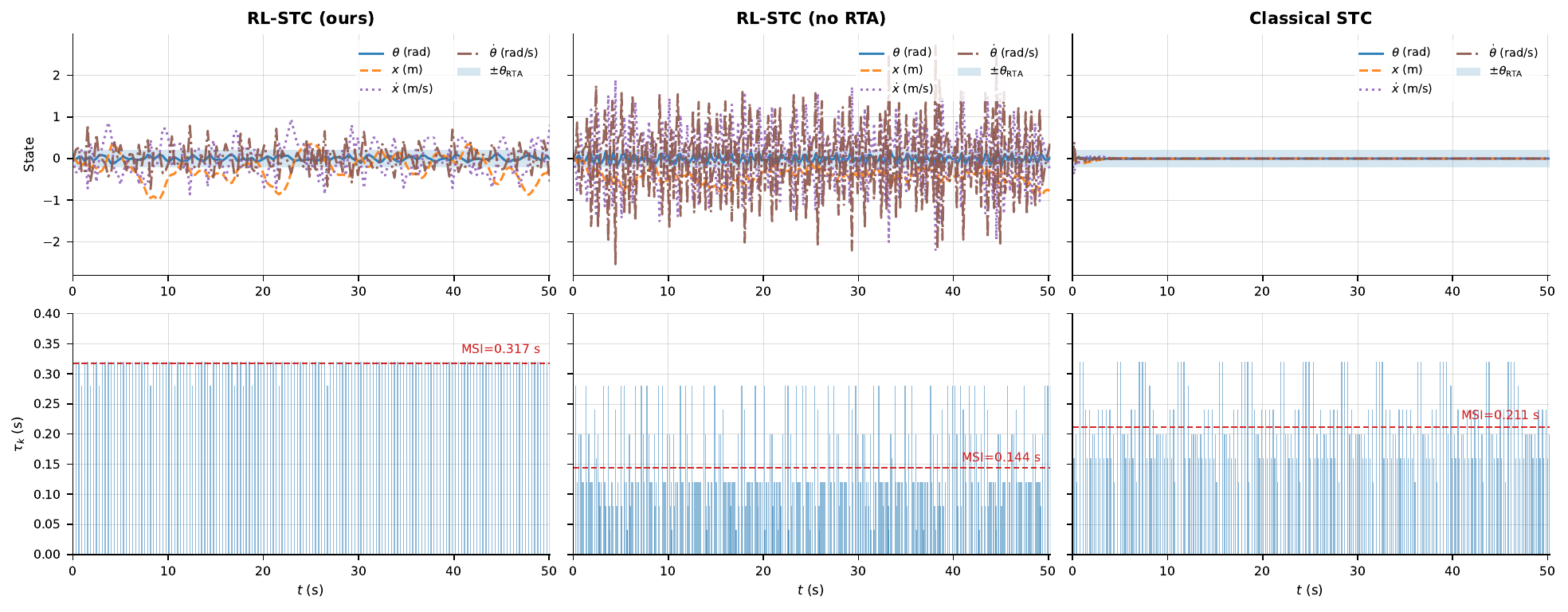}}\\[4pt]
  \subfloat[Quadrotor]{%
    \includegraphics[width=\textwidth]{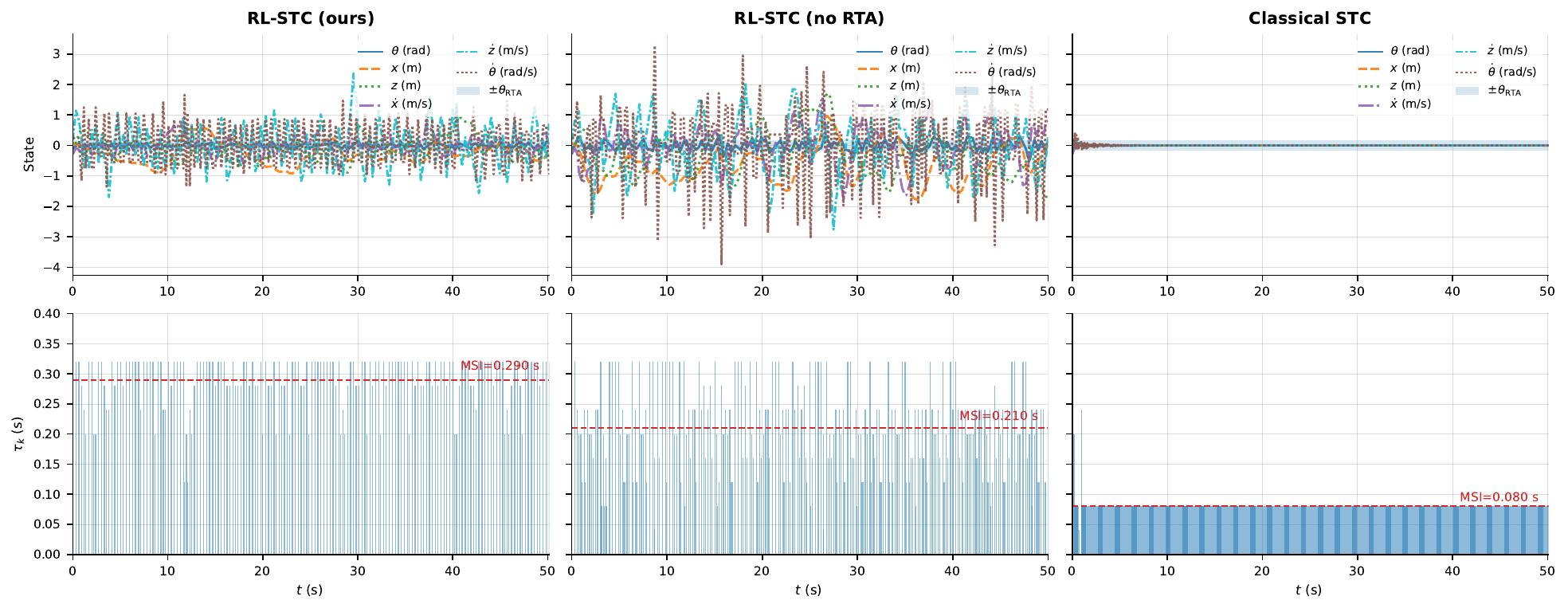}}
  \caption{Representative episode trajectories on the three
    lower-dimensional environments (seed 0, up to 1200 steps). Each
    sub-figure: \textbf{left} RL-STC (ours); \textbf{center} RL-STC
    without RTA (Ablation A); \textbf{right} Classical Lyapunov-STC.
    Top row: state trajectories with $\pm\theta_{\RTA}$ band. Bottom
    row: inter-sample intervals $\tau_k$; red stems = RTA-active steps;
    dashed line = MSI. The Quadrotor sub-figure highlights the
    dramatic MSI gap: the RL policy sustains long inter-sample
    intervals near hover while the classical trigger remains near
    $\taumin$ throughout.}
  \label{fig:trajectories}
\end{figure}

\clearpage

\end{document}